\titlespacing{\section}{0pt}{1.5ex plus 1ex minus .2ex}{0.8ex plus .2ex}
\titlespacing{\subsection}{0pt}{1.2ex plus 0.8ex minus .2ex}{0.6ex plus .2ex}
\titlespacing{\subsubsection}{0pt}{1.2ex plus 0.8ex minus .2ex}{0.6ex plus .2ex}
\titlespacing{\paragraph}{0pt}{1.2ex plus 0.8ex minus .2ex}{1.6ex plus .2ex}
\title{Probabilistic Stability Guarantees for Feature Attributions}
\author{
Helen Jin\thanks{\raisebox{-0.2ex}{\includegraphics[height=1.9ex]{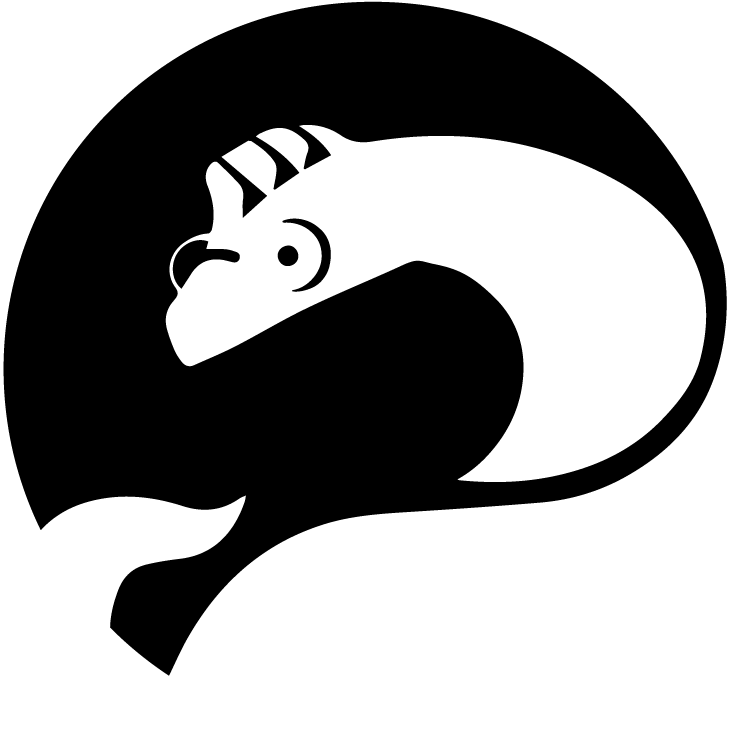}} Equal contribution. Code is available at: \url{https://github.com/helenjin/soft_stability/}},\enskip
Anton Xue\footnotemark[1],\enskip
Weiqiu You,\enskip
Surbhi Goel,\enskip
and Eric Wong \\
Department of Computer and Information Science, University of Pennsylvania
}
\begin{document}

\maketitle

\vspace{-\parskip}

\begin{abstract}


Stability guarantees have emerged as a principled way to evaluate feature attributions, but existing certification methods rely on heavily smoothed classifiers and often produce conservative guarantees.
To address these limitations, we introduce soft stability and propose a simple, model-agnostic, sample-efficient stability certification algorithm (SCA) that yields non-trivial and interpretable guarantees for any attribution method.
Moreover, we show that mild smoothing achieves a more favorable trade-off between accuracy and stability, avoiding the aggressive compromises made in prior certification methods.
To explain this behavior, we use Boolean function analysis to derive a novel characterization of stability under smoothing.
We evaluate SCA on vision and language tasks and demonstrate the effectiveness of soft stability in measuring the robustness of explanation methods.

\end{abstract}

\section{Introduction}

Powerful machine learning models are increasingly deployed in practice.
However, their opacity presents a major challenge when adopted in high-stakes domains, where transparent explanations are needed in decision-making.
In healthcare, for instance, doctors require insights into the diagnostic steps to trust a model and effectively integrate it into clinical practice~\citep{klauschen2024toward}.
In the legal domain, attorneys must likewise ensure that model-assisted decisions meet stringent judicial standards~\citep{richmond2024explainable}.

There is much interest in explaining the behavior of complex models.
One popular class of explanation methods is \textit{feature attributions}~\citep{ribeiro2016should,lundberg2017unified}, which aim to select the input features most important to a model's prediction.
However, many explanations are \textit{unstable}, such as in~\cref{fig:unstable_turtle}, where additionally including a few features may change the output.
Such instability suggests that the explanation may be unreliable~\citep{wagner2019interpretable,nauta2023anecdotal,yeh2019fidelity}.
This phenomenon has motivated efforts to quantify how model predictions vary with explanations, including the effects of adding or removing features~\citep{wu2020towards,samek2016evaluating} and the influence of the selection's shape~\citep{hase2021out,rong2022consistent}.
However, most existing works focus on empirical measures~\citep{agarwal2022openxai}, with limited formal guarantees of robustness.

\begin{figure}[ht]
\centering

\begin{minipage}{0.25\linewidth}
    \centering
    Original Image \\ \vspace{4pt}
    \includegraphics[width=1.0in]{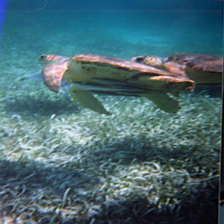}
    \\
    Loggerhead Sea Turtle \cmark{}
\end{minipage}
\hspace{0.05\linewidth}
\begin{minipage}{0.25\linewidth}
    \centering
    Explanation \\ \vspace{4pt}
    \includegraphics[width=1.0in]{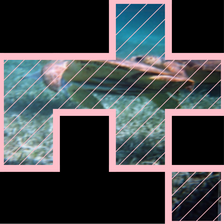}
    \\
    Loggerhead Sea Turtle \cmark{}
\end{minipage}
\hspace{0.05\linewidth}
\begin{minipage}{0.25\linewidth}
    \centering
    \(+ 3\) Features \\ \vspace{4pt}
    \includegraphics[width=1.0in]{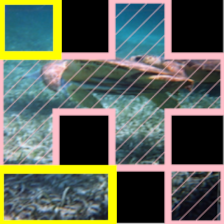}
    \\
    Coral Reef \xmark{}
\end{minipage}%

\caption{
\textbf{An unstable explanation.}
Given an input image (left), the LIME explanation method~\citep{ribeiro2016should} identifies features (middle, in pink) that preserve Vision Transformer's~\citep{dosovitskiy2020image} prediction.
However, this explanation is not stable: adding just three more features (right, in yellow) flips the predictions.
}
\label{fig:unstable_turtle}
\end{figure}

To address this gap, prior work in \citet{xue2023stability} considers stability as a formal certification framework for robust explanations.
In particular, a \textit{hard stable} explanation is one where adding any small number of features, up to some maximum tolerance, does not alter the prediction.
However, finding this tolerance is non-trivial: for an arbitrary model, one must exhaustively enumerate and check all possible perturbations in a computationally intractable manner.
To overcome this,~\citet{xue2023stability} introduce the MuS algorithmic framework for constructing smoothed models, which have mathematical properties for efficiently and non-trivially lower-bounding the maximum tolerance.
While this is a first step towards certifiably robust explanations, it yields conservative guarantees and relies on smoothing.

In this work, we introduce \textit{soft stability}, a new form of stability with mathematical and algorithmic benefits over hard stability.
As illustrated in~\cref{fig:soft_vs_hard_pipeline_dog}, hard stability certifies whether all small perturbations to an explanation yield the same prediction, whereas soft stability quantifies how often the prediction is maintained.
Soft stability may thus be interpreted as a probabilistic relaxation of hard stability, which enables a more fine-grained analysis of explanation robustness.
Crucially, this shift in perspective allows for model-agnostic applicability and admits efficient certification algorithms that provide stronger guarantees.
This work advances our understanding of robust feature-based explanations, and we summarize our contributions below.

\paragraph{Soft stability is practical and certifiable}
To address the limitations of hard stability, we introduce soft stability as a more practical and informative alternative in~\cref{sec:overview}.
Its key metric, the stability rate, provides a fine-grained characterization of robustness across perturbation radii.
Unlike hard stability, soft stability yields non-vacuous guarantees even at larger perturbations and enables meaningful comparisons across different explanation methods.

\paragraph{Sampling-based methods achieve better stability guarantees}
We introduce the Stability Certification Algorithm (SCA) in~\cref{sec:certifying_soft_stability}, a simple, model-agnostic, sampling-efficient approach for certifying \textit{both} hard and soft stability with rigorous statistical guarantees.
The key idea is to directly estimate the stability rate, which enables certification for both types of stability.
We show in~\cref{sec:experiments} that SCA gives stronger certificates than smoothing-based methods like MuS.

\paragraph{Mild smoothing can theoretically improve stability}  
Although SCA is model-agnostic, we find that mild MuS-style smoothing can improve the stability rate while preserving model accuracy.
Unlike with MuS, this improvement does not require significantly sacrificing accuracy for smoothness.
To study this behavior, we use Boolean analytic techniques to give a novel characterization of stability under smoothing in~\cref{sec:smoothing} and empirically validate our findings in~\cref{sec:experiments}.

\section{Background and Overview}
\label{sec:overview}

Feature attributions are widely used in explainability due to their simplicity and generality, but they are not without drawbacks.
In this section, we first give an overview of feature attributions.
We then discuss the existing work on hard stability and introduce soft stability.

\begin{figure}[t]
\centering

\includegraphics[width=0.95\linewidth]{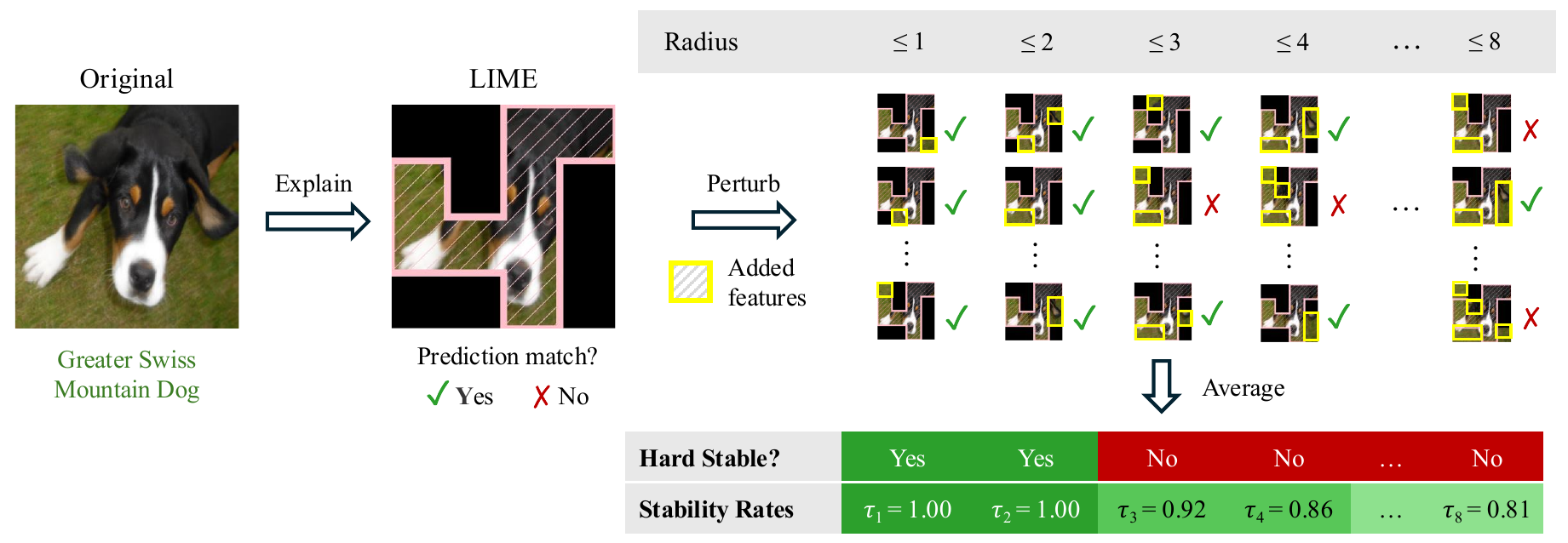}

\caption{
\textbf{Soft stability offers a fine-grained measure of robustness.}
For Vision Transformer~\citep{dosovitskiy2020image}, LIME's explanation~\citep{ribeiro2016should} is only hard stable up to radius $r \leq 2$.
In contrast to hard stability's binary decision at each \(r\), soft stability uses the \textit{stability rate} \(\tau_r\) to quantify the fraction of \(\leq r\)-sized perturbations that preserve the prediction, yielding a more fine-grained view of explanation stability.
Note that hard stability (when \(\tau_r = 1\)) is a form of adversarial robustness tailored for feature attributions.
}
\label{fig:soft_vs_hard_pipeline_dog}
\end{figure}

\subsection{Feature Attributions as Explanations}
Let \(f: \mbb{R}^n \to \mbb{R}^m\) be a classifier that maps each input \(x \in \mbb{R}^n\) to a vector of \(m\) class scores.
A feature attribution method assigns an attribution score \(\alpha_i \in \mbb{R}\) to each input feature \(x_i\) that indicates its importance to the prediction \(f(x)\).
The notion of importance is method-dependent:
in gradient-based methods~\citep{simonyan2013deep,sundararajan2017axiomatic}, \(\alpha_i\) typically denotes the gradient at \(x_i\), while in Shapley-based methods~\citep{lundberg2017unified,sundararajan2020many}, it represents the Shapley value of \(x_i\).
For real-valued attribution scores, it is common to convert them into binary vectors by selecting the top-\(k\) highest-scoring features~\citep{ribeiro2016should,miller2019explanation}.


\subsection{Hard Stability and Soft Stability}

Many evaluation metrics exist for binary-valued feature attributions~\citep{agarwal2022openxai}.
To compare two attributions \(\alpha, \alpha' \in \{0,1\}^n\), it is common to check whether they induce the same prediction with respect to a given classifier \(f: \mbb{R}^n \to \mbb{R}^m\) and input \(x \in \mbb{R}^n\).
Let \((x \odot \alpha) \in \mbb{R}^n\) be the \(\alpha\)-masked variant of \(x\), where \(\odot\) is the coordinate-wise product of two vectors.
We write \(f(x \odot \alpha) \cong f(x \odot \alpha')\) to mean that the masked inputs \(x \odot \alpha\) and \(x \odot \alpha'\) yield the same prediction under \(f\).
This way of evaluating explanations is related to notions of faithfulness, fidelity, and consistency in the explainability literature~\citep{nauta2023anecdotal}, and is commonly used in both vision~\citep{jain2022missingness} and language~\citep{lyu2023faithful,ye2024benchmarking}.

It is often desirable that two similar attributions yield the same prediction~\citep{yeh2019fidelity}.
While similarity can be defined in various ways, such as overlapping feature sets~\citep{nauta2023anecdotal}, we focus on additive perturbations.
Given an explanation \(\alpha\), we define an additive perturbation \(\alpha'\) as one that includes more features than \(\alpha\).
This is based on the intuition that adding information (features) to a high-quality explanation should not significantly affect the classifier's prediction.

\begin{definition}[Additive Perturbations]
For an attribution \(\alpha\) and integer-valued radius \(r \geq 0\), define \(r\)-additive perturbation set of \(\alpha\) as:
\begin{equation}
    \Delta_r (\alpha)
    = \{\alpha' \in \{0,1\}^n: \alpha' \geq \alpha,\, \abs{\alpha' - \alpha} \leq r\},
\end{equation}
where \(\alpha' \geq \alpha\) iff each \(\alpha_i ' \geq \alpha_i\) and \(\abs{\cdot}\) counts the non-zeros in a binary vector (i.e., the \(\ell^0\) norm).
\end{definition}

The binary vectors in \(\Delta_r (\alpha)\) represent attributions (explanations) that superset \(\alpha\) by at most \(r\) features.
This lets us study explanation robustness by studying how a more inclusive selection of features affects the classifier's prediction.
A natural way to formalize this is through stability: an attribution \(\alpha\) is stable with respect to \(f\) and \(x\) if adding a small number of features does not alter (or rarely alters) the prediction.
One such formulation of this idea is \textit{hard stability}.

\begin{definition}
[Hard Stability~\footnote{
\citet{xue2023stability} equivalently call this ``incrementally stable'' and define ``stable'' as a stricter property.
}~\citep{xue2023stability}]
\label{def:incremental_stability}
For a classifier \(f\) and input \(x\), the explanation \(\alpha\) is \textit{hard-stable}
with radius \(r\) if:
\(f(x \odot \alpha') \cong f(x \odot \alpha)\) for all \(\alpha' \in \Delta_r\).
\end{definition}

In essence, hard stability is a form of adversarial robustness tailored for feature attributions.
The certification process verifies that an explanation \(\alpha\) is robust against a worst-case adversary who adds up to \(r\) features to make it fail.
Specifically, \(\alpha\) has a certified hard stability radius of \(r\) if one can formally prove that all perturbations \(\alpha' \in \Delta_r(\alpha)\) induce the same prediction.
While this guarantee is powerful, its certification is not straightforward, as existing algorithms suffer from costly trade-offs that we later discuss in~\cref{sec:limitations_of_hard_stability}.
This practical barrier motivated our development of \textit{soft stability}: a probabilistic relaxation of hard stability that offers a more tractable yet meaningful way to quantify robustness.~\footnote{
Although probabilistic notions of robust explainability have been studied in the literature~\citep{ribeiro2018anchors,blanc2021provably,waldchen2021computational,wang2021probabilistic}, soft stability stands out as a one-sided notion of robustness.
}


\begin{figure}[t]
\centering

\begin{minipage}{0.25\linewidth}
    \centering
    Original \\ \vspace{4pt}
    \includegraphics[width=1.0in]{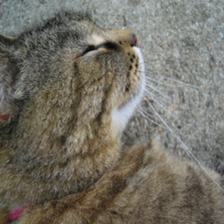}
    \\ \phantom{\(\tau_2 = 0.00\)}
\end{minipage}
\hspace{0.05\linewidth}
\begin{minipage}{0.25\linewidth}
    \centering
    LIME \\ \vspace{4pt}
    \includegraphics[width=1.0in]{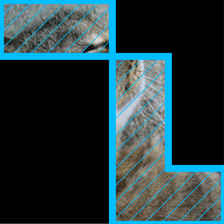}
    \\
    \(\tau_2 = 0.37\)
\end{minipage}
\hspace{0.05\linewidth}
\begin{minipage}{0.25\linewidth}
    \centering
    SHAP \\ \vspace{4pt}
    \includegraphics[width=1.0in]{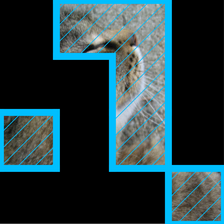}
    \\
    \(\tau_2 = 0.76\) \cmark{}
\end{minipage}%

\caption{
\textbf{Similar explanations may have different stability rates.}
Despite visual similarities, the explanations generated by LIME~\citep{ribeiro2016should} (middle) and SHAP~\citep{lundberg2017unified} (right), both in blue, have different stability rates at radius $r = 2$.
In this example, SHAP's explanation is more stable than LIME's.
}
\label{fig:lime_shap_cat}
\end{figure}

\begin{definition}[Soft Stability]\label{def:soft_stability}
For a classifier \(f\) and input \(x\), define the \textit{stability rate} \(\tau_r (f, x, \alpha)\) as the probability that the prediction remains unchanged when \(\alpha\) is perturbed by up to \(r\) features:
\begin{equation}
    \tau_r (f, x, \alpha) = \prob{\alpha' \sim \Delta_r} \bracks*{f(x \odot \alpha') \cong f(x \odot \alpha)},
    \quad \text{where \(\alpha' \sim \Delta_r\) is uniformly sampled}.
\end{equation}
\end{definition}

When \(f, x, \alpha\) are clear from the context, we will simply write \(\tau_r\) for brevity.
An important aspect of soft stability is that it can distinguish between the robustness of two similar explanations.
In~\cref{fig:lime_shap_cat}, for example, LIME and SHAP find significantly overlapping explanations that have very different stability rates.
We further study the stability rate of different explanation methods in~\cref{sec:experiments}.

\paragraph{Relation Between Hard and Soft Stability}
Soft stability is a probabilistic relaxation of hard stability, with \(\tau_r = 1\) recovering the hard stability condition.
Conversely, hard stability is a valid but coarse lower bound on the stability rate: if \(\tau_r < 1\), then the explanation is not hard stable at radius \(r\).
This relation implies that any certification for one kind of stability can be adapted for the other.

\section{Certifying Stability: Challenges and Algorithms}
\label{sec:certifying_soft_stability}

We begin by discussing the limitations of existing hard stability certification methods, particularly those based on smoothing, such as MuS~\citep{xue2023stability}.
We then introduce the Stability Certification Algorithm (SCA) in~\cref{eqn:sca}, providing a simple, model-agnostic, and sample-efficient way to certify both hard (\cref{thm:sca_hard}) and soft (\cref{thm:sca_soft}) stability at all perturbation radii.

\subsection{Limitations in (MuS) Smoothing-based Hard Stability Certification}
\label{sec:limitations_of_hard_stability}

Existing hard stability certifications rely on a classifier's \textit{Lipschitz constant}, which is a measure of sensitivity to input perturbations.
While the Lipschitz constant is useful for robustness analysis~\citep{cohen2019certified}, it is often intractable to compute and difficult to approximate~\citep{virmaux2018lipschitz,fazlyab2019efficient,xue2022chordal,mangal2020probabilistic}.
To address this, \citet{xue2023stability} construct smoothed classifiers with analytically known Lipschitz constants.
Given a classifier \(f\), its smoothed variant \(\tilde{f}\) is defined as the average prediction over perturbed inputs:
\(\tilde{f}(x) = \frac{1}{N} \sum_{i = 1}^N f(x^{(i)})\),
where \(x^{(1)}, \ldots, x^{(N)} \sim \mcal{D}(x)\) are perturbations of \(x\).
If \(\mcal{D}\) is appropriately chosen, then the smoothed classifier \(\tilde{f}\) has a known Lipschitz constant \(\kappa\) that allows for efficient certification.
We review MuS smoothing in~\cref{def:mus_operator} and its hard stability certificates in~\cref{thm:hard_stability_radius}.

\paragraph{Smoothing has severe performance trade-offs}
A key limitation of smoothing-based certificates is that the stability guarantees apply to \(\tilde{f}\) rather than \(f\).
Typically, the smoother the classifier, the stronger its guarantees (larger certified radii), but this comes at the cost of accuracy.
This is because smoothing reduces a classifier's sensitivity, making it harder to distinguish between classes~\citep{anil2019sorting,huster2019limitations}.

\paragraph{Smoothing-based hard stability is conservative}
Even when a smoothing-based certified radius is obtained, it is often conservative.
The main reason is that this approach depends on a global property, the Lipschitz constant \(\kappa\), to make guarantees about local perturbations \(\alpha' \sim \Delta_r (\alpha)\).
In particular, the certified hard stability radius of \(\tilde{f}\) scales as \(\mcal{O}(1/\kappa)\), which we elaborate on in~\cref{thm:hard_stability_radius}.

\begin{figure}[t]
\centering

\includegraphics[width=0.8\linewidth]{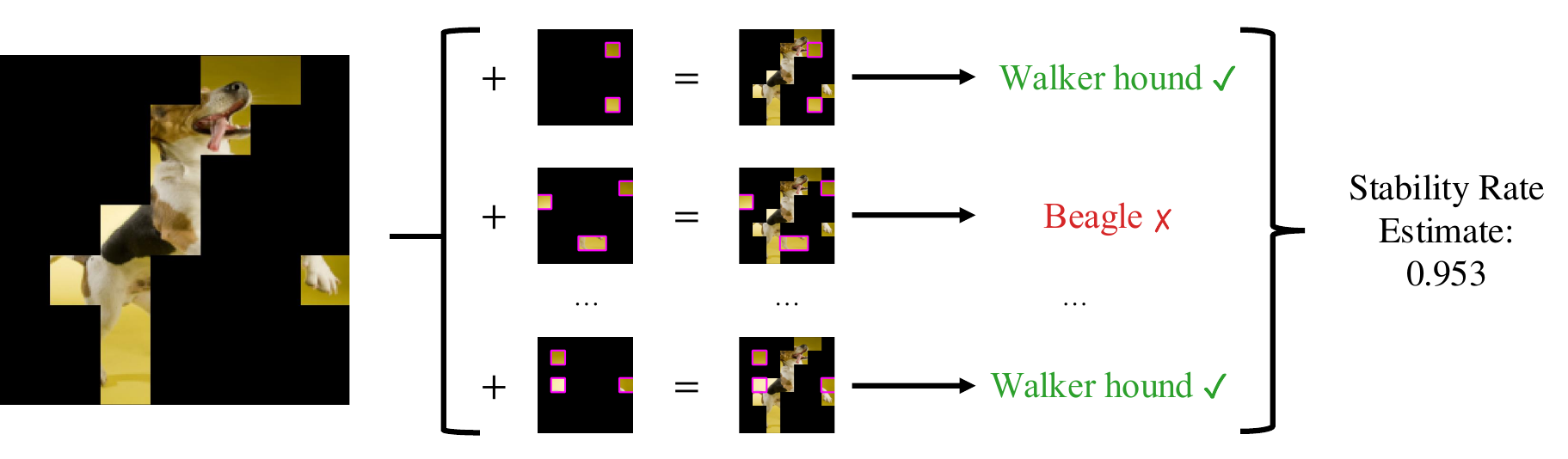}

\caption{
\textbf{The stability certification algorithm (SCA).}
Given an explanation \(\alpha \in \{0,1\}^n\) for a classifier \(f\) and input \(x \in \mbb{R}^n\), we estimate the stability rate \(\tau_r\) as follows.
First, sample perturbed masks \(\alpha' \sim \Delta_r (\alpha)\) uniformly with replacement.
Then, compute the empirical stability rate \(\hat{\tau}_r\), defined as the fraction of samples that preserve the prediction: \(\hat{\tau}_r = \frac{1}{N} \sum_{\alpha'} \mbf{1}[f(x\odot \alpha') \cong f(x \odot \alpha)]\).
With a properly chosen sample size \(N\), both hard and soft stability can be certified with statistical guarantees.
}
\label{fig:certification_algorithm}
\end{figure}

\subsection{Sampling-based Algorithms for Certifying Stability}
\label{sec:certification_algorithm}

Our key insight is that both soft and hard stability can be certified by directly estimating the stability rate through sampling.
This leads to a simple algorithm, illustrated in~\cref{fig:certification_algorithm} and formalized below:
\begin{equation}
    \hat{\tau}_r
    = \frac{1}{N} \sum_{i = 1}^{N} \mbf{1}\big[f(x \odot \alpha^{(i)}) \cong f(x \odot \alpha)\big],
    \quad \text{where \(\alpha^{(1)}, \ldots, \alpha^{(N)} \sim \Delta_r (\alpha)\) are sampled i.i.d.}
    \label{eqn:sca}
\end{equation}
The estimator \(\hat{\tau}_r\) provides a statistical approximation of soft stability.  
With an appropriate sample size \(N\), this estimate yields formal guarantees for both hard and soft stability.

\begin{theorem}[Certifying Soft Stability with SCA] \label{thm:sca_soft}
Let \(\hat{\tau}_r\) be the stability rate estimator defined in~\eqref{eqn:sca}, computed with \(N \geq \frac{\log(2/\delta)}{2 \varepsilon^2}\) for any confidence parameter \(\delta > 0\) and error tolerance \(\varepsilon > 0\).
Then, with probability at least \(1 - \delta\), the estimator satisfies \(\abs{\hat{\tau}_r - \tau_r} \leq \varepsilon\).
\end{theorem}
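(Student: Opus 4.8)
The plan is to recognize \(\hat{\tau}_r\) as the empirical mean of i.i.d.\ bounded random variables and to invoke a Hoeffding-type concentration inequality. First I would set, for each \(i \in \{1, \dots, N\}\), the indicator \(Z_i = \mbf{1}[f(x \odot \alpha^{(i)}) \cong f(x \odot \alpha)]\). Because the masks \(\alpha^{(1)}, \dots, \alpha^{(N)}\) are sampled i.i.d.\ uniformly from \(\Delta_r(\alpha)\), the \(Z_i\) are i.i.d.\ Bernoulli random variables, and by \cref{def:soft_stability} each satisfies \(\mbb{E}[Z_i] = \tau_r\). Hence \(\hat{\tau}_r = \frac{1}{N}\sum_{i=1}^{N} Z_i\) is an unbiased estimator of \(\tau_r\), and each \(Z_i \in [0,1]\).

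The main step is then a direct application of Hoeffding's inequality to this bounded i.i.d.\ sum: for every \(\varepsilon > 0\),
\[
    \Pr\!\left[\, \abs{\hat{\tau}_r - \tau_r} \geq \varepsilon \,\right] \leq 2 \exp\!\left(-2 N \varepsilon^2\right).
\]
It remains to pick \(N\) making the right-hand side at most \(\delta\). Solving \(2\exp(-2N\varepsilon^2) \leq \delta\) gives \(N \geq \frac{\log(2/\delta)}{2\varepsilon^2}\), exactly the stated sample-size requirement; passing to the complementary event yields \(\Pr[\abs{\hat{\tau}_r - \tau_r} \leq \varepsilon] \geq 1 - \delta\), as claimed.

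I do not anticipate a genuine obstacle: this is a textbook concentration-of-measure argument, and the only points needing care are (i) noting that drawing the \(\alpha^{(i)}\) i.i.d.\ \emph{with replacement} is precisely what licenses the independence of the \(Z_i\) used by Hoeffding — sampling without replacement would instead require a bounded-differences/McDiarmid argument, which still works but perturbs the constants — and (ii) keeping the inequality direction straight when inverting the exponential to isolate \(N\). If a variance-dependent (and potentially tighter) bound were desired, one could replace Hoeffding with Bernstein's inequality or the relative Chernoff bound, but Hoeffding already delivers the claimed distribution-free guarantee.
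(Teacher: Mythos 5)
Your proof is correct and takes essentially the same route as the paper: both recognize \(\hat{\tau}_r\) as the empirical mean of i.i.d.\ Bernoulli indicators with mean \(\tau_r\) and apply Hoeffding's inequality, then invert the exponential bound to obtain the stated sample-size requirement. You simply fill in the arithmetic that the paper leaves implicit, and your side remark about with-replacement sampling being what licenses independence is consistent with how the paper implements SCA.
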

\begin{proof}
The result follows by applying Hoeffding's inequality to the empirical mean of independent Bernoulli random variables \(X^{(1)}, \ldots, X^{(N)}\), where each
\(X^{(i)} = \mbf{1}[f(x \odot \alpha^{(i)}) \cong f(x \odot \alpha)]\).
\end{proof}

SCA can also certify hard stability by noting that \(\hat{\tau}_r = 1\) implies a high-confidence guarantee.

\begin{theorem}[Certifying Hard Stability with SCA]
\label{thm:sca_hard}
Let \(\hat{\tau}_r\) be the stability rate estimator defined in~\cref{eqn:sca}, computed with sample size \(N \geq \frac{\log(\delta)}{\log(1 - \varepsilon)}\) for any confidence parameter \(\delta > 0\) and error tolerance \(\varepsilon > 0\).
If \(\hat{\tau}_r = 1\), then with probability at least \(1 - \delta\), a uniformly sampled \(\alpha' \sim \Delta_r(\alpha)\) violates hard stability with probability at most \(\varepsilon\).
\end{theorem}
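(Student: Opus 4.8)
The plan is to reduce the claim to a direct computation of the probability of a single ``bad'' event: observing \(\hat{\tau}_r = 1\) even though the true stability rate \(\tau_r\) is below \(1 - \varepsilon\). Fix \(f\), \(x\), and \(\alpha\); this fixes the (deterministic, unknown) quantity \(\tau_r = \tau_r(f, x, \alpha)\), and each \(X^{(i)} = \mbf{1}[f(x \odot \alpha^{(i)}) \cong f(x \odot \alpha)]\) is an independent Bernoulli random variable with mean \(\tau_r\), since \(\alpha^{(i)} \sim \Delta_r(\alpha)\) is drawn uniformly i.i.d. The event \(\hat{\tau}_r = 1\) is exactly the event that all \(N\) draws satisfy \(X^{(i)} = 1\), which occurs with probability \(\tau_r^N\).

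Next I would argue by cases on the value of \(\tau_r\). If \(\tau_r \geq 1 - \varepsilon\), the desired conclusion already holds unconditionally: a uniform \(\alpha' \sim \Delta_r(\alpha)\) violates hard stability with probability \(1 - \tau_r \leq \varepsilon\). So the only way the certification rule ``output \textsc{certified} iff \(\hat{\tau}_r = 1\)'' can be wrong is if \(\tau_r < 1 - \varepsilon\) and yet \(\hat{\tau}_r = 1\) is observed; by the first paragraph this event has probability \(\tau_r^N \leq (1-\varepsilon)^N\). It therefore suffices to pick \(N\) so that \((1-\varepsilon)^N \leq \delta\). Taking logarithms and dividing by \(\log(1-\varepsilon) < 0\) (which reverses the inequality) yields precisely \(N \geq \log(\delta)/\log(1-\varepsilon)\), the stated sample-size requirement. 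Hence the probability, over the randomness of the \(N\) samples, that SCA reports \(\hat{\tau}_r = 1\) while the true violation probability \(1 - \tau_r\) exceeds \(\varepsilon\) is at most \(\delta\) --- which is the claimed guarantee.

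I do not expect a genuine technical obstacle: the core is a one-line tail bound for the all-ones event, essentially the one-sided ``rule of three''-style binomial confidence bound. The one point that deserves care is the reading of the ``with probability at least \(1 - \delta\)'' clause: because \(\tau_r\) is not random, the statement is not a posterior on \(\tau_r\) but a reliability guarantee on the certification procedure, namely that across the randomness of the i.i.d. sampling it issues a false certificate with probability at most \(\delta\). I would state this interpretation explicitly so the coverage claim is unambiguous, and I would also flag the mild standing assumptions \(\varepsilon \in (0,1)\) and \(\delta \in (0,1)\) needed for \(\log(1-\varepsilon)\) and the bound to be well defined and nontrivial.
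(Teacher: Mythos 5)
Your proof is correct and follows essentially the same route as the paper's: bound the probability of the single bad event (observing $\hat{\tau}_r = 1$ when $\tau_r < 1-\varepsilon$) by $(1-\varepsilon)^N$ and choose $N$ so this is at most $\delta$. Your version is a touch more careful than the paper's --- you make the case split on $\tau_r$ explicit, flag that $\tau_r$ is deterministic so the ``conditioning'' in the paper's display is really a hypothesis rather than a probabilistic condition, and note the standing assumptions $\varepsilon, \delta \in (0,1)$ --- but these are clarifications of the same argument, not a different one.
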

\begin{proof}
We bound the probability of the worst-case event, where the explanation is not hard stable at radius \(r\), meaning \(\tau_r < 1 - \varepsilon\), yet the estimator satisfies \(\hat{\tau}_r = 1\).
Because each \(\alpha^{(i)} \sim \Delta_r\) is uniformly sampled, this event occurs with probability
\begin{equation*}
    \prob{}\bracks*{\hat{\tau}_r = 1 \,|\, \tau_r < 1 - \varepsilon}
    \leq (1 - \varepsilon)^N
    \leq \delta,
\end{equation*}
which holds whenever \(N \geq \log(\delta) / \log(1 - \varepsilon)\).
\end{proof}

In both hard and soft stability certification, the required sample size \(N\) depends only on \(\varepsilon\) and \(\delta\), as \(\tau_r\) is a one-dimensional statistic.
Notably, certifying hard stability requires fewer samples, since the event being verified is simpler.
In both settings, SCA provides a simple alternative to MuS that does not require smoothing.

\paragraph{Implementing SCA}
The main computational challenge is in sampling \(\alpha' \sim \Delta_r (\alpha)\) uniformly.
When \(r \leq n - \abs{\alpha}\), this may be done by:
(1) sampling a perturbation size \(k \sim \{0,1, \ldots, r\}\) with probability \(\binom{n - \abs{\alpha}}{k} / \abs{\Delta_r (\alpha)}\), where \(\abs{\Delta_r (\alpha)} = \sum_{i = 0}^{r} \binom{n - \abs{\alpha}}{i}\); and then
(2) uniformly selecting \(k\) zero positions in \(\alpha\) to flip to one.
To avoid numerical instability from large binomial coefficients, we use a Gumbel softmax reparametrization~\citep{jang2016categorical} to sample in the log probability space.

\section{Theoretical Link Between Stability and Smoothing}
\label{sec:smoothing}

While SCA does \textit{not} require smoothing to certify stability, we find that applying mild MuS-style smoothing can improve the stability rate while incurring only a minor accuracy trade-off.
While this improvement is unsurprising, it is notable that the underlying smoothing mechanism is \textit{discrete}.
In contrast, most prior work relies on \textit{continuous} noise distributions~\citep{cohen2019certified}.
Below, we introduce this discrete smoothing method, MuS, wherein the main idea is to promote robustness to feature inclusion and exclusion by averaging predictions over randomly masked (dropped) inputs.

\begin{definition}
[MuS\footnote{
We use the terms \textit{MuS}, \textit{random masking}, \textit{smoothing}, and \(M_\lambda\) interchangeably, depending on the context.
} (Random Masking)]
\label{def:mus_operator}
For any classifier \(f\) and smoothing parameter \(\lambda \in [0,1]\), define the random masking operator \(M_\lambda\) as:
\begin{equation}
    M_\lambda f (x)
    = \expval{z \sim \Bern (\lambda)^n} f(x \odot z),
    \quad \text{where \(z_1, \ldots, z_n \sim \Bern(\lambda)\) are i.i.d. samples.}
\end{equation}
\end{definition}

Here, \(\tilde{f} = M_\lambda f\) is the smoothed classifier, where each feature is kept with probability \(\lambda\).
A smaller \(\lambda\) implies stronger smoothing:
at \(\lambda = 1\), we have \(\tilde{f} = f\);
at \(\lambda = 1/2\), half the features of \(x \odot z\) are dropped on average;
at \(\lambda = 0\), \(\tilde{f}\) reduces to a constant classifier.
We summarize our main results in~\cref{sec:main_theory_findings} with details in~\cref{sec:smoothing_analysis_theory}, and extended discussions in~\cref{sec:standard_analysis} and~\cref{sec:monotone_analysis}.


\subsection{Summary of Theoretical Results}
\label{sec:main_theory_findings}

Our main theoretical tooling is Boolean function analysis~\citep{o2014analysis}, which studies real-valued functions of Boolean-valued inputs.
To connect this with evaluating explanations: for any classifier \(f: \mbb{R}^n \to \mbb{R}^m\) and input \(x \in \mbb{R}^n\), define the masked evaluation \(f_x (\alpha) = f(x \odot \alpha)\).
Such \(f_x : \{0,1\}^n \to \mbb{R}^m\) is then a Boolean function, for which the random masking (MuS) operator \(M_\lambda\) is well-defined because \(M_\lambda f(x \odot \alpha) = M_\lambda f_x (\alpha)\).
To simplify our analysis, we consider a simpler form of prediction agreement for classifiers of the form \(f_x : \{0,1\}^n \to \mbb{R}\), where for \(\alpha' \sim \Delta_r (\alpha)\) let:
\begin{equation} \label{eqn:simplified_assumption}
    f_x (\alpha') \cong f_x (\alpha) \quad \text{if} \quad \abs{f_x (\alpha') - f_x (\alpha)} \leq \gamma,
\end{equation}
where \(\gamma\) is the distance to the decision boundary.~\footnote{
In the special case where the model outputs a sorted probability vector with \(p_1 \geq p_2 \geq \cdots \geq p_m\), we let \(\gamma = (p_1 - p_2)/2\).
This is half the gap between the top two classes, which ensures that even if \(p_1\) decreases by \(\gamma\), it remains the highest class.
}
This setup can be derived from a general \(m\)-class classifier once the \(x\) and \(\alpha\) are given.
In summary, we establish the following.

\begin{theorem}[Smoothed Stability, Informal 
 of~\cref{thm:stability_of_smoothed}]
\label{thm:main_stability_result}
Smoothing improves the lower bound on the stability rate by shrinking its gap to \(1\) by a factor of \(\lambda\).
Consider any classifier \(f_x\) and attribution \(\alpha\) that satisfy~\cref{eqn:simplified_assumption}, and let \(Q\) depend on the monotone weights of \(f_x\), then:
\begin{equation}
    1 - \frac{Q}{\gamma} \leq \tau_r (f_x, \alpha)
    \implies
    1 - \frac{\lambda Q}{\gamma} \leq \tau_r (M_\lambda f_x, \alpha).
\end{equation}
\end{theorem}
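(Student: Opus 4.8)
The plan is to reduce the claim to a pointwise comparison between how far $M_\lambda f_x$ and $f_x$ can move along an additive perturbation, and then convert that comparison into a tail bound via Markov's inequality. The right language is the \emph{monotone (M\"obius) expansion} of a Boolean function: write
\[
f_x(\beta) \;=\; \sum_{S \subseteq [n]} c_S \prod_{i \in S} \beta_i, \qquad \beta \in \{0,1\}^n ,
\]
where the coefficients $c_S$ are the ``monotone weights'' of $f_x$. First I would record the structural fact that $M_\lambda$ is diagonal in this basis: since the coordinates of $z \sim \Bern(\lambda)^n$ are i.i.d., $\expval{z}{\prod_{i\in S} z_i} = \lambda^{\abs{S}}$, so $M_\lambda f_x(\beta) = \sum_{S} c_S\, \lambda^{\abs{S}} \prod_{i\in S}\beta_i$; that is, smoothing scales the degree-$k$ monotone weights by $\lambda^{k}$.

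Next I would exploit the structure of an additive perturbation $\alpha' \in \Delta_r(\alpha)$, i.e. $\alpha' \geq \alpha$. Each monomial $\chi_S(\beta) := \prod_{i\in S}\beta_i$ is monotone nondecreasing, so $\chi_S(\alpha') - \chi_S(\alpha) \in \{0,1\}$, and the constant term ($S=\emptyset$) contributes $0$ to the difference. Hence, writing $\Phi(\alpha,\alpha') := \sum_{S}\abs{c_S}\,(\chi_S(\alpha') - \chi_S(\alpha)) \geq 0$, the triangle inequality gives $\abs{f_x(\alpha') - f_x(\alpha)} \leq \Phi(\alpha,\alpha')$, and the same computation applied to $M_\lambda f_x$ — now using that every surviving term has $\abs{S}\geq 1$, so $\lambda^{\abs{S}} \leq \lambda$ — yields the crucial improvement
\[
\abs{M_\lambda f_x(\alpha') - M_\lambda f_x(\alpha)} \;\leq\; \sum_{S}\abs{c_S}\,\lambda^{\abs{S}}\,(\chi_S(\alpha') - \chi_S(\alpha)) \;\leq\; \lambda\,\Phi(\alpha,\alpha') .
\]

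Finally I would take $Q := \expval{\alpha'\sim\Delta_r(\alpha)}{\Phi(\alpha,\alpha')} = \sum_{S}\abs{c_S}\,\prob{\alpha'}{\chi_S(\alpha') = 1 \neq \chi_S(\alpha)}$, a nonnegative linear functional of the monotone weights, matching the quantity in the formal statement. Markov's inequality applied to $f_x$ gives $\tau_r(f_x,\alpha) \geq \prob{\alpha'}{\Phi(\alpha,\alpha')\leq\gamma} \geq 1 - Q/\gamma$, so the hypothesis is precisely the Markov bound for $f_x$; and applying Markov's inequality to $M_\lambda f_x$ together with the displayed bound (which yields $\expval{\alpha'}{\abs{M_\lambda f_x(\alpha') - M_\lambda f_x(\alpha)}} \leq \lambda Q$) gives
\[
\tau_r(M_\lambda f_x,\alpha) \;\geq\; \prob{\alpha'}{\abs{M_\lambda f_x(\alpha') - M_\lambda f_x(\alpha)} \leq \gamma} \;\geq\; 1 - \frac{\lambda Q}{\gamma},
\]
which is the asserted implication; the degenerate cases ($\lambda = 0$ gives a constant classifier with $\tau_r = 1$; $\gamma = 0$ is vacuous) are immediate.

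The main obstacle I expect is not the analytic core above but pinning $Q$ down to match the formal theorem: depending on how the paper normalizes $f_x$ and defines the monotone weights, $Q$ may be a coarser sum — e.g. $\sum_{\abs{S}\geq1}\abs{c_S}\cdot w_S$ with combinatorial weights $w_S$ upper-bounding the activation probabilities $\prob{\alpha'}{\chi_S(\alpha')=1\neq\chi_S(\alpha)}$ — and the $\lambda^{\abs{S}}\leq\lambda$ step must be threaded through whichever form is used. A secondary point is the decision-threshold convention for the smoothed classifier: the argument above carries the same margin $\gamma$ (which is what $1-\lambda Q/\gamma$ reflects), so one must either show $M_\lambda f_x$ retains margin at least $\gamma$ at $\alpha$ or reinterpret $\cong$ for the smoothed model conservatively, exactly as in~\cref{eqn:simplified_assumption}.
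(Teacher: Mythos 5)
Your proposal is correct and follows essentially the same route as the paper: the monotone-basis expansion, the diagonal contraction $\widetilde{M_\lambda h}(T) = \lambda^{|T|}\widetilde{h}(T)$ (the paper's \cref{thm:smoothing_monotone_appendix}), a pointwise bound on $|h(\alpha')-h(\alpha)|$ by a nonnegative sum of $|\widetilde{h}(T)|$, Markov's inequality, and the final step $\lambda^{|T|}\leq\lambda$ for $|T|\geq 1$, which the paper packages as \cref{thm:soft_stability_bound} feeding into \cref{thm:stability_of_smoothed}. One note on the bookkeeping you flag: your $\Phi$ correctly sums over every nonempty $T$ with $\mbf{1}_T(\alpha')>\mbf{1}_T(\alpha)$, including $T$ that overlap $\alpha$, whereas the paper's~\cref{eqn:hbeta_minus_halpha} restricts to $T\subseteq\alpha'\setminus\alpha$ and so silently drops such terms (e.g.\ $\alpha=\{1\}$, $\alpha'=\{1,2\}$, $T=\{1,2\}$ gives $\mbf{1}_T(\alpha')-\mbf{1}_T(\alpha)=1$ yet $T\not\subseteq\alpha'\setminus\alpha=\{2\}$) — so your unrestricted $Q$ is the one for which the Markov bound is airtight, and the paper's $Q$ as written is slightly too small.
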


Theoretically, smoothing improves the worst-case stability rate by a factor of \(\lambda\).
Empirically, we observe that smoothed classifiers tend to be more stable.
Interestingly, we found it challenging to bound the stability rate of \(M_\lambda\)-smoothed classifiers using standard Boolean analytic techniques, such as those in widely used references like~\citep{o2014analysis}.
This motivated us to develop novel analytic tooling to study stability under smoothing, which we discuss next.

\subsection{Challenges with Standard Boolean Analytic Tooling and New Techniques}
\label{sec:smoothing_analysis_theory}
It is standard to study Boolean functions via their Fourier expansion.
For any \(h: \{0,1\}^n \to \mbb{R}\), its Fourier expansion exists uniquely as a linear combination over the subsets of \([n] = \{1, \ldots, n\}\):
\begin{equation} \label{eqn:fourier_expansion}
    h (\alpha) = \sum_{S \subseteq [n]} \widehat{h} (S) \chi_S (\alpha),
\end{equation}
where each \(\chi_S (\alpha)\) is a Fourier basis function with weight \(\widehat{h}(S)\), respectively defined as:
\begin{equation}
    \chi_S (\alpha) = \prod_{i \in S} (-1)^{\alpha_i},
    \quad \chi_\emptyset (\alpha) = 1,
    \quad \widehat{h}(S) = \frac{1}{2^n} \sum_{\alpha \in \{0,1\}^n} h(\alpha) \chi_S (\alpha).
\end{equation}
The Fourier expansion makes all the \(k = 0, 1, \ldots, n\) degree (order) interactions between input bits explicit.
For example, the AND function \(h(\alpha_1, \alpha_2) = \alpha_1 \land \alpha_2\) is uniquely expressible as:
\begin{equation}
    h(\alpha_1, \alpha_2)
    = \frac{1}{4} \chi_\emptyset (\alpha)
        - \frac{1}{4} \chi_{\{1\}} (\alpha)
        - \frac{1}{4} \chi_{\{2\}} (\alpha)
        + \frac{1}{4} \chi_{\{1,2\}} (\alpha).
\end{equation}
To study how linear operators act on Boolean functions, it is common to isolate their effect on each term.
With respect to the standard Fourier basis, the operator \(M_\lambda\) acts as follows.
\begin{theorem}\label{thm:main_body_mus_chi}
For any standard basis function \(\chi_S\) and smoothing parameter \(\lambda \in [0,1]\),
\begin{equation}
    M_\lambda \chi_S (\alpha)
    = \sum_{T \subseteq S}
    \lambda^{\abs{T}} (1 - \lambda)^{\abs{S - T}}  \chi_T (\alpha).
\end{equation}
For any function \(h: \{0,1\}^n \to \mbb{R}\), its smoothed variant \(M_\lambda h\) has the Fourier expansion
\begin{equation}
    M_\lambda h(\alpha)
    = \sum_{T \subseteq [n]} \widehat{M_\lambda h} (T) \chi_T (\alpha),
    \quad \text{where} \enskip
    \widehat{M_\lambda h}(T)
    = \lambda^{\abs{T}} \sum_{S \supseteq T} (1 - \lambda) ^{\abs{S - T}} \widehat{h}(S).
\end{equation}
\end{theorem}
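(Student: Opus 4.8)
The plan is to compute $M_\lambda \chi_S$ directly from the definition, exploiting the product structure of both the basis function and the i.i.d.\ masking distribution, and then to obtain the coefficient formula by linearity and uniqueness of the Fourier expansion. For the first identity, I would view $\chi_S$ as a Boolean function so that $M_\lambda \chi_S(\alpha) = \expval{z \sim \Bern(\lambda)^n} \chi_S(\alpha \odot z)$, where $(\alpha \odot z)_i = \alpha_i z_i$. Writing $\chi_S(\alpha \odot z) = \prod_{i \in S} (-1)^{\alpha_i z_i}$ and using independence of the coordinates $z_i$, the expectation factors over $i \in S$:
\[
M_\lambda \chi_S(\alpha)
= \prod_{i \in S} \expval{z_i \sim \Bern(\lambda)} (-1)^{\alpha_i z_i}
= \prod_{i \in S} \bigl( (1 - \lambda) + \lambda (-1)^{\alpha_i} \bigr),
\]
since $(-1)^{\alpha_i \cdot 0} = 1$ and $(-1)^{\alpha_i \cdot 1} = (-1)^{\alpha_i}$. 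Expanding this product by choosing, for each $i \in S$, either the constant term $(1 - \lambda)$ or the term $\lambda (-1)^{\alpha_i}$, and collecting the indices where the latter is chosen into a subset $T \subseteq S$, gives $\sum_{T \subseteq S} \lambda^{\abs{T}} (1 - \lambda)^{\abs{S - T}} \prod_{i \in T} (-1)^{\alpha_i} = \sum_{T \subseteq S} \lambda^{\abs{T}} (1 - \lambda)^{\abs{S - T}} \chi_T(\alpha)$, which is the claimed action on the basis.

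For the second part I would apply $M_\lambda$ termwise to the Fourier expansion $h = \sum_{S \subseteq [n]} \widehat{h}(S) \chi_S$ — legitimate because $M_\lambda$ is linear — substitute the identity just proved, and interchange the order of summation via $\sum_{S \subseteq [n]} \sum_{T \subseteq S} = \sum_{T \subseteq [n]} \sum_{S \supseteq T}$. Factoring out $\lambda^{\abs{T}}$ yields
\[
M_\lambda h(\alpha)
= \sum_{T \subseteq [n]} \Bigl( \lambda^{\abs{T}} \sum_{S \supseteq T} (1 - \lambda)^{\abs{S - T}} \widehat{h}(S) \Bigr) \chi_T(\alpha),
\]
and uniqueness of the Fourier expansion identifies the parenthesized quantity as $\widehat{M_\lambda h}(T)$.

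There is no substantive obstacle: the argument is an elementary computation. The only points that warrant a little care are confirming that $M_\lambda$ acting on a classifier restricts correctly to the Boolean function $f_x$ — i.e.\ $M_\lambda f(x \odot \alpha) = M_\lambda f_x(\alpha)$, which holds because $(x \odot \alpha) \odot z = x \odot (\alpha \odot z)$ — and the index bookkeeping in the summation interchange, which I would write out explicitly so that the region $\{(S,T) : T \subseteq S \subseteq [n]\}$ is described consistently before and after the swap.
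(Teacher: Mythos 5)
Your proposal is correct and follows the same route as the paper's proof (Lemmas~\ref{lem:mus_chi} and~\ref{lem:mus_h} in the appendix): factor the expectation over the i.i.d.\ mask bits to get $\prod_{i \in S}\bigl((1-\lambda) + \lambda(-1)^{\alpha_i}\bigr)$, expand the product over subsets $T \subseteq S$, then extend to general $h$ by linearity and a swap of summation order. The added remark that $(x \odot \alpha) \odot z = x \odot (\alpha \odot z)$ justifies the restriction to Boolean functions and is consistent with what the paper notes in Section~\ref{sec:standard_analysis}.
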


This result shows that smoothing redistributes weights from each term \(S\) down to all of its subsets \(T \subseteq S\), scaled by a binomial decay \(\msf{Bin}(\abs{S}, \lambda)\).
However, this behavior introduces significant complexity in the algebraic manipulations and is distinct from that of other operators commonly studied in literature, making it difficult to analyze stability with existing techniques.

Although one could, in principle, study stability using the standard basis, we found that the \textit{monotone basis} was better suited to describing the inclusion and exclusion of features.
While this basis is also known in game theory as \textit{unanimity functions}, its use in analyzing stability and smoothing is novel.

\begin{definition}[Monotone Basis]
\label{def:main_body_monotone_basis}
For each subset \(T \subseteq [n]\), define its monotone basis function as:
\begin{equation}
    \mbf{1}_T (\alpha) = \begin{dcases}
        1 & \text{if \(\alpha_i = 1\) for all \(i \in T\) (all features of \(T\) are present)}, \\
        0 & \text{otherwise (any feature of \(T\) is absent)}.
    \end{dcases}
\end{equation}
\end{definition}

The monotone basis provides a direct encoding of set inclusion, where the example of conjunction is now concisely represented as \(\mbf{1}_{\{1,2\}} (\alpha_1, \alpha_2) = \alpha_1 \land \alpha_2\).
Similar to the standard basis, the monotone basis also admits a unique \textit{monotone expansion} for any function \(h: \{0,1\}^n \to \mbb{R}\) and takes the form:
\begin{equation}
    h (\alpha) = \sum_{T \subseteq [n]} \widetilde{h} (T) \mbf{1}_T (\alpha),
    \quad \text{where} \enskip
    \widetilde{h} (T) = h (T) - \sum_{S \subsetneq T} \widetilde{h} (S),
    \quad \widetilde{h}(\emptyset) = h(\mbf{0}_n),
\end{equation}
where \(\widetilde{h}(T)\) are the recursively defined monotone weights at each \(T \subseteq [n]\), with \(h(T)\) being the evaluation of \(h\) on the natural \(\{0,1\}^n\)-valued representation of \(T\).
A key property of the monotone basis is that the action of \(M_\lambda\) is now a point-wise contraction at each \(T\).

\begin{theorem}
For any function \(h: \{0,1\}^n \to \mbb{R}\), subset \(T \subseteq [n]\), and \(\lambda \in [0,1]\), the smoothed classifier experiences a spectral contraction of
\begin{equation}
    \widetilde{M_\lambda h}(T) = \lambda^{\abs{T}} \widetilde{h}(T),
\end{equation}
where \(\widetilde{M_\lambda h}(T)\) and \(\widetilde{h}(T)\) are the monotone basis coefficients of \(M_\lambda h\) and \(h\) at subset \(T\), respectively.
\end{theorem}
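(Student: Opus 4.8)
The plan is to exploit the linearity of $M_\lambda$ together with the observation that the monotone basis functions are exactly the eigenfunctions of $M_\lambda$. First I would show that for every $T \subseteq [n]$ we have $M_\lambda \mbf{1}_T = \lambda^{\abs{T}} \mbf{1}_T$. Granting this, expand $h$ in its monotone basis, $h = \sum_{T \subseteq [n]} \widetilde{h}(T)\, \mbf{1}_T$, and apply $M_\lambda$ termwise — legitimate since $M_\lambda$ is an expectation over a fixed distribution, hence linear, and the sum is finite. This gives $M_\lambda h = \sum_{T \subseteq [n]} \lambda^{\abs{T}} \widetilde{h}(T)\, \mbf{1}_T$, which, by uniqueness of the monotone expansion, must be the monotone expansion of $M_\lambda h$; reading off the coefficient at $T$ yields $\widetilde{M_\lambda h}(T) = \lambda^{\abs{T}} \widetilde{h}(T)$.

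The only real computation is the eigenrelation, and it follows from the multiplicative form of the monotone basis function: for $\alpha \in \{0,1\}^n$ we have $\mbf{1}_T(\alpha) = \prod_{i \in T} \alpha_i$, since this product equals $1$ exactly when every coordinate of $T$ is present. Masking acts coordinatewise, $(\alpha \odot z)_i = \alpha_i z_i$, so $\mbf{1}_T(\alpha \odot z) = \prod_{i \in T} \alpha_i z_i = \mbf{1}_T(\alpha) \prod_{i \in T} z_i$. Taking the expectation over $z \sim \Bern(\lambda)^n$ and using independence of the coordinates $z_i$, each of mean $\lambda$, gives $M_\lambda \mbf{1}_T(\alpha) = \mbf{1}_T(\alpha) \prod_{i \in T}\lambda = \lambda^{\abs{T}} \mbf{1}_T(\alpha)$.

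I do not expect a genuine obstacle: the argument is short once the product structure of $\mbf{1}_T$ is noted, and it is strikingly cleaner than the standard-basis computation in~\cref{thm:main_body_mus_chi}, where $M_\lambda$ mixes a term into all of its subsets. The two points I would state explicitly are (i) that pushing $M_\lambda$ through the monotone expansion is justified by linearity and finiteness of the sum, and (ii) that uniqueness of the monotone expansion (asserted in the excerpt) is what lets one identify $\widetilde{M_\lambda h}(T)$ from the resulting expression rather than merely asserting an equality of functions. As an optional sanity check one can verify consistency with~\cref{thm:main_body_mus_chi} by rewriting $\mbf{1}_T$ in the standard basis, but this is not needed for the proof.
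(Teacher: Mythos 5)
Your proof is correct and follows essentially the same route as the paper's: both compute $M_\lambda \mbf{1}_T(\alpha) = \expval{z}\prod_{i\in T}\alpha_i z_i = \lambda^{\abs{T}}\mbf{1}_T(\alpha)$ by independence, then invoke linearity of $M_\lambda$ (and, implicitly, uniqueness of the monotone expansion) to read off the coefficient. Your write-up is slightly more explicit about the role of uniqueness, but the argument is identical.
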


In contrast to smoothing in the standard basis (\cref{thm:main_body_mus_chi}), smoothing in the monotone basis exponentially decays each weight by a factor of \(\lambda^{\abs{T}}\), which better aligns with the motifs of existing techniques.~\footnote{
The standard smoothing operator is random flipping: let \(T_\rho h(\alpha) = \mbb{E}_{z \sim \msf{Bern}(q)^n} \bracks{h((\alpha + z) \,\text{mod}\, 2)}\)
for any \(\rho \in [0,1]\) and \(q = (1 - \rho)/2\).
Then, the standard Fourier basis contracts as \(T_\rho \chi_S(\alpha) = \rho^{\abs{S}} \chi_S (\alpha)\).
}
As previewed in~\cref{thm:main_stability_result}, the stability rate of smoothed classifiers can be bounded via the monotone weights of degree \(\leq r\), which we further discuss in~\cref{sec:monotone_analysis}.

\section{Experiments}
\label{sec:experiments}

We evaluate the advantages of SCA over MuS, which is currently the only other stability certification algorithm.
We also study how stability guarantees vary across vision and language tasks, as well as across explanation methods.
Moreover, we show that mild smoothing, defined as \(\lambda \geq 0.5\) for~\cref{def:mus_operator}, often improves stability while preserving accuracy.
We summarize our key findings here and defer full technical details and additional experiments to~\cref{sec:additional_experiments}.

\paragraph{Experimental Setup}
We used Vision Transformer (ViT)~\citep{dosovitskiy2020image} and ResNet50/18~\citep{he2016deep} as our vision models and RoBERTa~\citep{liu2019roberta} as our language model.
For datasets, we used a 2000-image subset of ImageNet (2 images per class) and six subsets of TweetEval (emoji, emotion, hate, irony, offensive, sentiment), totaling 10653 samples.
Images of size \(3 \times 224 \times 224\) were segmented into \(16 \times 16\) patches, for \(n = 196\) features per image.
For text, each token was treated as one feature.
We used five feature attribution methods: LIME~\citep{ribeiro2016should}, SHAP~\citep{lundberg2017unified}, Integrated Gradients~\citep{sundararajan2017axiomatic}, MFABA~\citep{zhu2024mfaba}, and a random baseline.
We selected the top-25\% of features as the explanation.

\begin{figure*}[t]
\centering


\begin{minipage}{0.24\textwidth}
    \centering
    \includegraphics[width=\linewidth]{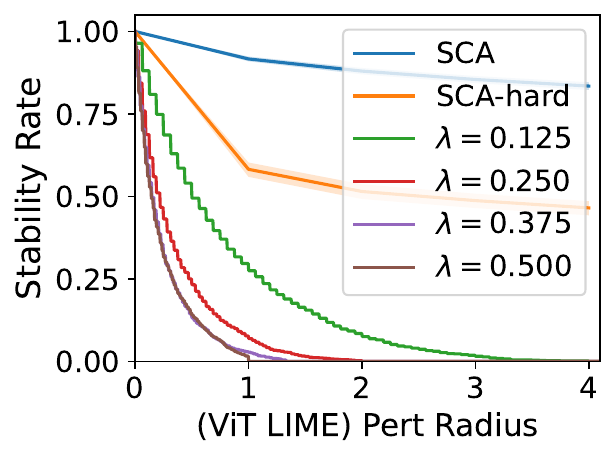}
\end{minipage}
\hfill
\begin{minipage}{0.24\textwidth}
    \centering
    \includegraphics[width=\linewidth]{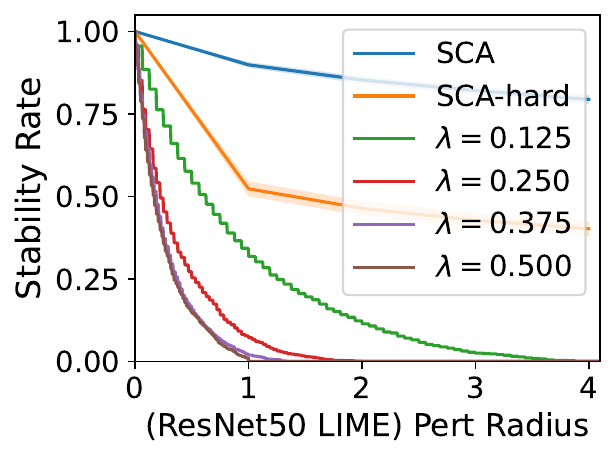}
\end{minipage}
\hfill
\begin{minipage}{0.24\textwidth}
    \centering
    \includegraphics[width=\linewidth]{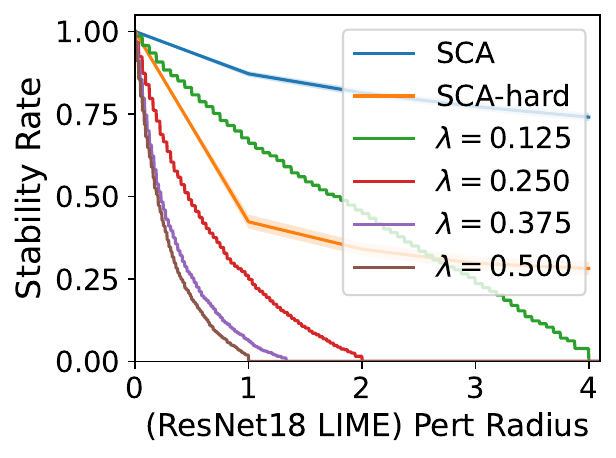}
\end{minipage}
\hfill
\begin{minipage}{0.24\textwidth}
    \centering
    \includegraphics[width=\linewidth]{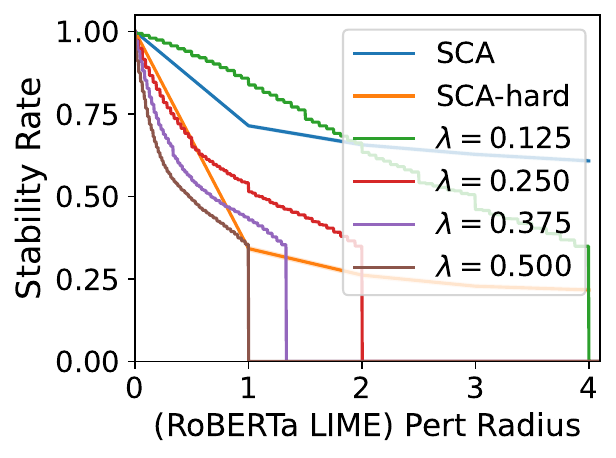}
\end{minipage}

\caption{
\textbf{SCA certifies more than MuS.}
Soft stability certificates obtained through SCA are stronger than those obtained from MuS, which quickly become vacuous as the perturbation size grows.
When using MuS with smoothing parameter \(\lambda\), guarantees only exist for perturbation radii \(\leq 1 / 2\lambda\).
Moreover, the smaller the \(\lambda\), the worse the smoothed classifier accuracy, see~\cref{fig:accuracy_vs_smoothing}.
}
\label{fig:soft_vs_hard_rates_main_paper}
\end{figure*}

\paragraph{Question 1: How do SCA's guarantees compare to those from MuS?}
We begin by comparing the SCA-based stability guarantees to those from MuS.
To facilitate comparison, we derive stability rates for MuS-based hard stability certificates (\cref{thm:hard_stability_radius}) using the following formulation:
\begin{equation}  
    \text{Stability rate at radius \(r\)}
    = \frac{\abs{\braces{(x, \alpha) : \text{CertifiedRadius}(M_\lambda f_x, \alpha) \geq r}}}{\text{Total number of \(x\)'s}}.
\end{equation}
In~\cref{fig:soft_vs_hard_rates_main_paper}, we present results for LIME across different MuS smoothing parameters \(\lambda\), along with the SCA-based soft (\cref{thm:sca_soft}) and hard (\cref{thm:sca_hard}) stability certificates.
SCA yields non-trivial guarantees even at larger perturbation radii, whereas MuS-based certificates become vacuous beyond a radius of \(1/2\lambda\).
A smaller \(\lambda\) improves MuS guarantees but significantly degrades accuracy (see~\cref{fig:accuracy_vs_smoothing}), resulting in certificates for less accurate classifiers.
Section~\cref{sec:additional_experiments_soft_vs_hard} presents an extended comparison of SCA and MuS over various explanations, where we observe similar trends.


\begin{figure}[t]

\begin{minipage}{0.24\linewidth}
    \centering
    \includegraphics[width=1.0\linewidth]{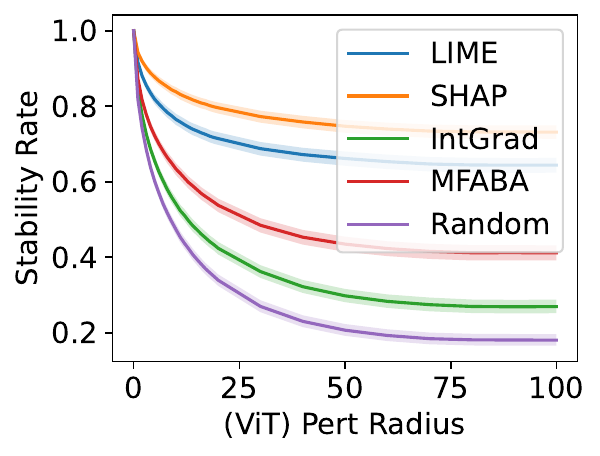}
\end{minipage} \hfill
\begin{minipage}{0.24\linewidth}
    \centering
    \includegraphics[width=1.0\linewidth]{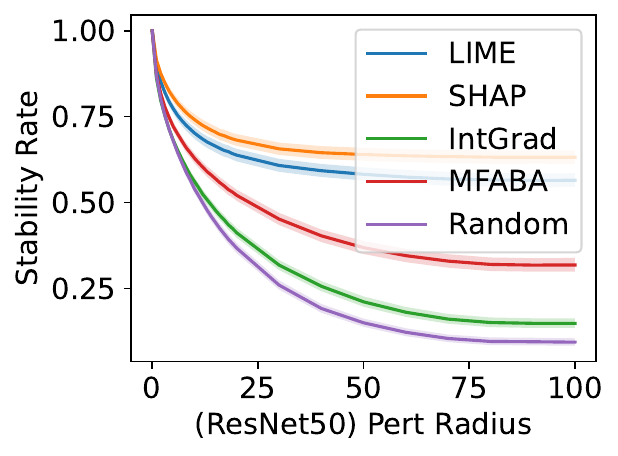}
\end{minipage} \hfill
\begin{minipage}{0.24\linewidth}
    \centering
    \includegraphics[width=1.0\linewidth]{images/soft_stability_methods_resnet50_main.pdf}
\end{minipage} \hfill
\begin{minipage}{0.24\linewidth}
    \centering
    \includegraphics[width=1.0\linewidth]{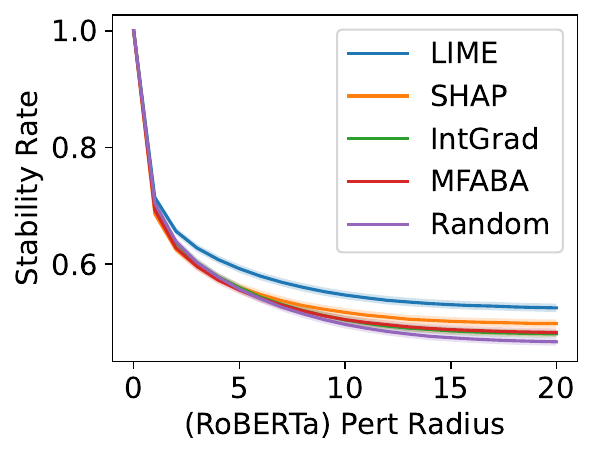}
\end{minipage}


\caption{
\textbf{Soft stability varies across explanation methods.}
For vision models, LIME and SHAP yield higher stability rates than gradient-based methods, with all methods outperforming the random baseline.
On RoBERTa, however, the methods are less distinguishable.
Note that a perturbation of size \(100\) affects over half the features in a patched image input with \(n = 196\) features.
}
\label{fig:soft_rates_main_paper}
\end{figure}

\paragraph{Question 2: How does stability vary across explanation methods?}
We next show in~\cref{fig:soft_rates_main_paper} how the SCA-certified stability rate varies across different explanation methods.
Soft stability can effectively distinguish between explanation methods in vision, with LIME and SHAP yielding the highest stability rates.
However, this distinction is less clear for RoBERTa and for MuS-based hard stability certificates, further studied in~\cref{sec:additional_experiments_mus_hard_certificates}.
Furthermore, we show ablations on the top-\(k\) feature selection in~\cref{sec:additional_experiments_topk_ablation}.


\begin{figure}[t]
\centering

\begin{minipage}{0.24\linewidth}
    \centering
    \includegraphics[width=1.0\linewidth]{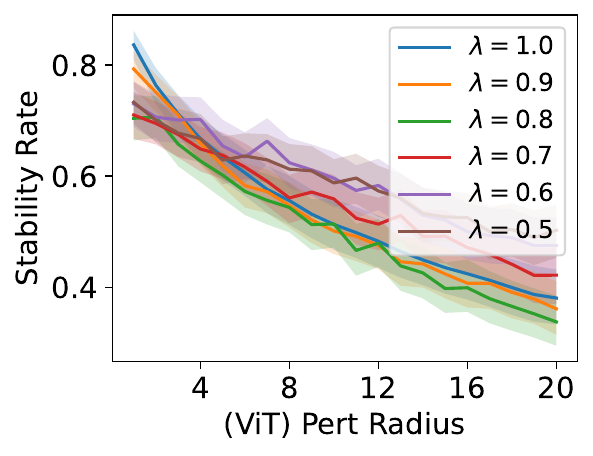}
\end{minipage} \hfill
\begin{minipage}{0.24\linewidth}
    \centering
    \includegraphics[width=1.0\linewidth]{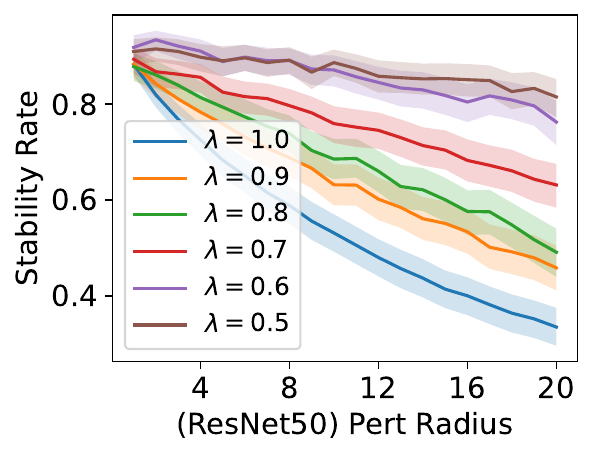}
\end{minipage} \hfill
\begin{minipage}{0.24\linewidth}
    \centering
    \includegraphics[width=1.0\linewidth]{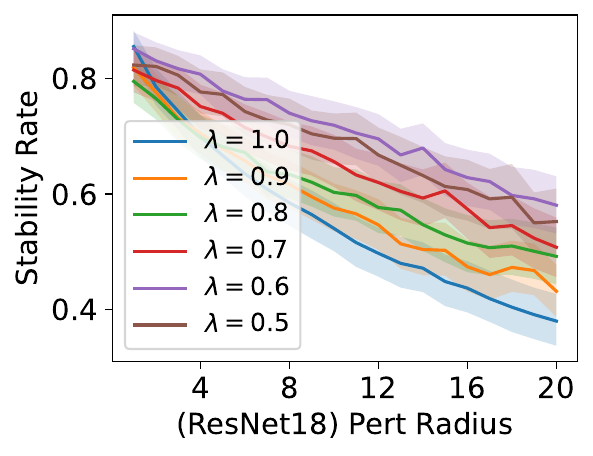}
\end{minipage} \hfill
\begin{minipage}{0.24\linewidth}
    \centering
    \includegraphics[width=1.0\linewidth]{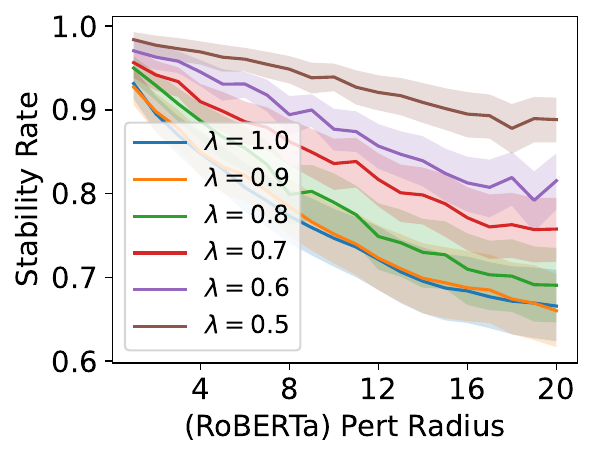}
\end{minipage}


\caption{
\textbf{Mild smoothing (\(\lambda \geq 0.5\)) can improve stability.}
For vision, this is most prominent for ResNet50 and ResNet18.
While transformers also benefit, RoBERTa improves more than ViT.
}
\label{fig:stability_vs_smoothing_main_paper}
\end{figure}
\paragraph{Question 3: How well does mild smoothing (\(\lambda \geq 0.5\)) improve stability?}
We next empirically study the relation between stability and mild smoothing, for which \(\lambda \geq 0.5\) is too large to obtain hard stability certificates.
We show in~\cref{fig:stability_vs_smoothing_main_paper} the stability rate at different \(\lambda\), where we used \(32\) Bernoulli samples to compute smoothing (\cref{def:mus_operator}).
We used 200 samples from our subset of ImageNet and 200 samples from TweetEval that had at least 40 tokens, and a random attribution to select 25\% of the features.
We see that smoothing generally improves stability, and we study setups with larger perturbation radii~\cref{sec:additional_experiments_stability_vs_smoothing}.


\begin{figure}[t]
\centering

\begin{minipage}{0.24\linewidth}
    \centering
    \includegraphics[width=1.0\linewidth]{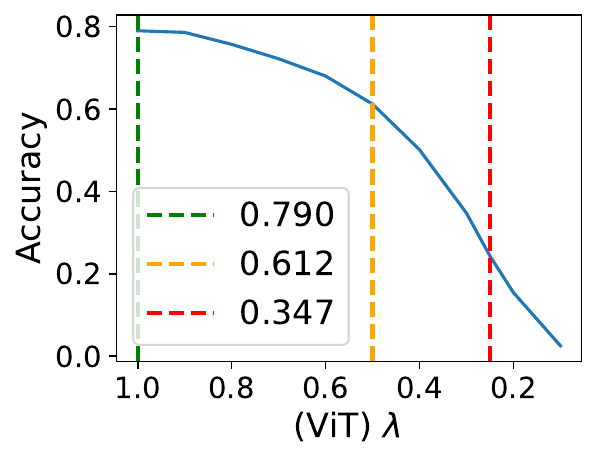}
\end{minipage}
\hfill
\begin{minipage}{0.24\linewidth}
    \centering
    \includegraphics[width=1.0\linewidth]{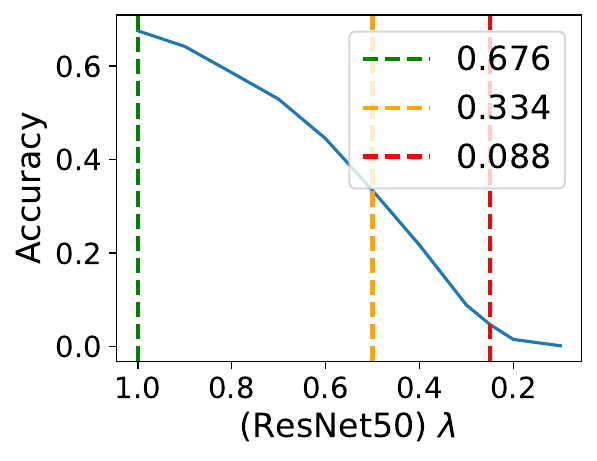}
\end{minipage}
\hfill
\begin{minipage}{0.24\linewidth}
    \centering
    \includegraphics[width=1.0\linewidth]{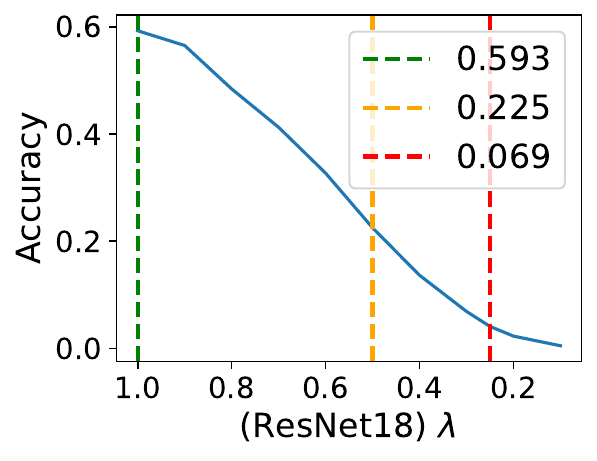}
\end{minipage}
\hfill
\begin{minipage}{0.24\linewidth}
    \centering
    \includegraphics[width=1.0\linewidth]{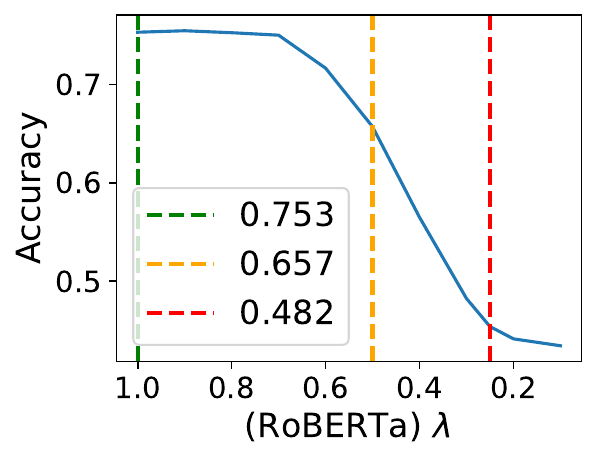}
\end{minipage}

\caption{
\textbf{Mild smoothing (\(\lambda \geq 0.5\)) preserves accuracy.}
We report accuracy at three key smoothing levels:
(\(\lambda = 1.0\), in green) the original, unsmoothed classifier;
(\(\lambda = 0.5\), in orange) a mildly smoothed classifier, the largest \(\lambda\) for which hard stability certificates can be obtained;
(\(\lambda = 0.25\), in red) a heavily smoothed classifier, where MuS can only certify at most a perturbation radius of size \(2\).
}
\label{fig:accuracy_vs_smoothing}
\end{figure}

\paragraph{Question 4: How well do mildly smoothed classifiers trade off accuracy?}
We analyze the impact of MuS smoothing on classifier accuracy in~\cref{fig:accuracy_vs_smoothing} and highlight three key values: the original, unmodified classifier accuracy (\(\lambda = 1.0\)), the largest smoothing parameter usable in the certification of hard stability (\(\lambda = 0.5\)), and the smoothing parameter used in many hard stability experiments of~\citep{xue2023stability}  (\(\lambda = 0.25\)).
We used \(64\) Bernoulli samples to compute smoothing (\cref{def:mus_operator}).
These results demonstrate the utility of mild smoothing.
In particular, transformers (ViT, RoBERTa) exhibit a more gradual decline in accuracy, likely because their training involves random masking.

\section{Related Work}

\paragraph{Feature-based Explanations}
Feature attributions are a popular class of explanation methods.
Early examples include gradient saliency~\citep{simonyan2013deep}, LIME~\citep{ribeiro2016should}, SHAP~\citep{lundberg2017unified}, Integrated Gradients~\citep{sundararajan2017axiomatic}, and SmoothGrad~\citep{smilkov2017smoothgrad}.
More recent works include DIME~\citep{lyu2022dime}, LAFA~\citep{zhang2022locally}, CAFE~\citep{dejl2023cafe}, DoRaR~\citep{qin2024comprehensive}, MFABA~\citep{zhu2024mfaba}, various Shapley value-based methods~\citep{sundararajan2020many}, and methods based on influence functions~\citep{koh2017understanding,basu2020influence}.
While feature attributions are commonly associated with vision models, they are also used in language~\citep{lyu2024towards} and time series modeling~\citep{schlegel2019towards}.
However, they have known limitations~\citep{bilodeau2024impossibility,slack2020fooling,duan2024evaluation,marzouk2025computational}.
We refer to~\citep{milani2024explainable,schwalbe2024comprehensive,nauta2023anecdotal} for general surveys, to~\citep{klauschen2024toward,patricio2023explainable} for surveys on explainability in medicine, and to~\citep{amarasinghe2023explainable,richmond2024explainable} for surveys on explainability in law.

\paragraph{Evaluating and Certifying Explanations}
There is much work on empirically evaluating feature attributions~\citep{dinu2020challenging,kindermans2019reliability,adebayo2018sanity,zhou2022feature,adebayo2022post,agarwal2022openxai,rong2022consistent,nauta2023anecdotal,jin2024fix}, with various notions of robustness~\citep{gan2022your,kamath2024rethinking}.
Probabilistic notions of robust explainability are explored in~\citep{ribeiro2018anchors,blanc2021provably,waldchen2021computational,wang2021probabilistic}, though stability is notable in that it is a form of one-sided robustness.
There is also growing interest in certified explanations.
For instance, certifying that an explanation is robust to adding~\citep{xue2023stability} and removing~\citep{lin2024robustness} features, that it is minimal~\citep{blanc2021provably,bassan2023towards}, or that the attribution scores are robustly ranked~\citep{goldwasser2024provably}.
A related notion of probabilistic guarantees exists for analyzing the explanation method itself~\citep{khan2024analyzing}, which quantifies how much the feature attribution changes as the input is perturbed.
However, the literature on certified explanations is still emergent.

\section{Discussion, Future Work, and Conclusion}


\paragraph{Discrete Robustness in Explainability}
Many perturbations relevant to explainability are inherently discrete, such as feature removal or token substitution.
This contrasts with continuous perturbations, e.g., Gaussian noise, which are more commonly studied in adversarial robustness literature.
This motivates the development of new techniques for discrete robustness, such as those inspired by Boolean analysis.
In our case, this approach enabled us to shift away from traditional Lipschitz-based techniques to provide an alternative analysis of robustness.
Our work highlights the potential of discrete methods in explainability.

\paragraph{Adversarial Robustness}
Our stability framework generalizes adversarial robustness.
Hard stability, the case where \(\tau_r = 1\), is certified robustness against an adversary adding up to \(r\) features.
However, this discrete, structural attack model differs from the continuous \(\ell_p\)-norm perturbations common in adversarial robustness.
Soft stability offers a more nuanced evaluation, where the stability rate \(\tau_r\) quantifies an explanation's success under random additive attacks, providing a richer characterization of its robustness.


\paragraph{Future Work}
Potential directions include adaptive smoothing based on feature importance and ranking~\citep{goldwasser2024provably}, as well as selectively smoothing only parts of the features~\citep{scholten2023hierarchical}.
One could also study stability-regularized training in relation to adversarial training.
Other directions include robust explanations through other families of probabilistic guarantees, such as those based on conformal prediction~\citep{angelopoulos2023conformal,carvalho2019machine,atanasova2024diagnostic,li2022pac}.
Additionally, it would be interesting to investigate how well explanation stability aligns with human evaluations of quality.



\paragraph{Conclusion}
Soft stability is a form of stability that enables fine-grained measures of explanation robustness to additive perturbations.
We introduce SCA to certify stability and show that it yields stronger guarantees than existing smoothing-based certifications, such as MuS.
Although SCA does not require smoothing, mild smoothing can improve stability at little cost to accuracy, and we use Boolean analytic tooling to explain this phenomenon.
We validate our findings with experiments on vision and language models across a range of explanation methods.

\paragraph{Acknowledgements}
This research was partially supported by the ARPA-H program on Safe and Explainable AI under the grant D24AC00253-00, by NSF award CCF 2313010, by the AI2050 program at Schmidt Sciences,
by an Amazon Research Award Fall 2023, by an OpenAI SuperAlignment grant, and Defense Advanced Research Projects Agency's (DARPA) SciFy program (Agreement No. HR00112520300). The views expressed are those of the author and do not reflect the official policy or position of the Department of Defense or the U.S. Government.


\bibliographystyle{plainnat}
\bibliography{references}

\newpage
\newpage
\appendix

\section{Analysis of Smoothing with Standard Techniques}
\label{sec:standard_analysis}

In this appendix, we analyze the smoothing operator \(M_\lambda\) using classical tools from Boolean function analysis.
Specifically, we study how smoothing redistributes the spectral mass of a function by examining its action on standard Fourier basis functions.
This sets up the foundation for our later motivation to introduce a more natural basis in~\cref{sec:monotone_analysis}.
First, recall the definition of the random masking-based smoothing operator.

\begin{definition}[MuS~\citep{xue2023stability} (Random Masking)]
\label{def:M_appendix}
For any classifier \(f: \mbb{R}^n \to \mbb{R}^m\) and smoothing parameter \(\lambda \in [0,1]\), define the random masking operator \(M_\lambda\) as:
\begin{equation}
    M_\lambda f(x) = \expval{z \sim \Bern(\lambda)^n} f(x \odot z),
    \quad \text{where \(z_1, \ldots, z_n \sim \Bern(\lambda)\) are i.i.d. samples.}
\end{equation}
\end{definition}

To study \(M_\lambda\) via Boolean function analysis, we fix the input \(x \in \mathbb{R}^n\) and view the masked classifier \(f_x(\alpha) = f(x \odot \alpha)\) as a Boolean function \(f_x : \{0,1\}^n \to \mathbb{R}^m\).
In particular, we have the following:
\begin{equation}
    M_\lambda f(x \odot \alpha)
    = M_\lambda f_x (\alpha)
    = M_\lambda f_{x \odot \alpha} (\mbf{1}_n).
\end{equation}

This relation is useful from an explainability perspective because it means that features not selected by \(\alpha\) (the \(x_i\) where \(\alpha_i = 0\)) will not be seen by the classifier.
In other words, this prevents a form of information leakage when evaluating the informativeness of a feature selection.

\subsection{Background on Boolean Function Analysis}
\label{sec:boolean_analysis_background}
A key approach in Boolean function analysis is to study functions of the form \(h: \{0,1\}^n \to \mbb{R}\) by their unique \textit{Fourier expansion}.
This is a linear combination indexed by the subsets \(S \subseteq [n]\) of form:
\begin{equation}
    h(\alpha) = \sum_{S \subseteq [n]} \widehat{h}(S) \chi_S (\alpha),
\end{equation}
where each \(\chi_S (\alpha)\) is a Fourier basis function, also called the standard basis function, with weight \(\widehat{h}(S)\).
These quantities are respectively defined as:
\begin{equation}
    \label{eqn:fourier_expansion_term_defs}
    \chi_S(\alpha) 
    = \prod_{i \in S} (-1)^{\alpha_i},
    \quad \chi_\emptyset (\alpha) = 1,
    \quad \widehat{h}(S) = \frac{1}{2^n} \sum_{\alpha \in \{0,1\}^n} h(\alpha) \chi_S (\alpha).
\end{equation}
The functions \(\chi_S : \{0,1\}^n \to \{\pm 1\}\) form an orthonormal basis on \(\{0,1\}^n\) in the sense that:
\begin{equation}
    \angles{\chi_S, \chi_T}
    = \expval{\alpha \sim \msf{Bern}(1/2)^n} \bracks*{\chi_S (\alpha) \chi_T (\alpha)}
    = \frac{1}{2^n} \sum_{\alpha \in \{0,1\}^n} \chi_S (\alpha) \chi_T (\alpha)
    = \begin{dcases}
        1 & \text{if \(S = T\)}, \\
        0 & \text{if \(S \neq T\)}.
    \end{dcases}
\end{equation}
Consequently, all of the \(2^n\) weights \(\widehat{h}(S)\) (one for each \(S \subseteq [n]\)) are uniquely determined by the \(2^n\) values of \(h(\alpha)\) (one for each \(\alpha \in \{0,1\}^n\)) under the linear relation \(\widehat{h}(S) = \angles{h, \chi_S}\) as in~\cref{eqn:fourier_expansion_term_defs}.
For example, one can check that the function \(h(\alpha_1, \alpha_2) = \alpha_1 \land \alpha_2\) is uniquely expressible in this basis as:
\begin{equation}
    h(\alpha_1, \alpha_2)
    = \frac{1}{4} \chi_\emptyset (\alpha)
        - \frac{1}{4} \chi_{\{1\}} (\alpha)
        - \frac{1}{4} \chi_{\{2\}} (\alpha)
        + \frac{1}{4} \chi_{\{1,2\}} (\alpha).
\end{equation}
We defer to~\citet{o2014analysis} for a more comprehensive introduction to Boolean function analysis.

\subsection{Basic Results in the Standard Basis}

We now study how smoothing affects stability by analyzing how \(M_\lambda\) transforms Boolean functions in the standard Fourier basis.
A common approach is to examine how \(M_\lambda\) acts on each basis function \(\chi_S\), and we show that smoothing causes a spectral mass shift from higher-order to lower-order terms.

\begin{lemma} \label{lem:mus_chi}
For any standard basis function \(\chi_S\) and \(\lambda \in [0,1]\),
\begin{equation}
    M_\lambda \chi_S (\alpha)
    = \sum_{T \subseteq S}
    \lambda^{\abs{T}} (1 - \lambda)^{\abs{S - T}}  \chi_T (\alpha).
\end{equation}
\end{lemma}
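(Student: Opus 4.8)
The plan is to compute $M_\lambda \chi_S(\alpha)$ directly from the definition of the random masking operator and exploit the product structure of $\chi_S$. Recall that $M_\lambda \chi_S(\alpha) = \mathbb{E}_{z \sim \mathrm{Bern}(\lambda)^n}[\chi_S(\alpha \odot z)]$, where $\alpha \odot z$ has $i$-th coordinate $\alpha_i z_i$. Since $\chi_S(\beta) = \prod_{i \in S}(-1)^{\beta_i}$ depends only on the coordinates indexed by $S$, and the $z_i$ are independent, the expectation factors as $\prod_{i \in S} \mathbb{E}_{z_i \sim \mathrm{Bern}(\lambda)}[(-1)^{\alpha_i z_i}]$. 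This reduces the whole computation to understanding a single factor.

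The key step is the per-coordinate calculation. For a fixed $i \in S$: if $\alpha_i = 0$, then $(-1)^{\alpha_i z_i} = 1$ regardless of $z_i$, so the factor is $1$; if $\alpha_i = 1$, then $(-1)^{\alpha_i z_i} = (-1)^{z_i}$, which equals $+1$ with probability $1-\lambda$ (when $z_i = 0$) and $-1$ with probability $\lambda$ (when $z_i = 1$), giving expected value $(1-\lambda) - \lambda = 1 - 2\lambda$. A clean way to unify both cases is to write the $i$-th factor as $1 - \lambda + \lambda(-1)^{\alpha_i}$, since this equals $1$ when $\alpha_i = 0$ and $1 - 2\lambda$ when $\alpha_i = 1$. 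Therefore $M_\lambda \chi_S(\alpha) = \prod_{i \in S}\bigl(1 - \lambda + \lambda(-1)^{\alpha_i}\bigr) = \prod_{i \in S}\bigl((1-\lambda)\chi_\emptyset(\alpha) + \lambda \chi_{\{i\}}(\alpha)\bigr)$.

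The final step is to expand this product over $S$. Distributing, each term in the expansion corresponds to choosing, for each $i \in S$, either the $(1-\lambda)$ summand or the $\lambda(-1)^{\alpha_i}$ summand; let $T \subseteq S$ index the coordinates where we pick the $\lambda(-1)^{\alpha_i}$ factor. Such a term contributes $\lambda^{|T|}(1-\lambda)^{|S \setminus T|} \prod_{i \in T}(-1)^{\alpha_i} = \lambda^{|T|}(1-\lambda)^{|S-T|}\chi_T(\alpha)$. Summing over all $T \subseteq S$ yields exactly the claimed identity.

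I do not anticipate a genuine obstacle here — the result is essentially a bookkeeping exercise once the independence-and-factorization observation is made. The only place requiring a little care is handling the $\alpha_i = 0$ versus $\alpha_i = 1$ cases uniformly; the algebraic identity $1 - \lambda + \lambda(-1)^{\alpha_i}$ for the single-coordinate expectation is the device that makes the subsequent expansion clean, and writing it as $(1-\lambda)\chi_\emptyset + \lambda\chi_{\{i\}}$ makes the final product-expansion step transparent.
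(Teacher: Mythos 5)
Your proof is correct and follows essentially the same route as the paper's: factor the expectation over $S$ by independence, evaluate each single-coordinate expectation as $(1-\lambda) + \lambda(-1)^{\alpha_i}$, and expand the product over $S$ into a sum indexed by $T \subseteq S$. The only cosmetic difference is that you derive the per-coordinate factor by explicit casework on $\alpha_i \in \{0,1\}$ before unifying, whereas the paper writes the unified expression directly.
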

\begin{proof}
We first expand the definition of \(\chi_S (\alpha)\) to derive:
\begin{align}
    M_\lambda \chi_S (\alpha)
    &= \expval{z} \prod_{i \in S} (-1)^{\alpha_i z_i} \\
    &= \prod_{i \in S} \expval{z} (-1)^{\alpha_i z_i}
        \tag{by independence of \(z_1, \ldots, z_n\)} \\
    &= \prod_{i \in S} \bracks*{(1 - \lambda) + \lambda (-1)^{\alpha_i}},
\end{align}
We then use the distributive property (i.e., expanding products over sums) to rewrite the product \(\prod_{i \in S} (\cdots)\) as a summation over \(T \subseteq S\) to get
\begin{align}
    M_\lambda \chi_S (\alpha)
    &= \sum_{T \subseteq S}
        \parens*{\prod_{j \in S - T} (1 - \lambda)}
        \parens*{\prod_{i \in T} \lambda (-1)^{\alpha_i}} \\
    &= \sum_{T \subseteq S} (1 - \lambda)^{\abs{S - T}} \lambda^{\abs{T}} \chi_T (\alpha),
\end{align}
where \(T\) acts like an enumeration over \(\{0,1\}^n\) and recall that \(\chi_T (\alpha) = \prod_{i \in T} (\alpha)\).
\end{proof}

In other words, \(M_\lambda\) redistributes the Fourier weight at each basis \(\chi_S\) over to the \(2^{\abs{S}}\) subsets \(T \subseteq S\) according to a binomial distribution \(\msf{Bin}(\abs{S}, \lambda)\).
Since this redistribution is linear in the input, we can visualize \(M_\lambda\) as a \(\mbb{R}^{2^n \times 2^n}\) upper-triangular matrix whose entries are indexed by \(T,S \subseteq [n]\), where
\begin{equation}
    (M_\lambda)_{T, S} = \begin{dcases}
        \lambda^{\abs{T}} (1 - \lambda)^{\abs{S - T}} &\text{if \(T \subseteq S\)}, \\
        0 &\text{otherwise}.
    \end{dcases}
\end{equation}
Using the example of \(h(\alpha_1, \alpha_2) = \alpha_1 \land \alpha_2\), the Fourier coefficients of \(M_\lambda h\) may be written as:
\begin{equation}
    \begin{bmatrix}
        \widehat{M_\lambda h}(\emptyset) \\
        \widehat{M_\lambda h}(\{1\}) \\
        \widehat{M_\lambda h}(\{2\}) \\
        \widehat{M_\lambda h}(\{1,2\})
    \end{bmatrix}
    = \begin{bmatrix}
        1 & (1 - \lambda) & (1 - \lambda) & (1 - \lambda)^2 \\
        & \lambda & & \lambda (1 - \lambda) \\
        & & \lambda & \lambda (1 - \lambda) \\
        & & & \lambda^2
    \end{bmatrix}
    \begin{bmatrix}
        \widehat{h} (\emptyset) \\
        \widehat{h} (\{1\}) \\
        \widehat{h} (\{2\}) \\
        \widehat{h} (\{1,2\})
    \end{bmatrix}
    =
    \frac{1}{4}
    \begin{bmatrix}
        (2 - \lambda)^2 \\
        -\lambda (2 - \lambda) \\
        -\lambda (2 - \lambda) \\
        \lambda^2
    \end{bmatrix}
\end{equation}
where recall that \(\widehat{h}(S) = 1/4\) for all \(S \subseteq \{1,2\}\).
For visualization, it is useful to sort the rows and columns of \(M_\lambda\) by inclusion and partition them by degree.
Below is an illustrative expansion of \(M_\lambda \in \mbb{R}^{8 \times 8}\) for \(n = 3\), sorted by inclusion and partitioned by degree:
\begin{equation} \label{eqn:m_matrix_example}
\begin{array}{c||c|ccc|ccc|c}
    & \emptyset & \{1\} & \{2\} & \{3\} & \{1,2\} & \{1,3\} & \{2,3\} & \{1,2,3\} \\ \hline \hline
\emptyset & 1 & (1-\lambda) & (1-\lambda) & (1-\lambda) & (1-\lambda)^2 & (1-\lambda)^2 & (1-\lambda)^2 & (1-\lambda)^3 \\ \hline
\{1\}     & & \lambda & & & \lambda(1-\lambda) & \lambda(1-\lambda) & & \lambda(1-\lambda)^2 \\
\{2\}     & & & \lambda & & \lambda(1-\lambda) & & \lambda(1-\lambda) & \lambda(1-\lambda)^2 \\
\{3\}     & & & & \lambda & & \lambda(1-\lambda) & \lambda(1-\lambda) & \lambda(1-\lambda)^2 \\ \hline
\{1,2\}   & & & & & \lambda^2 & & & \lambda^2 (1-\lambda) \\
\{1,3\}   & & & & & & \lambda^2 & & \lambda^2 (1-\lambda) \\
\{2,3\}   & & & & & & & \lambda^2 & \lambda^2 (1-\lambda) \\ \hline
\{1,2,3\} & & & & & & & & \lambda^3 \\
\end{array}
\end{equation}

Because the columns of \(M_\lambda\) sum to \(1\), we have the identity:
\begin{equation}
    \sum_{T \subseteq [n]} \widehat{M_\lambda h}(T) = \sum_{S \subseteq [n]} \widehat{h}(S),
    \quad \text{for any function \(h: \{0,1\}^n \to \mbb{R}\)}.
\end{equation}
Moreover, \(M_\lambda\) may be interpreted as a downshift operator in the sense that: for each \(T \subseteq [n]\), the Fourier coefficient \(\widehat{M_\lambda h}(T)\) depends only on those of \(\widehat{h}(S)\) for \(S \supseteq T\).
The following result gives a more precise characterization of each \(\widehat{M_\lambda h} (T)\) in the standard basis.

\begin{lemma} \label{lem:mus_h}
For any function \(h: \{0,1\}^n \to \mbb{R}\) and \(\lambda \in [0,1]\),
\begin{equation}
    M_\lambda h(\alpha)
    = \sum_{T \subseteq [n]} \widehat{M_\lambda h}(T) \chi_T (\alpha),
    \quad \text{where} \enskip
    \widehat{M_\lambda h}(T)
    = \lambda^{\abs{T}} \sum_{S \supseteq T} (1 - \lambda)^{\abs{S - T}} \widehat{h}(S).
\end{equation}
\end{lemma}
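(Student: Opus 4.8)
The plan is to combine Lemma~\ref{lem:mus_chi} with linearity of the smoothing operator $M_\lambda$ and then perform a single change of summation order. Starting from the Fourier expansion $h(\alpha) = \sum_{S \subseteq [n]} \widehat{h}(S) \chi_S(\alpha)$, I would apply $M_\lambda$ term-by-term (using that $M_\lambda$ is linear, since it is an average over random maskings) to obtain $M_\lambda h(\alpha) = \sum_{S \subseteq [n]} \widehat{h}(S)\, M_\lambda \chi_S(\alpha)$. Then substitute the formula from Lemma~\ref{lem:mus_chi}, $M_\lambda \chi_S(\alpha) = \sum_{T \subseteq S} \lambda^{\abs{T}}(1-\lambda)^{\abs{S-T}} \chi_T(\alpha)$, which gives a double sum over pairs $(S,T)$ with $T \subseteq S \subseteq [n]$.

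The key step is then to swap the order of summation: instead of summing over $S$ first and $T \subseteq S$ second, sum over $T \subseteq [n]$ first and $S \supseteq T$ second. This yields
\begin{equation*}
    M_\lambda h(\alpha) = \sum_{T \subseteq [n]} \Bigl( \lambda^{\abs{T}} \sum_{S \supseteq T} (1-\lambda)^{\abs{S-T}} \widehat{h}(S) \Bigr) \chi_T(\alpha).
\end{equation*}
Reading off the coefficient of $\chi_T(\alpha)$ and invoking uniqueness of the Fourier expansion, I would identify $\widehat{M_\lambda h}(T) = \lambda^{\abs{T}} \sum_{S \supseteq T} (1-\lambda)^{\abs{S-T}} \widehat{h}(S)$, which is exactly the claim.

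There is no real obstacle here: the only things to be careful about are justifying the interchange of $M_\lambda$ with the (finite) Fourier sum, which is immediate since everything is a finite linear combination, and the reindexing of the double sum, which is the standard identity $\sum_{S} \sum_{T \subseteq S} = \sum_{T} \sum_{S \supseteq T}$ over subsets of $[n]$. One could alternatively phrase this as matrix-vector multiplication by the upper-triangular matrix $(M_\lambda)_{T,S}$ displayed in~\eqref{eqn:m_matrix_example}, reading the coefficient vector of $M_\lambda h$ as $M_\lambda$ applied to the coefficient vector of $h$; that viewpoint makes the ``downshift'' structure transparent but the computation is the same. I would write the short version using linearity and the summation swap.
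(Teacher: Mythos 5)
Your proposal is correct and follows exactly the same route as the paper: apply $M_\lambda$ term-by-term to the Fourier expansion using linearity, substitute Lemma~\ref{lem:mus_chi}, and swap the order of the double sum $\sum_{S}\sum_{T \subseteq S} = \sum_{T}\sum_{S \supseteq T}$ to read off $\widehat{M_\lambda h}(T)$. The matrix viewpoint you mention as an alternative is also present in the paper's surrounding discussion around~\eqref{eqn:m_matrix_example}, so nothing is missing.
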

\begin{proof}
This follows by analyzing the \(T\)-th row of \(M_\lambda\) as in~\cref{eqn:m_matrix_example}.
Specifically, we have:
\begin{align}
    M_\lambda h(\alpha)
    &= \sum_{S \subseteq [n]} \widehat{h}(S) M_\lambda \chi_S (\alpha) \\
    &= \sum_{S \subseteq [n]} \widehat{h}(S) \sum_{T \subseteq S} \lambda^{\abs{T}} (1 - \lambda)^{\abs{S - T}}  \chi_T (\alpha)
    \tag{\cref{lem:mus_chi}}\\
    &= \sum_{T \subseteq [n]} \chi_T (\alpha) 
    \underbrace{\sum_{S \supseteq T} \lambda ^{\abs{T}} (1 - \lambda)^{\abs{S - T}} \widehat{h}(S)}_{\widehat{M_\lambda h}(T)},
\end{align}
where the final step follows by noting that each \(\widehat{M_\lambda h}(T)\) depends only on \(\widehat{h}(S)\) for \(S \supseteq T\).
\end{proof}

The expression derived in~\cref{lem:mus_h} shows how spectral mass gets redistributed from higher-order to lower-order terms.
To understand how smoothing affects classifier robustness, it is helpful to quantify how much of the original function's complexity (i.e., higher-order interactions) survives after smoothing.
The following result shows how smoothing suppresses higher-order interactions by bounding how much mass survives in terms of degree \(\geq k\).

\begin{theorem}[Higher-order Spectral Mass After Smoothing]
\label{thm:mus_mass_contraction}
For any function \(h: \{0,1\}^n \to \mbb{R}\), smoothing parameter \(\lambda \in [0,1]\), and \(0 \leq k \leq n\),
\begin{equation}
    \sum_{T: \abs{T} \geq k} \abs{\widehat{M_\lambda h}(T)}
    \leq 
        \prob{X \sim \msf{Bin}(n, \lambda)}\bracks{X \geq k}
        \sum_{S: \abs{S} \geq k} \abs{\widehat{h}(S)}.
\end{equation}
\end{theorem}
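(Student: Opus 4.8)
The plan is to begin from the closed form for the smoothed Fourier coefficients established in \cref{lem:mus_h}, namely $\widehat{M_\lambda h}(T) = \lambda^{\abs{T}} \sum_{S \supseteq T} (1 - \lambda)^{\abs{S - T}} \widehat{h}(S)$, and to push absolute values inside via the triangle inequality:
\[
\sum_{T:\,\abs{T} \geq k} \abs{\widehat{M_\lambda h}(T)}
\;\leq\; \sum_{T:\,\abs{T} \geq k} \lambda^{\abs{T}} \sum_{S \supseteq T} (1 - \lambda)^{\abs{S - T}} \abs{\widehat{h}(S)}.
\]
First I would swap the order of summation so that $S$ becomes the outer index and $T$ ranges over subsets of $S$ of size at least $k$; since such a $T$ exists only when $\abs{S} \geq k$, the outer sum is automatically restricted to $\abs{S} \geq k$. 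This rewrites the right-hand side as $\sum_{S:\,\abs{S} \geq k} \abs{\widehat{h}(S)}\, w_S$, where $w_S = \sum_{T \subseteq S,\, \abs{T} \geq k} \lambda^{\abs{T}} (1 - \lambda)^{\abs{S - T}}$.

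Next I would identify $w_S$ as a binomial tail: grouping the subsets $T \subseteq S$ by their cardinality $j$ gives $w_S = \sum_{j \geq k} \binom{\abs{S}}{j} \lambda^j (1 - \lambda)^{\abs{S} - j} = \prob{Y \sim \msf{Bin}(\abs{S}, \lambda)}\bracks{Y \geq k}$. The last step is a stochastic-dominance bound: whenever $\abs{S} \leq n$, the law $\msf{Bin}(n, \lambda)$ equals that of $\msf{Bin}(\abs{S}, \lambda)$ plus an independent $\msf{Bin}(n - \abs{S}, \lambda)$, so adding a nonnegative independent summand can only increase the probability of clearing the threshold $k$; hence $w_S \leq \prob{X \sim \msf{Bin}(n, \lambda)}\bracks{X \geq k}$. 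Pulling this uniform bound out of the sum over $S$ yields the claimed inequality.

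The computation is short, and the only points requiring a little care are (i) the summation swap, in particular noting that the constraint $\abs{T} \geq k$ together with $T \subseteq S$ forces $\abs{S} \geq k$, so that no mass from low-degree terms of $h$ leaks into the bound, and (ii) the monotonicity of binomial tails in the number of trials, which is the single external fact invoked and which the independent-sum decomposition settles in one line. I therefore do not anticipate any substantive obstacle; the theorem is essentially a bookkeeping exercise on top of \cref{lem:mus_h}.
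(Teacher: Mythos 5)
Your proof is correct and follows essentially the same route as the paper's: apply \cref{lem:mus_h} with the triangle inequality, swap the order of summation to collect the weight on each \(\abs{\widehat{h}(S)}\) as a binomial tail \(\prob{Y \sim \msf{Bin}(\abs{S},\lambda)}[Y \geq k]\), and then bound by monotonicity of the binomial tail in the number of trials. The only cosmetic difference is that you spell out the independent-sum (stochastic-dominance) justification for the final tail inequality, which the paper simply asserts.
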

\begin{proof}
We first apply~\cref{lem:mus_h} to expand each \(\widehat{M_\lambda h}(T)\) and derive
\begin{align}
    \sum_{T : \abs{T} \geq k} \abs{\widehat{M_\lambda h}(T)}
    &\leq \sum_{T : \abs{T} \geq k} \sum_{S \supseteq T} \lambda^{\abs{T}} (1 - \lambda)^{\abs{S - T}} \abs{\widehat{h}(S)} \\
    &= \sum_{S : \abs{S} \geq k} \abs{\widehat{h}(S)} \underbrace{\sum_{j = k}^{\abs{S}} \binom{\abs{S}}{j} \lambda^{j} (1 - \lambda)^{\abs{S} - j}}_{\prob{Y \sim \msf{Bin}(\abs{S}, \lambda)} \bracks{Y \geq k}}
\end{align}
where we re-indexed the summations to track the contribution of each \(\abs{\widehat{h}(S)}\) for \(\abs{S} \geq k\).
To yield the desired result, we next apply the following inequality of binomial tail CDFs given \(\abs{S} \leq n\):
\begin{equation}
    \prob{Y \sim \msf{Bin}(\abs{S}, \lambda)}\bracks{Y \geq k}
    \leq
    \prob{X \sim \msf{Bin}(n, \lambda)}\bracks{X \geq k}.
\end{equation}
\end{proof}

Our analyses with respect to the standard basis provide a first step towards understanding the random masking operator \(M_\lambda\).
However, the weight-mixing from our initial calculations suggests that the standard basis may be algebraically challenging to work with.

\subsection{Analysis in the Biased Fourier Basis}

While analysis on the standard Fourier basis reveals interesting properties about \(M_\lambda\), it suggests that this may not be the natural choice of basis in which to analyze random masking.
Principally, this is because each \(M_\lambda \chi_S\) is expressed as a linear combination of \(\chi_T\) where \(T \subseteq S\).
By ``natural'', we instead aim to express the image of \(M_\lambda\) as a single term.
One partial attempt is an extension of the standard basis, known as the \(p\)-biased basis, which is defined as follows.

\begin{definition}[\(p\)-Biased Basis]
\label{def:pbiased_basis_functions}
For each subset \(S \subseteq [n]\), define its \(p\)-biased function basis as:
\begin{equation}
    \chi_S ^p (\alpha) = \prod_{i \in S} \frac{p - \alpha_i}{\sqrt{p - p^2}}.
\end{equation}
\end{definition}

Observe that when \(p = 1/2\), this is the standard basis discussed earlier.
The \(p\)-biased basis is orthonormal with respect to the \(p\)-biased distribution on \(\{0,1\}^n\) in that:
\begin{equation}
    \expval{\alpha \sim \Bern(p)^n} \bracks*{\chi_S ^p (\alpha) \chi_T ^p (\alpha)} = \begin{dcases}
        1 &\text{if \(S = T\)}, \\
        0 &\text{if \(S \neq T\)}.
    \end{dcases}
\end{equation}

On the \(p\)-biased basis, smoothing with a well-chosen \(\lambda\) induces a change-of-basis effect.

\begin{lemma}[Change-of-Basis] \label{lem:M_basis_change}
For any \(p\)-biased basis function \(\chi_S ^p\) and \(\lambda \in [p, 1]\),
\begin{equation}
    M_\lambda \chi_S ^p (\alpha)
        = \parens*{\frac{\lambda - p}{1 - p}}^{\abs{S}/2} \chi_S ^{p/\lambda} (\alpha).
\end{equation}
\end{lemma}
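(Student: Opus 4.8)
The plan is to exploit the product structure of \(\chi_S^p\) together with the independence of the masking coordinates \(z_1, \dots, z_n\), reducing the claim to a one-coordinate identity. First I would expand the definition of the smoothing operator,
\[
M_\lambda \chi_S^p (\alpha) = \expval{z \sim \Bern(\lambda)^n} \prod_{i \in S} \frac{p - \alpha_i z_i}{\sqrt{p - p^2}},
\]
and, since the integrand factors over \(i \in S\) and the \(z_i\) are i.i.d., push the expectation inside the product, so that \(M_\lambda \chi_S^p(\alpha) = \prod_{i \in S} \frac{\expval{z_i \sim \Bern(\lambda)} (p - \alpha_i z_i)}{\sqrt{p - p^2}}\). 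The inner expectation is immediate: \(\expval{z_i \sim \Bern(\lambda)} (p - \alpha_i z_i) = p - \lambda \alpha_i\).

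Next I would recast \(p - \lambda \alpha_i\) into the shape of a \((p/\lambda)\)-biased factor. Writing \(p - \lambda \alpha_i = \lambda (p/\lambda - \alpha_i)\) and comparing against \(\chi_S^{p/\lambda}(\alpha) = \prod_{i \in S} \frac{p/\lambda - \alpha_i}{\sqrt{p/\lambda - (p/\lambda)^2}}\), each factor of \(M_\lambda \chi_S^p\) equals the matching factor of \(\chi_S^{p/\lambda}\) times the constant \(\lambda \cdot \frac{\sqrt{p/\lambda - (p/\lambda)^2}}{\sqrt{p - p^2}}\). A short simplification, using \(p/\lambda - (p/\lambda)^2 = p(\lambda - p)/\lambda^2\) and \(p - p^2 = p(1 - p)\), collapses this constant to \(\sqrt{(\lambda - p)/(1 - p)}\); taking the \(\abs{S}\)-fold product (one factor per \(i \in S\)) then yields the claimed scalar \(\parens*{\frac{\lambda - p}{1 - p}}^{\abs{S}/2}\).

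I do not expect a genuine obstacle — the computation is elementary once the expectation is pushed through the product. The two points that need care are: (i) justifying the interchange of expectation and product, which is valid because the \(z_i\) are independent; and (ii) the hypothesis \(\lambda \in [p, 1]\), which is precisely what makes the identity meaningful — \(\lambda \geq p\) guarantees \((\lambda - p)/(1 - p) \geq 0\) so the fractional power is real, and simultaneously \(p/\lambda \leq 1\) so that \(\chi_S^{p/\lambda}\) is itself a legitimate biased basis function. I would flag the two boundary cases: at \(\lambda = 1\) the identity correctly reduces to \(M_1 \chi_S^p = \chi_S^p\) (indeed \(M_1\) is the identity), whereas at \(\lambda = p\) the basis \(\chi_S^{p/\lambda}\) degenerates and the statement should be read for \(\lambda \in (p,1]\) or via a limiting argument. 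Finally, since the product runs only over \(S\), coordinates outside \(S\) are untouched, consistent with \(M_\lambda \chi_\emptyset = \chi_\emptyset = 1\).
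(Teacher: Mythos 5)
Your proof follows the paper's argument step by step: push the expectation through the product by independence, compute the one-coordinate expectation \(p - \lambda\alpha_i\), factor out \(\lambda\), and rebalance the normalization constant to recognize a \((p/\lambda)\)-biased factor, with the scalar \(\lambda\sqrt{(p/\lambda)-(p/\lambda)^2}/\sqrt{p-p^2}\) simplifying to \(\sqrt{(\lambda-p)/(1-p)}\). Your added observations on where \(\lambda \geq p\) is used and on the boundary cases \(\lambda = 1\) (identity) and \(\lambda = p\) (degenerate denominator in \(\chi_S^{p/\lambda}\)) are correct and a useful supplement the paper does not spell out, but they do not change the argument.
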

\begin{proof}
Expanding the definition of \(M_\lambda\), we first derive:
\begin{align}
    M_\lambda \chi_S ^p (\alpha)
    &= \expval{z \sim \Bern(\lambda)^n} \bracks*{\prod_{i \in S} \frac{p - \alpha_i z_i}{\sqrt{p - p^2}}} \\
    &= \prod_{i \in S} \expval{z} \bracks*{\frac{p - \alpha_i z_i}{\sqrt{p - p^2}}} 
        \tag{by independence of \(z_1, \ldots, z_n\)} \\
    &= \prod_{i \in S} \frac{p - \lambda \alpha_i}{\sqrt{p - p^2}},
\end{align}
We then rewrite the above in terms of a \((p/\lambda)\)-biased basis function as follows:
\begin{align}
    M_\lambda \chi_S ^p (\alpha)
    &= \prod_{i \in S} \lambda \frac{(p/\lambda) - \alpha_i}{\sqrt{p - p^2}} \\
    &= \prod_{i \in S} \lambda \frac{\sqrt{(p/\lambda) - (p/\lambda)^2}}{\sqrt{p - p^2}} \frac{(p/\lambda) - \alpha_i}{\sqrt{(p/\lambda) - (p/\lambda)^2}} \tag{\(\lambda \geq p\)} \\
    &= \prod_{i \in S} \sqrt{\frac{\lambda - p}{1 - p}} \frac{(p/\lambda) - \alpha_i}{\sqrt{(p/\lambda) - (p/\lambda)^2}} \\
    &= \parens*{\frac{\lambda - p}{1 - p}}^{\abs{S}/2} \underbrace{\prod_{i \in S} \frac{(p/\lambda) - \alpha_i}{\sqrt{(p/\lambda) - (p/\lambda)^2}}}_{\chi_S ^{p/\lambda} (\alpha)}
\end{align}
\end{proof}

When measured with respect to this changed basis, \(M_\lambda\) provably contracts the variance.

\begin{theorem}[Variance Reduction]
For any function \(h: \{0,1\}^n \to \mbb{R}\) and \(\lambda \in [p, 1]\),
\begin{equation}
    \variance{\alpha \sim \Bern(p/\lambda)^n} \bracks{M_\lambda h (\alpha)}
    \leq \parens*{\frac{\lambda - p}{1 - p}} \variance{\alpha \sim \Bern(p)^n} \bracks{h(\alpha)}.
\end{equation}
If the function is centered at \(\mbb{E}_{\alpha \sim \msf{Bern}(p)^n} \bracks{h(\alpha)} = 0\), then we also have:
\begin{equation}
    \expval{\alpha \sim \Bern(p/\lambda)^n} \bracks*{M_\lambda h(\alpha) ^2}
    \leq \expval{\alpha \sim \Bern(p)} \bracks*{h(\alpha)^2}.
\end{equation}
\end{theorem}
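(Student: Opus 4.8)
The plan is to diagonalize $M_\lambda$ using the change-of-basis identity in \cref{lem:M_basis_change} and then read off each side of the inequality via Parseval's identity, taking care to use, for each side, the orthonormal basis matched to its measure. First I would expand $h$ in the $p$-biased basis, $h = \sum_{S \subseteq [n]} \widehat{h}^p(S)\,\chi_S^p$ with $\widehat{h}^p(S) = \expval{\alpha \sim \Bern(p)^n}\bracks{h(\alpha)\,\chi_S^p(\alpha)}$, so that orthonormality of $\{\chi_S^p\}$ under $\Bern(p)^n$ gives $\expval{\alpha \sim \Bern(p)^n}\bracks{h(\alpha)^2} = \sum_S \widehat{h}^p(S)^2$. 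Since $M_\lambda$ is linear, applying it termwise and invoking \cref{lem:M_basis_change} yields
\begin{equation*}
    M_\lambda h = \sum_{S \subseteq [n]} \widehat{h}^p(S)\, \left(\frac{\lambda - p}{1 - p}\right)^{\abs{S}/2} \chi_S^{p/\lambda}.
\end{equation*}
Because $\{\chi_S^{p/\lambda}\}$ is orthonormal with respect to $\Bern(p/\lambda)^n$ --- the exact distribution appearing on the left-hand side of the theorem --- Parseval's identity gives $\expval{\alpha \sim \Bern(p/\lambda)^n}\bracks{M_\lambda h(\alpha)^2} = \sum_S \widehat{h}^p(S)^2 (\tfrac{\lambda-p}{1-p})^{\abs{S}}$.

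Next I would account for the mean. Since $\chi_\emptyset^p \equiv \chi_\emptyset^{p/\lambda} \equiv 1$ and the scaling factor equals $1$ at $S=\emptyset$, the degree-zero coefficient is preserved, so $\expval{\alpha \sim \Bern(p/\lambda)^n}\bracks{M_\lambda h(\alpha)} = \widehat{h}^p(\emptyset) = \expval{\alpha \sim \Bern(p)^n}\bracks{h(\alpha)}$; that is, $M_\lambda$ does not move the mean. Subtracting the $S=\emptyset$ term from both Parseval sums gives $\variance{\alpha \sim \Bern(p/\lambda)^n}\bracks{M_\lambda h(\alpha)} = \sum_{S \neq \emptyset} \widehat{h}^p(S)^2 (\tfrac{\lambda-p}{1-p})^{\abs{S}}$ and $\variance{\alpha \sim \Bern(p)^n}\bracks{h(\alpha)} = \sum_{S \neq \emptyset} \widehat{h}^p(S)^2$. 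Because $\lambda \in [p,1]$ forces $\tfrac{\lambda-p}{1-p} \in [0,1]$, every nonempty $S$ satisfies $(\tfrac{\lambda-p}{1-p})^{\abs{S}} \le \tfrac{\lambda-p}{1-p}$, and pulling this factor out of the sum proves the first inequality. The centered case drops out the same way: if $\expval{\alpha \sim \Bern(p)^n}\bracks{h(\alpha)} = 0$ then $\widehat{h}^p(\emptyset) = 0$, so $\expval{\alpha \sim \Bern(p)^n}\bracks{h(\alpha)^2} = \sum_{S \neq \emptyset} \widehat{h}^p(S)^2$, and using $(\tfrac{\lambda-p}{1-p})^{\abs{S}} \le 1$ bounds $\expval{\alpha \sim \Bern(p/\lambda)^n}\bracks{M_\lambda h(\alpha)^2}$ by $\expval{\alpha \sim \Bern(p)^n}\bracks{h(\alpha)^2}$.

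I expect the only delicate points to be bookkeeping rather than conceptual. One must keep the two probability measures straight --- the squared $L^2$ norm of $M_\lambda h$ is measured under $\Bern(p/\lambda)^n$, not $\Bern(p)^n$ --- and confirm $p/\lambda \in (0,1)$ so that $\chi_S^{p/\lambda}$ and $\Bern(p/\lambda)^n$ are well defined, which holds for $\lambda \in (p,1]$. The boundary $\lambda = p$ is a harmless degenerate case: there $\Bern(p/\lambda)^n = \Bern(1)^n$ is a point mass, so the left-hand side is trivially $0$, matching the right-hand side, which also vanishes. Beyond these, the argument is a direct consequence of \cref{lem:M_basis_change} together with two applications of Parseval, so I anticipate no real obstacle.
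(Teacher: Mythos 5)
Your proof is correct and follows essentially the same route as the paper's: expand $h$ in the $p$-biased basis, apply the change-of-basis lemma termwise, read off both second moments via Parseval with respect to $\Bern(p)^n$ and $\Bern(p/\lambda)^n$ respectively, and pull out the factor $\frac{\lambda-p}{1-p}$ on nonempty $S$. The only additions beyond the paper's argument are your explicit remarks that $M_\lambda$ preserves the degree-zero coefficient (and hence the mean) and that $\lambda = p$ is a degenerate-but-consistent boundary case; these are sensible bookkeeping notes rather than a different approach.
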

\begin{proof}
We use the previous results to compute:
\begin{align}
    \variance{\alpha \sim \Bern(p/\lambda)^n} \bracks{M_\lambda h (\alpha)}
    &= \variance{\alpha \sim \Bern(p/\lambda)^n} \bracks*{M_\lambda \sum_{S \subseteq [n]} \widehat{h}(S) \chi_S ^p (\alpha)}
        \tag{by unique \(p\)-biased representation of \(h\)} \\
    &= \variance{\alpha \sim \Bern(p/\lambda)^n} \bracks*{
        \sum_{S \subseteq [n]}
        \parens*{\frac{\lambda - p}{1 - p}}^{\abs{S}/2} \widehat{h}(S) \chi_S ^{p/\lambda} (\alpha)
    } \tag{by linearity and \cref{lem:M_basis_change}} \\
    &= \sum_{S \neq \emptyset} \parens*{\frac{\lambda - p}{1 - p}}^{\abs{S}} \widehat{h}(S)^2
        \tag{Parseval's by orthonormality of \(\chi_S ^{p/\lambda}\)} \\
    &\leq \parens*{\frac{\lambda - p}{1 - p}} \sum_{S \neq \emptyset} \widehat{h}(S)^2 \tag{\(0 \leq \frac{\lambda - p}{1 - p} \leq 1\) because \(p \leq \lambda \leq 1\)} \\
    &= \parens*{\frac{\lambda - p}{1 - p}} \variance{\alpha \sim \Bern(p)^n} [h(\alpha)]
        \tag{Parseval's by orthonormality of \(\chi_S ^p\)}
\end{align}
leading to the first desired inequality.
For the second inequality, we continue from the above to get:
\begin{align}
    \expval{\alpha \sim \msf{Bern}(p)^n} \bracks{h(\alpha)^2}
        &= \widehat{h}(\emptyset)^2 +
        \underbrace{
            \sum_{S \neq \emptyset} \widehat{h}(S)^2
        }_{
        \variance{} \bracks{h(\alpha)}},
    \\
    \expval{\alpha \sim \msf{Bern}(p/\lambda)^n} \bracks{M_\lambda h(\alpha)^2}
        &= \widehat{M_\lambda h}(\emptyset)^2 +
        \underbrace{\sum_{S \neq \emptyset} \widehat{M_\lambda h}(S)^2}_{
        \variance{} \bracks{M_\lambda h(\alpha)}
        },
\end{align}
where recall that \(\widehat{h}(\emptyset) = \mbb{E}_{\alpha} \bracks{h(\alpha)} = 0\) by assumption.
\end{proof}


The smoothing operator \(M_\lambda\) acts like a downshift on the standard basis and as a change-of-basis on a well-chosen \(p\)-biased basis.
In both cases, the algebraic manipulations can be cumbersome and inconvenient, suggesting that neither is the natural choice of basis for studying \(M_\lambda\).
To address this limitation, we use the monotone basis in~\cref{sec:monotone_analysis} to provide a novel and tractable characterization of how smoothing affects the spectrum and stability of Boolean functions.

\section{Analysis of Stability and Smoothing in the Monotone Basis}
\label{sec:monotone_analysis}

While the standard Fourier basis is a common starting point for studying Boolean functions, its interaction with \(M_\lambda\) is algebraically complex.
The main reason is that the Fourier basis treats \(0 \to 1\) and \(1 \to 0\) perturbations symmetrically.
In contrast, we wish to analyze perturbations that add features (i.e., \(\alpha' \sim \Delta_r (\alpha)\)) and smoothing operations that remove features.
This mismatch results in a complex redistribution of terms that is algebraically inconvenient to manipulate.
We were thus motivated to adopt the \textit{monotone basis} (also known as unanimity functions in game theory), under which smoothing by \(M_\lambda\) is well-behaved.

\subsection{Monotone Basis for Boolean Functions}
For any subset \(T \subseteq [n]\), define its corresponding \textit{monotone basis function}
\(\mbf{1}_T : \{0,1\}^n \to \{0,1\}\) as:
\begin{equation}
    \mathbf{1}_T (\alpha) =
    \begin{dcases}
        1 &\text{if \(\alpha_i = 1\) for all \(i \in T\) (all features in \(S\) present)}, \\
        0 &\text{otherwise (any feature in \(T\) is absent)},
    \end{dcases}
\end{equation}
where let \(\mbf{1}_\emptyset (\alpha) = 1\).
First, we flexibly identify subsets of \([n]\) with binary vectors in \(\{0,1\}^n\), which lets us write \(T \subseteq \alpha\) if \(i \in T\) implies \(\alpha_i = 1\).
This gives us useful ways to equivalently write \(\mbf{1}_T (\alpha)\):
\begin{equation}
    \mbf{1}_T (\alpha)
    = \prod_{i \in T} \alpha_i
    = \begin{dcases}
        1 & \text{if \(T \subseteq \alpha\)}, \\
        0 & \text{otherwise}.
    \end{dcases}
\end{equation}

The monotone basis lets us more compactly express properties that depend on the inclusion or exclusion of features.
For instance, the earlier example of conjunction \(h(\alpha) = \alpha_1 \land \alpha_2\) may be equivalently written as:
\begin{align}
    \alpha_1 \land \alpha_2
    &= \mbf{1}_{\{1,2\}} (\alpha)
        \tag{monotone basis} \\
    &= \frac{1}{4} \chi_\emptyset (\alpha) - \frac{1}{4} \chi_{\{1\}} (\alpha) - \frac{1}{4} \chi_{\{2\}} (\alpha) + \frac{1}{4} \chi_{\{1,2\}} (\alpha)
        \tag{standard basis}
\end{align}

Unlike the standard bases (both standard Fourier and \(p\)-biased Fourier), the monotone basis is not orthonormal with respect to \(\{0,1\}^n\) because
\begin{equation}
    \expval{\alpha \sim \{0,1\}^n} \bracks*{\mbf{1}_S (\alpha) \mbf{1}_T (\alpha)}
    = \prob{\alpha \sim \{0,1\}^n} \bracks*{S \cup T \subseteq \alpha}
    = 2^{-\abs{S \cup T}},
\end{equation}
where note that \(S \cup T \subseteq \alpha\) iff both \(S \subseteq \alpha\) and \(T \subseteq \alpha\).
However, the monotone basis does satisfy some interesting properties, which we describe next.

\begin{theorem} \label{thm:monotone_basis_properties}
    Any function \(h: \{0,1\}^n \to \mbb{R}^n\) is uniquely expressible in the monotone basis as:
    \begin{equation}
        h(\alpha) = \sum_{T \subseteq [n]} \widetilde{h}(T) \mbf{1}_T (\alpha),
    \end{equation}
    where \(\widetilde{h}(T) \in \mbb{R}\) are the monotone basis coefficients of \(h\) that can be recursively computed via:
    \begin{equation}
        \widetilde{h}(T) = h(T) - \sum_{S \subsetneq T} \widetilde{h}(S),
        \quad \widetilde{h}(\emptyset) = h(\mbf{0}_n),
    \end{equation}
    where \(h(T)\) denotes the evaluation of \(h\) on the binary vectorized representation of \(T\).
\end{theorem}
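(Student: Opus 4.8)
The plan is to recognize Theorem~\ref{thm:monotone_basis_properties} as the Möbius-inversion (inclusion–exclusion) statement for the Boolean lattice, and to establish existence and uniqueness separately. Throughout, I would identify each $\alpha \in \{0,1\}^n$ with its support set $A = \{i : \alpha_i = 1\} \subseteq [n]$, so that $\mbf{1}_T(\alpha) = 1$ exactly when $T \subseteq A$. The single structural fact driving the whole argument is the \emph{collapse identity}: evaluating any candidate monotone expansion at $\alpha$ restricts the sum to the downset of $A$, i.e. $\sum_{T \subseteq [n]} \widetilde{h}(T)\,\mbf{1}_T(\alpha) = \sum_{T \subseteq A} \widetilde{h}(T)$.

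For existence, I would take the stated recursion as the \emph{definition} of the coefficients: $\widetilde{h}(\emptyset) = h(\mbf{0}_n)$ and $\widetilde{h}(T) = h(T) - \sum_{S \subsetneq T}\widetilde{h}(S)$. This is well-posed since the right-hand side references only coefficients indexed by strictly smaller sets (order the recursion by $\abs{T}$, or by any linear extension of inclusion). I would then prove, by strong induction on $\abs{A}$, that $\sum_{T \subseteq A}\widetilde{h}(T) = h(A)$ for every $A \subseteq [n]$: the base case $A = \emptyset$ is immediate, and in the inductive step I peel off the top term, $\sum_{T \subseteq A}\widetilde{h}(T) = \widetilde{h}(A) + \sum_{S \subsetneq A}\widetilde{h}(S)$, and substitute the definition of $\widetilde{h}(A)$ to recover $h(A)$ exactly. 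Combined with the collapse identity, this shows the monotone expansion reproduces $h$ at every point of $\{0,1\}^n$.

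For uniqueness, I would argue dimension-theoretically: the family $\{\mbf{1}_T : T \subseteq [n]\}$ has cardinality $2^n = \dim \mbb{R}^{\{0,1\}^n}$, so once existence has exhibited it as a spanning set it is automatically a basis, forcing the coefficient vector to be unique. (Equivalently, the $2^n \times 2^n$ zeta matrix $Z$ with $Z_{A,T} = \mbf{1}[T \subseteq A]$ becomes lower-triangular with unit diagonal once rows and columns are sorted by inclusion, hence invertible; its inverse is the Möbius matrix of the Boolean lattice, which yields the closed form $\widetilde{h}(T) = \sum_{S \subseteq T} (-1)^{\abs{T} - \abs{S}} h(S)$, and one checks directly that this satisfies the recursion.) For vector-valued $h : \{0,1\}^n \to \mbb{R}^m$, everything applies coordinatewise.

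I do not expect a genuine obstacle here — this is classical Möbius inversion — so the ``hard part'' is really just presentation: keeping the subset/binary-vector identification unambiguous, stating the triangularity (or the induction-on-$\abs{A}$) cleanly enough that well-posedness of the recursion is visibly justified, and, if both the recursive and the closed-form descriptions of $\widetilde{h}$ are desired, verifying that they coincide.
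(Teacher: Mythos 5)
Your proposal is correct and takes essentially the same route as the paper: both proofs hinge on the triangularity (with unit diagonal) of the zeta/evaluation matrix after sorting subsets by inclusion, which gives existence and uniqueness, and both obtain the stated recursion from the collapse identity $h(T) = \sum_{S \subseteq T}\widetilde{h}(S)$. The only difference is cosmetic ordering — the paper establishes existence/uniqueness first via triangularity and then reads off the recursion, whereas you define the coefficients by the recursion, verify them by induction, and then invoke a dimension count (or the same triangularity) for uniqueness.
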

\begin{proof}
We first prove existence and uniqueness.
By definition of \(\mbf{1}_T\), we have the simplification:
\begin{equation} \label{eqn:monotone_basis_properties_1}
    h(\alpha) = \sum_{T \subseteq [n]} \widetilde{h} (T) \mbf{1}_T (\alpha)
    = \sum_{T \subseteq \alpha} \widetilde{h}(T).
\end{equation}
This yields a system of \(2^n\) linear equations (one for each \(h(\alpha)\)) in \(2^n\) unknowns (one for each \(\widetilde{h}(T)\)).
We may treat this as a matrix of size \(2^n \times 2^n\) with rows indexed by \(h(\alpha)\) and columns indexed by \(\widetilde{h}(T)\), sorted by inclusion and degree.
This matrix is lower-triangular with ones on the diagonal (\(\mbf{1}_T (T) = 1\) and \(\mbf{1}_T (\alpha) = 0\) for \(\abs{T} > \alpha\); like a transposed~\cref{eqn:m_matrix_example}), and so the \(2^n\) values of \(h(\alpha)\) uniquely determine \(\widetilde{h}(T)\).

For the recursive formula, we simultaneously substitute \(\alpha \mapsto T\) and \(T \mapsto S\) in~\cref{eqn:monotone_basis_properties_1} to write:
\begin{equation}
    h(T) = \widetilde{h}(T) + \sum_{S \subsetneq T} \widetilde{h} (S),
\end{equation}
and re-ordering terms yields the desired result.
\end{proof}

\subsection{Smoothing and Stability in the Monotone Basis}

A key advantage of the monotone basis is that it yields a convenient analytical expression for how smoothing affects the spectrum.

\begin{theorem}[Smoothing in the Monotone Basis]
\label{thm:smoothing_monotone_appendix}
Let \(M_\lambda\) be the smoothing operator as in~\cref{def:M_appendix}.
Then, for any function \(h: \{0,1\}^n \to \mbb{R}\) and \(T \subseteq [n]\), we have the spectral contraction:
\begin{equation*}
    \widetilde{M_\lambda h}(T) = \lambda^{\abs{T}} \widetilde{h}(T),
\end{equation*}
where \(\widetilde{M_\lambda h}(T)\) and \(\widetilde{h}(T)\) are the monotone basis coefficients of \(M_\lambda h\) and \(h\) at \(T\), respectively.
\end{theorem}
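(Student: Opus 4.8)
The plan is to exploit the fact that $M_\lambda$ is a linear operator, so it suffices to understand its action on each monotone basis function $\mathbf{1}_T$ and then extend by linearity. First I would compute $M_\lambda \mathbf{1}_T(\alpha)$ directly from the definition. Using the product representation $\mathbf{1}_T(\alpha) = \prod_{i \in T} \alpha_i$, we get
\begin{equation*}
    M_\lambda \mathbf{1}_T(\alpha)
    = \expval{z \sim \Bern(\lambda)^n} \prod_{i \in T} (\alpha_i z_i)
    = \parens*{\prod_{i \in T} \alpha_i} \prod_{i \in T} \expval{z_i} [z_i]
    = \lambda^{\abs{T}} \mathbf{1}_T(\alpha),
\end{equation*}
where the middle step uses independence of the $z_i$ and the fact that $z_i \in \{0,1\}$ so $\alpha_i z_i = \alpha_i \land z_i$ and $z_i^{\,k} = z_i$. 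Thus each monotone basis function is an \emph{eigenfunction} of $M_\lambda$ with eigenvalue $\lambda^{\abs{T}}$ — a much cleaner situation than the weight-mixing seen for the standard basis in~\cref{lem:mus_chi}.

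Given this, the main claim follows quickly. Expand $h$ in the monotone basis via~\cref{thm:monotone_basis_properties} as $h(\alpha) = \sum_{T \subseteq [n]} \widetilde{h}(T) \mathbf{1}_T(\alpha)$, apply linearity of $M_\lambda$, and substitute the eigenvalue computation:
\begin{equation*}
    M_\lambda h(\alpha)
    = \sum_{T \subseteq [n]} \widetilde{h}(T)\, M_\lambda \mathbf{1}_T(\alpha)
    = \sum_{T \subseteq [n]} \lambda^{\abs{T}} \widetilde{h}(T)\, \mathbf{1}_T(\alpha).
\end{equation*}
By the uniqueness of the monotone expansion (also from~\cref{thm:monotone_basis_properties}), reading off the coefficient of $\mathbf{1}_T$ gives $\widetilde{M_\lambda h}(T) = \lambda^{\abs{T}} \widetilde{h}(T)$, as desired.

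I do not expect a serious obstacle here; the only point requiring a little care is justifying the interchange of the expectation over $z$ with the finite sum over $T$ (immediate, since everything is a finite sum) and the factorization of $\expval{z}\prod_{i \in T}(\alpha_i z_i)$ into $\prod_{i \in T}\expval{z_i}[\alpha_i z_i]$, which is valid because the $z_i$ are independent and $\alpha$ is a fixed input. One should also note explicitly that $\expval{z_i \sim \Bern(\lambda)}[\alpha_i z_i] = \alpha_i \lambda$ holds for $\alpha_i \in \{0,1\}$ (the two cases $\alpha_i = 0$ and $\alpha_i = 1$), which is what collapses the product back into $\lambda^{\abs{T}} \mathbf{1}_T(\alpha)$. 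If a cleaner exposition is wanted, one could alternatively verify the identity pointwise: $M_\lambda \mathbf{1}_T(\alpha) = \prob{z}[T \subseteq \alpha \odot z] = \prob{z}[T \subseteq \alpha]\cdot\prob{z}[T \subseteq z \mid T \subseteq \alpha]= \mathbf{1}_T(\alpha)\,\lambda^{\abs{T}}$, which makes the probabilistic meaning transparent.
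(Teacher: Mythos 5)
Your proof is correct and follows essentially the same route as the paper's: show that each monotone basis function $\mathbf{1}_T$ is an eigenfunction of $M_\lambda$ with eigenvalue $\lambda^{\abs{T}}$ via independence of the $z_i$, then conclude by linearity and uniqueness of the monotone expansion. Your extra remarks (the pointwise probabilistic check and the explicit appeal to uniqueness) merely spell out steps the paper leaves implicit.
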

\begin{proof}
By linearity of expectation, it suffices to study how \(M_\lambda\) acts on each basis function:
\begin{align}
    M_\lambda \mbf{1}_T (\alpha)
    &= \expval{z \sim \msf{Bern}(\lambda)^n} \bracks*{\mbf{1}_T (\alpha \odot z)}
        \tag{by definition of \(M_\lambda\)} \\
    &= \expval{z \sim \msf{Bern}(\lambda)^n} \bracks*{\prod_{i \in T} (\alpha_i z_i)}
        \tag{by definition of \(\mbf{1}_T (\alpha)\)} \\
    &= \prod_{i \in T} \parens*{\alpha_i \expval{z_i \sim \msf{Bern}(\lambda)} \bracks*{z_i}}
        \tag{by independence of \(z_1, \ldots, z_n\)} \\
    &= \lambda^{\abs{T}} \mbf{1}_T (\alpha)
        \tag{\(\expval{} \bracks{z_i} = \lambda\)}
\end{align}
\end{proof}

The monotone basis also gives a computationally tractable way of bounding the stability rate.
Crucially, the difference between two Boolean functions is easier to characterize.
As a simplified setup, we consider classifiers of form \(h: \{0,1\}^n \to \mbb{R}\), where for \(\beta \sim \Delta_r (\alpha)\) let:
\begin{equation} \label{eqn:simplified_prediction_match}
    h(\beta) \cong h(\alpha)
    \quad \text{if} \quad
    \abs{h(\beta) - h(\alpha)}
    \leq \gamma.
\end{equation}
Such \(h\) and its decision boundary \(\gamma\) may be derived from a general classifier \(f: \mbb{R}^n \to \mbb{R}^m\) once \(x\) and \(\alpha\) are known.
This relation of the decision boundary then motivates the difference computation:
\begin{equation}
    \label{eqn:hbeta_minus_halpha}
    h(\beta) - h(\alpha)
    = \sum_{T \subseteq [n]} \widetilde{h}(T) (\mbf{1}_T (\beta) - \mbf{1}_T (\alpha)
    = \sum_{T \subseteq \beta \setminus \alpha, T \neq \emptyset} \widetilde{h}(T),
\end{equation}
where recall that \(\mbf{1}_T (\beta) - \mbf{1}_T (\alpha) = 1\) iff \(T \neq \emptyset\) and \(T \subseteq \beta \setminus \alpha\).
This algebraic property plays a key role in tractably bounding the stability rate.
Specifically, we upper-bound the \textit{instability rate} \(1 - \tau_r\):
\begin{equation}
    1 - \tau_r = \prob{\beta \sim \Delta_r (\alpha)} \bracks*{\abs{h(\beta) - h(\alpha)} > \gamma}.
\end{equation}
An upper bound of form \(1 - \tau_r \leq Q\), where \(Q\) depends on the monotone coefficients of \(h\), then implies a lower bound on the stability rate \(1 - Q \leq \tau_r\).
We show this next.

\begin{lemma}[Stability Rate Bound]
\label{thm:soft_stability_bound}
For any function \(h: \{0,1\}^n \to [0,1]\) and attribution \(\alpha \in \{0,1\}^n\) that satisfy~\cref{eqn:simplified_prediction_match}, the stability rate \(\tau_r\) is bounded by:
\begin{equation}
    1 - \tau_r \leq
    \frac{1}{\gamma}
    \sum_{k = 1}^{r} \sum_{\substack{T \subseteq [n] \setminus \alpha \\ \abs{T} = k}} \abs{\widetilde{h}(T)}
    \cdot \prob{\beta \sim \Delta_r} \bracks*{\abs{\beta \setminus \alpha} \geq k},
\end{equation}
where
\begin{equation}
    \prob{\beta \sim \Delta_r} \bracks*{\abs{\beta \setminus \alpha} \geq k}
    = \frac{1}{\abs{\Delta_r}} \sum_{j = k}^{r} \binom{n - \abs{\alpha} - k}{j - k},
    \quad \abs{\Delta_r} = \sum_{i = 0}^{r} \binom{n - \abs{\alpha}}{i}
\end{equation}
\end{lemma}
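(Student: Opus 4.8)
The plan is to bound the instability rate $1 - \tau_r = \prob{\beta \sim \Delta_r(\alpha)}[\abs{h(\beta) - h(\alpha)} > \gamma]$ by combining Markov's inequality with the monotone expansion of $h(\beta) - h(\alpha)$ derived in~\cref{eqn:hbeta_minus_halpha}. First I would apply Markov's inequality to the nonnegative random variable $\abs{h(\beta) - h(\alpha)}$ over the uniform draw $\beta \sim \Delta_r(\alpha)$, giving $1 - \tau_r \leq \frac{1}{\gamma} \expval{\beta \sim \Delta_r}\bracks{\abs{h(\beta) - h(\alpha)}}$. Then I would substitute the identity $h(\beta) - h(\alpha) = \sum_{\emptyset \neq T \subseteq \beta \setminus \alpha} \widetilde{h}(T)$ from~\cref{eqn:hbeta_minus_halpha}, apply the triangle inequality to pull the sum outside the absolute value, and use linearity of expectation to write
\begin{equation*}
    1 - \tau_r \leq \frac{1}{\gamma} \sum_{\emptyset \neq T \subseteq [n] \setminus \alpha} \abs{\widetilde{h}(T)} \cdot \prob{\beta \sim \Delta_r}\bracks{T \subseteq \beta \setminus \alpha}.
\end{equation*}
Note that only $T$ with $T \subseteq [n] \setminus \alpha$ can ever satisfy $T \subseteq \beta \setminus \alpha$, and since $\abs{\beta \setminus \alpha} \leq r$, only $T$ with $\abs{T} \leq r$ contribute; this is why the outer sum collapses to $\sum_{k=1}^{r} \sum_{\abs{T}=k, T \subseteq [n]\setminus\alpha}$.

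The next step is to relate $\prob{\beta \sim \Delta_r}[T \subseteq \beta \setminus \alpha]$ to the cruder quantity $\prob{\beta \sim \Delta_r}[\abs{\beta \setminus \alpha} \geq k]$ appearing in the statement, for $\abs{T} = k$. By symmetry of the uniform distribution on $\Delta_r(\alpha)$ under permutations of the $n - \abs{\alpha}$ coordinates outside $\alpha$, the probability $\prob{T \subseteq \beta \setminus \alpha}$ depends only on $k = \abs{T}$. I would argue that $\prob{T \subseteq \beta \setminus \alpha} \leq \prob{\abs{\beta \setminus \alpha} \geq k}$: indeed, conditioned on $\abs{\beta \setminus \alpha} = j$, the probability that a fixed $k$-set $T$ is contained in $\beta \setminus \alpha$ is $\binom{n - \abs{\alpha} - k}{j - k} / \binom{n - \abs{\alpha}}{j} \leq 1$, which is zero when $j < k$, so summing over $j$ gives a quantity bounded by $\prob{\abs{\beta\setminus\alpha}\geq k}$. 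Substituting this bound yields the claimed inequality, modulo verifying the closed form for $\prob{\abs{\beta\setminus\alpha}\geq k}$.

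For the closed-form expression, I would count directly: $\abs{\Delta_r(\alpha)} = \sum_{i=0}^r \binom{n - \abs{\alpha}}{i}$ is the number of ways to choose up to $r$ of the $n - \abs{\alpha}$ available zero-coordinates to flip, which is already established in the paper. To compute $\prob{\abs{\beta \setminus \alpha} \geq k}$, I would either (a) count $\beta \in \Delta_r$ with $\abs{\beta \setminus \alpha} = j$ for each $j$ from $k$ to $r$, giving $\sum_{j=k}^r \binom{n-\abs{\alpha}}{j}$ over $\abs{\Delta_r}$, and then reconcile with the stated form $\sum_{j=k}^r \binom{n - \abs{\alpha} - k}{j-k}$; or (b) observe that the stated numerator arises naturally if one instead counts $\beta$ containing a \emph{fixed} $k$-subset, i.e. it is actually computing $\sum_{T: \abs{T}=k}\prob{T \subseteq \beta \setminus \alpha}$ absorbed appropriately. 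I expect the main obstacle to be precisely this bookkeeping: making sure the combinatorial identity in the statement is the quantity that actually emerges from the expectation, since the binomial $\binom{n-\abs{\alpha}-k}{j-k}$ suggests the authors have folded the "fixed $T$ of size $k$" counting into the probability expression rather than using the looser tail bound $\binom{n-\abs{\alpha}}{j}$. I would therefore carefully track whether the inequality step $\prob{T\subseteq\beta\setminus\alpha} \leq \prob{\abs{\beta\setminus\alpha}\geq k}$ is even needed, or whether the exact count $\prob{T \subseteq \beta\setminus\alpha} = \frac{1}{\abs{\Delta_r}}\sum_{j=k}^r \binom{n-\abs{\alpha}-k}{j-k}$ holds as an equality (by choosing the remaining $j - k$ flips among the $n - \abs{\alpha} - k$ coordinates outside $T \cup \alpha$), in which case the lemma follows directly from Markov plus triangle inequality plus this exact enumeration, with no slack introduced.
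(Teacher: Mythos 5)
Your proposal is correct and follows essentially the same route as the paper: Markov's inequality, then the triangle inequality on the monotone expansion $h(\beta)-h(\alpha)=\sum_{\emptyset\neq T\subseteq\beta\setminus\alpha}\widetilde{h}(T)$, then re-indexing by $T$. Your suspicion at the end is also right: the quantity that emerges is the exact count $\prob{\beta\sim\Delta_r}\bracks{T\subseteq\beta\setminus\alpha}=\frac{1}{\abs{\Delta_r}}\sum_{j=k}^{r}\binom{n-\abs{\alpha}-k}{j-k}$ (choosing $j-k$ extra flips outside $T\cup\alpha$), and the paper has simply attached the label $\prob{}\bracks{\abs{\beta\setminus\alpha}\geq k}$ to this expression even though the genuine tail probability would have numerator $\binom{n-\abs{\alpha}}{j}$; so no further inequality step is needed beyond Markov and triangle, and the accounting is exact.
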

\begin{proof}
We can directly bound the stability rate as follows:
\begin{align}
    1 - \tau_r
    &= \prob{\beta \sim \Delta_r} \bracks*{\abs{h(\beta) - h(\alpha)} > \gamma} \\
    &\leq \frac{1}{\gamma} \expval{\beta \sim \Delta_r} \bracks*{\abs{h(\beta) - h(\alpha)}}
        \tag{Markov's inequality} \\
    &\leq \frac{1}{\gamma} \expval{\beta \sim \Delta_r}
        \sum_{\substack{T \subseteq \beta \setminus \alpha \\ T \neq \emptyset}} \abs{\widetilde{h}(T)}
        \tag{by~\cref{eqn:hbeta_minus_halpha}, triangle inequality} \\
    &= \frac{1}{\gamma \abs{\Delta_r}} \sum_{k = 0}^{r} \sum_{\abs{\beta \setminus \alpha} = k} \sum_{\substack{T \subseteq \beta \setminus \alpha \\ T \neq \emptyset}} \abs{\widetilde{h}(T)}
        \tag{enumerate \(\beta \in \Delta_r (\alpha)\) by its size, \(k\)} \\
    &= \frac{1}{\gamma \abs{\Delta_r}} \sum_{k = 1}^{r} \sum_{\substack{S \subseteq [n] \setminus \alpha \\ \abs{S} = k}} \sum_{\substack{T \subseteq S \\ T \neq \emptyset}} \abs{\widetilde{h}(T)}
        \tag{the \(k = 0\) term is zero, and let \(S = \beta \setminus \alpha\)} \\
    &= \frac{1}{\gamma \abs{\Delta_r}} \sum_{k = 1}^{r} \sum_{\substack{T \subseteq [n] \setminus \alpha \\ \abs{T} = k}} \abs{\widetilde{h}(T)} \cdot \underbrace{\abs*{\braces{S \subseteq [n] \setminus \alpha : S \supseteq T, \abs{S} \leq r}}}_{\text{Total times that \(\widetilde{h}(T)\) appears}}
    \tag{re-index by \(T\)} \\
    &= \frac{1}{\gamma} \sum_{k = 1}^{r} \sum_{\substack{T \subseteq [n] \setminus \alpha \\ \abs{T} = k}} \abs{\widetilde{h}(T)} \cdot \prob{\beta \sim \Delta_r} \bracks*{\abs{\beta \setminus \alpha} \geq k}
\end{align}
\end{proof}

An immediate consequence from~\cref{thm:smoothing_monotone_appendix} is a stability rate bound on smoothed functions.

\begin{theorem}[Stability of Smoothed Functions]
\label{thm:stability_of_smoothed}
Consider any function \(h: \{0,1\}^n \to [0,1]\) and attribution \(\alpha \in \{0,1\}^n\) that satisfy~\cref{eqn:simplified_prediction_match}.
Then, for any \(\lambda \in [0,1]\),
\begin{equation}
    1 - \frac{Q}{\gamma} \leq \tau_r (h, \alpha)
        \implies
    1 - \frac{\lambda Q}{\gamma} \leq \tau_r (M_\lambda h, \alpha),
\end{equation}
where
\begin{equation}
    Q = \sum_{k = 1}^{r} \sum_{\substack{T \subseteq [n] \setminus \alpha \\ \abs{T} = k}} \abs{\widetilde{h}(T)} \cdot \prob{\beta \sim \Delta_r} \bracks*{\abs{\beta \setminus \alpha} \geq k}.
\end{equation}
\end{theorem}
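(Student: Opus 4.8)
The plan is to obtain the conclusion by instantiating the Stability Rate Bound of \cref{thm:soft_stability_bound} at the smoothed function $M_\lambda h$ rather than at $h$, and then absorbing a single factor of $\lambda$ using the monotone spectral contraction of \cref{thm:smoothing_monotone_appendix}. First I would check that $M_\lambda h$ is again a map $\{0,1\}^n \to [0,1]$: since $M_\lambda h(\alpha) = \expval{z \sim \msf{Bern}(\lambda)^n}\bracks*{h(\alpha \odot z)}$ is a convex combination of values $h(\alpha \odot z) \in [0,1]$, its value lies in $[0,1]$ as well. Prediction agreement for $M_\lambda h$ is measured against the same decision boundary $\gamma$ as in \cref{eqn:simplified_prediction_match}, so \cref{thm:soft_stability_bound} applies to $M_\lambda h$ and gives
\begin{equation*}
    1 - \tau_r(M_\lambda h, \alpha)
    \le \frac{1}{\gamma} \sum_{k=1}^{r} \sum_{\substack{T \subseteq [n] \setminus \alpha \\ \abs{T} = k}} \abs{\widetilde{M_\lambda h}(T)} \cdot \prob{\beta \sim \Delta_r}\bracks*{\abs{\beta \setminus \alpha} \ge k}.
\end{equation*}

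Next I would substitute the identity $\widetilde{M_\lambda h}(T) = \lambda^{\abs{T}} \widetilde{h}(T)$ from \cref{thm:smoothing_monotone_appendix}, so that $\abs{\widetilde{M_\lambda h}(T)} = \lambda^{k}\abs{\widetilde{h}(T)}$ whenever $\abs{T} = k$. Because $\lambda \in [0,1]$ and every term in the bound has $k \ge 1$, I may use $\lambda^{k} \le \lambda$ and pull the factor $\lambda$ out of the double sum, which yields
\begin{equation*}
    1 - \tau_r(M_\lambda h, \alpha)
    \le \frac{\lambda}{\gamma} \sum_{k=1}^{r} \sum_{\substack{T \subseteq [n] \setminus \alpha \\ \abs{T} = k}} \abs{\widetilde{h}(T)} \cdot \prob{\beta \sim \Delta_r}\bracks*{\abs{\beta \setminus \alpha} \ge k}
    = \frac{\lambda Q}{\gamma},
\end{equation*}
and rearranging gives $1 - \lambda Q/\gamma \le \tau_r(M_\lambda h, \alpha)$, as claimed. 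Note that the hypothesis $1 - Q/\gamma \le \tau_r(h, \alpha)$ is not used as an extra assumption: with this particular choice of $Q$ it is exactly \cref{thm:soft_stability_bound} applied to $h$ itself, so the statement is really asserting that the \emph{same} upper bound on the instability rate shrinks by a factor $\lambda$ under smoothing.

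The one point that requires care — and the only thing that is not a two-line computation — is the first step: verifying that \cref{thm:soft_stability_bound} may legitimately be invoked for $M_\lambda h$, i.e.\ that its codomain is still $[0,1]$ and that the relevant agreement threshold is unchanged at $\gamma$. Everything after that is bookkeeping together with the elementary inequality $\lambda^{k} \le \lambda$ for $k \ge 1$. As a sanity check I would note the two extremes: at $\lambda = 1$ the bound reduces to \cref{thm:soft_stability_bound} for $h$, and as $\lambda \to 0$ the right-hand side tends to $1$, consistent with $M_0 h$ being a constant function and hence trivially hard stable.
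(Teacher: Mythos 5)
Your proof is correct and follows essentially the same route as the paper: apply the stability rate bound of \cref{thm:soft_stability_bound} to $M_\lambda h$, substitute the monotone spectral contraction $\widetilde{M_\lambda h}(T) = \lambda^{\abs{T}}\widetilde{h}(T)$ from \cref{thm:smoothing_monotone_appendix}, and use $\lambda^{k} \le \lambda$ for $k \ge 1$. Your additional remarks --- verifying that $M_\lambda h$ has codomain $[0,1]$, observing that the antecedent $1 - Q/\gamma \le \tau_r(h,\alpha)$ is not an extra assumption but is itself \cref{thm:soft_stability_bound} applied to $h$, and the $\lambda \in \{0,1\}$ sanity checks --- are valid and make the argument more explicit than the paper's terse version.
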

\begin{proof}
This follows from applying~\cref{thm:smoothing_monotone_appendix} to~\cref{thm:soft_stability_bound} by noting that:
\begin{equation}
    1 - \tau_r (M_\lambda h, \alpha)
    \leq \frac{1}{\gamma} \sum_{k = 1}^{r} \lambda^k \sum_{\substack{T \subseteq [n] \setminus \alpha \\ \abs{T} = k}} \abs{\widetilde{h}(T)} \cdot \prob{\beta \sim \Delta_r} \bracks*{\abs{\beta \setminus \alpha} \geq k}.
\end{equation}
\end{proof}

Moreover, we also present the following result on hard stability in the monotone basis.

\begin{theorem}[Hard Stability Bound]
For any function \(h: \{0,1\}^n \to [0,1]\) and attribution \(\alpha \in \{0,1\}^n\) that satisfy~\cref{eqn:simplified_prediction_match}, let
\begin{equation}
    r^\star = \argmax_{r \geq 0} \max_{\beta : \abs{\beta \setminus \alpha} \leq r}
    \bracks*{
    \abs*{
        \sum_{T \subseteq \beta \setminus \alpha, T \neq \emptyset} \widetilde{h}(T)
    } \leq \gamma}.
\end{equation}
Then, \(h\) is hard stable at \(\alpha\) with radius \(r^\star\).
\end{theorem}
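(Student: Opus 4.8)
The plan is to unwind \cref{def:incremental_stability} in the simplified single-output setting of~\cref{eqn:simplified_prediction_match} and to observe that $r^\star$ is, by construction, exactly the largest radius at which every additive perturbation preserves the prediction. First I would fix an arbitrary $\beta \in \Delta_r(\alpha)$; since $\beta \geq \alpha$ with $\abs{\beta - \alpha} \leq r$, identifying binary vectors with subsets gives $\beta \setminus \alpha \subseteq [n] \setminus \alpha$ and $\abs{\beta \setminus \alpha} = \abs{\beta - \alpha} \leq r$. By~\cref{eqn:hbeta_minus_halpha}, the monotone expansion gives $h(\beta) - h(\alpha) = \sum_{T \subseteq \beta \setminus \alpha,\, T \neq \emptyset} \widetilde{h}(T)$, so the agreement condition $h(\beta) \cong h(\alpha)$ of~\cref{eqn:simplified_prediction_match} is equivalent to $\bigl| \sum_{T \subseteq \beta \setminus \alpha,\, T \neq \emptyset} \widetilde{h}(T) \bigr| \leq \gamma$.

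Next I would note that this deviation depends on $\beta$ only through the set $\beta \setminus \alpha$, and that every $S \subseteq [n] \setminus \alpha$ with $\abs{S} \leq r$ is realized as $\beta \setminus \alpha$ for some $\beta \in \Delta_r(\alpha)$ (take $\beta = \alpha \cup S$). Hence $h$ is hard stable at $\alpha$ with radius $r$ if and only if $\max_{\beta :\, \abs{\beta \setminus \alpha} \leq r} \bigl| \sum_{T \subseteq \beta \setminus \alpha,\, T \neq \emptyset} \widetilde{h}(T) \bigr| \leq \gamma$, which is precisely the predicate defining $r^\star$. The set of radii satisfying this predicate is a nonempty initial segment of $\{0, 1, \ldots, n - \abs{\alpha}\}$: it contains $r = 0$, for which $\Delta_0(\alpha) = \{\alpha\}$ makes the sum empty and the inequality trivial; and it is downward closed, since $\Delta_{r'}(\alpha) \subseteq \Delta_r(\alpha)$ whenever $r' \leq r$. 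Therefore the maximum defining $r^\star$ is attained, and by the equivalence above $h$ is hard stable at $\alpha$ with radius $r^\star$, which is the claim.

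The argument is essentially bookkeeping, so I do not expect a substantial obstacle; the step deserving the most care is the reduction from quantifying over $\beta \in \Delta_r(\alpha)$ to quantifying over all $\beta$ with $\abs{\beta \setminus \alpha} \leq r$, which is justified because $h(\beta) - h(\alpha)$ depends on $\beta \setminus \alpha$ alone, so admitting non-additive $\beta$ leaves the worst case unchanged. One may additionally record that $r^\star$ is the best radius obtainable this way: any $r$ exceeding $r^\star$ (up to $n - \abs{\alpha}$, beyond which $\Delta_r(\alpha)$ no longer grows) admits a witness $\beta$ with $\abs{h(\beta) - h(\alpha)} > \gamma$, so no strictly larger hard stability radius is valid.
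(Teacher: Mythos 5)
Your proof is correct relative to the paper's own chain of reasoning, and it takes essentially the same route as the paper's one-line proof: both invoke \cref{eqn:hbeta_minus_halpha} to identify $\sum_{T \subseteq \beta \setminus \alpha,\, T \neq \emptyset} \widetilde{h}(T)$ with $h(\beta) - h(\alpha)$, so the predicate in the $\argmax$ becomes exactly the hard-stability condition $h(\beta) \cong h(\alpha)$. You add bookkeeping the paper elides---that the deviation depends on $\beta$ only through $\beta \setminus \alpha$, and that the set of admissible radii is a nonempty downward-closed subset of $\{0,\ldots,n-|\alpha|\}$ so the $\argmax$ is attained---which is a genuine improvement in rigor.

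One caveat worth flagging, which is inherited from the paper rather than introduced by you: \cref{eqn:hbeta_minus_halpha} appears to misstate the index set. For $\beta \geq \alpha$, one has $\mbf{1}_T(\beta) - \mbf{1}_T(\alpha) = 1$ exactly when $T \subseteq \beta$ and $T \not\subseteq \alpha$, which permits $T$ to intersect $\alpha$, not only $T \subseteq \beta \setminus \alpha$. For instance, with $h = \mbf{1}_{\{1,2\}}$, $\alpha = \{1\}$, $\beta = \{1,2\}$, one computes $h(\beta) - h(\alpha) = 1$ yet $\sum_{T \subseteq \beta \setminus \alpha,\, T \neq \emptyset}\widetilde{h}(T) = \widetilde{h}(\{2\}) = 0$. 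Since you cite the equation as stated, your derivation has the same issue as the paper's; the theorem seems to require either the corrected index set $\{T \subseteq \beta : T \not\subseteq \alpha\}$ or an additional hypothesis that $\widetilde{h}(T) = 0$ whenever $T \cap \alpha \neq \emptyset$.
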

\begin{proof}
This follows from~\cref{eqn:hbeta_minus_halpha} because it is equivalent to stating that:
\begin{equation}
    r^\star = \argmax_{r \geq 0} \max_{\beta : \abs{\beta \setminus \alpha} \leq r} \underbrace{\bracks*{\abs{h(\beta) - h(\alpha)} \leq \gamma}}_{h(\beta) \cong h(\alpha)}.
\end{equation}
\end{proof}

In summary, the monotone basis provides a more natural setting in which to study the smoothing operator \(M_\lambda\).
While \(M_\lambda\) yields an algebraically complex weight redistribution under the standard basis, its effect is more compactly described in the monotone basis as a point-wise contraction at each \(T \subseteq [n]\).
In particular, we are able to derive a lower-bound improvement on the stability of smoothed functions in~\cref{thm:stability_of_smoothed}.

\section{Additional Experiments}
\label{sec:additional_experiments}
In this section, we include experiment details and additional experiments.

\paragraph{Models}
For vision models, we used Vision Transformer (ViT)~\citep{dosovitskiy2020image}, ResNet50, and ResNet18~\citep{he2016deep}.
For language models, we used RoBERTa~\citep{liu2019roberta}.

\paragraph{Datasets}
For the vision dataset, we used a subset of ImageNet that contains two images per class, for a total of 2000 images.
The images are of size \(3 \times 224 \times 224\), which we segmented into grids with patches of size \(16 \times 16\), for a total of \(n = (224/16)^2 = 196\) features.
For the language dataset, we used six subsets of TweetEval (emoji, emotion, hate, irony, offensive, sentiment) for a total of 10653 items; we omitted the stance subset because their corresponding fine-tuned models were not readily available.

\paragraph{Explanation Methods}
For feature attribution methods, we used LIME~\citep{ribeiro2016should}, SHAP~\citep{lundberg2017unified}, Integrated Gradients~\citep{sundararajan2017axiomatic}, and MFABA~\citep{zhu2024mfaba} using the implementation from exlib.~\footnote{\url{https://github.com/BrachioLab/exlib}}
Each attribution method outputs a ranking of features by their importance score, which we binarized by selecting the top-25\% of features.

\paragraph{Certifying Stability with SCA}
We used SCA (\cref{eqn:sca}) for certifying soft stability (\cref{thm:sca_soft}) with parameters of \(\varepsilon = \delta = 0.1\), for a sample size of \(N = 150\).
We use the same \(N\) when certifying hard stability via SCA-hard (\cref{thm:sca_hard}).
Stability rates for shorter text sequences were right-padded by repeating their final value.
Where appropriate, we used 1000 iterations of bootstrap to compute the 95\% confidence intervals.

\paragraph{Compute}
We used a cluster with NVIDIA GeForce RTX 3090 and NVIDIA RTX A6000 GPUs.

\subsection{Certifying Hard Stability with MuS}
\label{sec:certifying_hard_stability}
We next discuss how~\citet{xue2023stability} compute hard stability certificates with MuS-smoothed classifiers.

\begin{theorem}[Certifying Hard Stability via MuS~\citep{xue2023stability}]
\label{thm:hard_stability_radius}
For any classifier \(f: \mbb{R}^n \to [0,1]^m\) and \(\lambda \in [0,1]\), let \(\tilde{f} = M_\lambda f\) be the MuS-smoothed classifier.
Then, for any input \(x \in \mbb{R}^n\) and explanation \(\alpha \in \{0,1\}^n\), the certifiable hard stability radius is given by:
\begin{equation}
    r_{\mrm{cert}} = \frac{1}{2\lambda} \bracks*{\tilde{f}_1 (x \odot \alpha) - \tilde{f}_2 (x \odot \alpha)},
\end{equation}
where \(\tilde{f}_1 (x \odot \alpha)\) and \(\tilde{f}_2 (x \odot \alpha)\) are the top-1 and top-2 class probabilities of \(\tilde{f}(x \odot \alpha)\).
\end{theorem}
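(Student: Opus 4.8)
The plan is to reduce the hard-stability condition to a Lipschitz-type inequality for the smoothed classifier $\tilde f = M_\lambda f$ and then invoke the known Lipschitz constant of $M_\lambda$ with respect to the $\ell^0$ (Hamming) distance on masked inputs. First I would recall that hard stability at radius $r$ requires $\tilde f(x \odot \alpha') \cong \tilde f(x \odot \alpha)$ for every $\alpha' \in \Delta_r(\alpha)$, i.e.\ the top-1 class of $\tilde f(x \odot \alpha)$ remains the top-1 class of $\tilde f(x \odot \alpha')$. A sufficient condition for the argmax to be preserved is that the gap between the top-1 and top-2 class probabilities at $x \odot \alpha$ exceeds twice the maximum change in any single coordinate of $\tilde f$ induced by flipping up to $r$ zero-entries of $\alpha$ to one: if $\|\tilde f(x\odot\alpha') - \tilde f(x\odot\alpha)\|_\infty \le \tfrac12[\tilde f_1(x\odot\alpha) - \tilde f_2(x\odot\alpha)]$, then the top class cannot change.

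The key step is to bound $\|\tilde f(x\odot\alpha') - \tilde f(x\odot\alpha)\|_\infty$ for $\alpha' \in \Delta_r(\alpha)$. Writing $z \sim \Bern(\lambda)^n$, we have $\tilde f(x\odot\alpha) = \mathbb{E}_z f(x\odot\alpha\odot z)$ and $\tilde f(x\odot\alpha') = \mathbb{E}_z f(x\odot\alpha'\odot z)$. Since $\alpha'$ and $\alpha$ differ in at most $r$ coordinates (all zero-to-one flips), a standard coupling argument on the Bernoulli masks shows that for each of those $\le r$ differing coordinates, the two masked inputs $x\odot\alpha\odot z$ and $x\odot\alpha'\odot z$ agree unless the corresponding $z_i = 1$, which happens with probability $\lambda$. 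Bounding $f \in [0,1]^m$ coordinatewise and summing over the $\le r$ differing coordinates yields $\|\tilde f(x\odot\alpha') - \tilde f(x\odot\alpha)\|_\infty \le \lambda r$. Combining with the sufficient condition above gives hard stability whenever $\lambda r \le \tfrac12[\tilde f_1(x\odot\alpha) - \tilde f_2(x\odot\alpha)]$, i.e.\ $r \le \tfrac{1}{2\lambda}[\tilde f_1(x\odot\alpha) - \tilde f_2(x\odot\alpha)] = r_{\mathrm{cert}}$.

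I expect the main obstacle to be making the coupling argument fully rigorous: one must carefully set up the joint distribution over the Bernoulli mask $z$ so that the event ``the two masked inputs differ'' is controlled coordinatewise, and then argue that the worst-case contribution per differing coordinate is exactly $\lambda$ (not something smaller due to cancellation, nor larger). An alternative, cleaner route is to invoke a pre-established Lipschitz property of $M_\lambda$ from \citet{xue2023stability} — namely that $\tilde f$ is $\lambda$-Lipschitz from the Hamming metric on binary masks to the $\ell^\infty$ metric on outputs — which immediately gives $\|\tilde f(x\odot\alpha') - \tilde f(x\odot\alpha)\|_\infty \le \lambda \,\|\alpha' - \alpha\|_0 \le \lambda r$, collapsing the argument to the margin computation. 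Either way, the remaining steps — translating the $\ell^\infty$ bound into argmax-preservation via the top-two margin, and solving for the largest integer $r$ — are routine.
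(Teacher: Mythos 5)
Your proposal is correct and matches the standard Lipschitz-margin argument from \citet{xue2023stability}, which the paper itself does not re-prove but cites; the paper explicitly states the key ingredient you rely on — that each output coordinate of $\tilde f$ is $\lambda$-Lipschitz with respect to the $\ell^0$ distance on masks — immediately after the theorem. Both your coupling argument (per-coordinate contribution bounded by $\lambda$ since $f \in [0,1]^m$ and the masks disagree only when $z_i = 1$) and the cleaner route of invoking the Lipschitz bound directly are sound, and the margin-to-certified-radius step (preserving the argmax requires $\lambda r \le \tfrac12[\tilde f_1 - \tilde f_2]$) is exactly the intended reduction. One minor tidying note: when you extend the single-coordinate bound to $r$ coordinates, either telescope through intermediate masks differing in one bit each (giving $\lambda r$ by the triangle inequality) or use a union bound over the event that some $z_i=1$ on the differing set (giving the slightly tighter $1-(1-\lambda)^r \le \lambda r$); as stated, "summing over the $\le r$ differing coordinates" glosses over which of these you mean, but either closes the gap.
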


Each output coordinate \(\tilde{f}_1, \ldots, \tilde{f}_m\) is also \(\lambda\)-Lipschitz to the masking of features:
\begin{equation}
    \abs{\tilde{f}_i (x \odot \alpha) - \tilde{f}_i (x \odot \alpha')} \leq \lambda \abs{\alpha - \alpha'},
    \quad \text{for all \(\alpha, \alpha' \in \{0,1\}^n\) and \(i = 1, \ldots, m\)}.
\end{equation}
That is, the keep-probability of each feature is also the Lipschitz constant (per earlier discussion: \(\kappa = \lambda\)).
Note that deterministically evaluating \(M_\lambda f_x\) would require \(2^n\) samples in total, as there are \(2^n\) possibilities for \(\msf{Bern}(\lambda)^n\).
Interestingly, distributions other than \(\msf{Bern}(\lambda)^n\) also suffice to attain the desired Lipschitz constant, and thus a hard stability certificate.
In fact, \citet{xue2023stability} constructs such a distribution based on de-randomized sampling~\citep{levine2021improved}, for which a smoothed classifier is deterministically computed in \(\ll 2^n\) samples.
However, our Boolean analytic results do not readily extend to non-Bernoulli distributions.

\begin{figure*}[t]
\centering

\begin{minipage}{0.24\textwidth}
    \centering
    \includegraphics[width=\linewidth]{images/soft_vs_hard_est_vit_imagenet_2_per_class_lime.pdf}
\end{minipage}
\hfill
\begin{minipage}{0.24\textwidth}
    \centering
    \includegraphics[width=\linewidth]{images/soft_vs_hard_est_resnet50_imagenet_2_per_class_lime.pdf}
\end{minipage}
\hfill
\begin{minipage}{0.24\textwidth}
    \centering
    \includegraphics[width=\linewidth]{images/soft_vs_hard_est_resnet18_imagenet_2_per_class_lime.pdf}
\end{minipage}
\hfill
\begin{minipage}{0.24\textwidth}
    \centering
    \includegraphics[width=\linewidth]{images/soft_vs_hard_est_roberta_tweeteval_all_lime.pdf}
\end{minipage}

\begin{minipage}{0.24\textwidth}
    \centering
    \includegraphics[width=\linewidth]{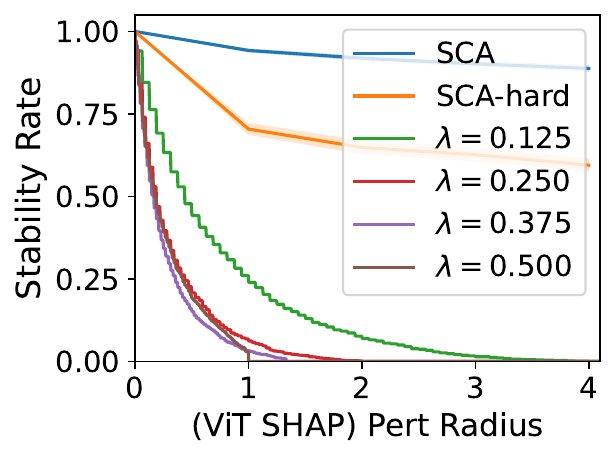}
\end{minipage}
\hfill
\begin{minipage}{0.24\textwidth}
    \centering
    \includegraphics[width=\linewidth]{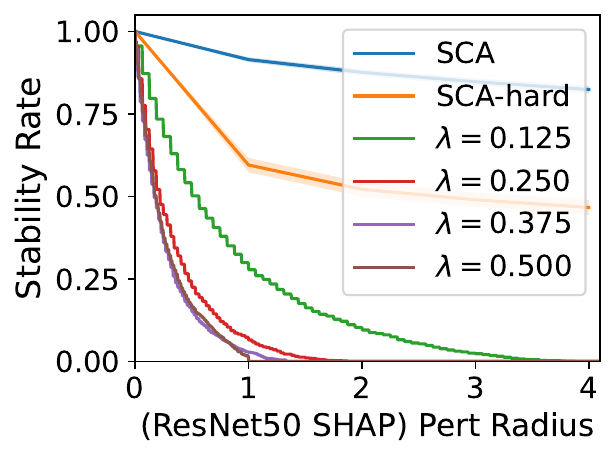}
\end{minipage}
\hfill
\begin{minipage}{0.24\textwidth}
    \centering
    \includegraphics[width=\linewidth]{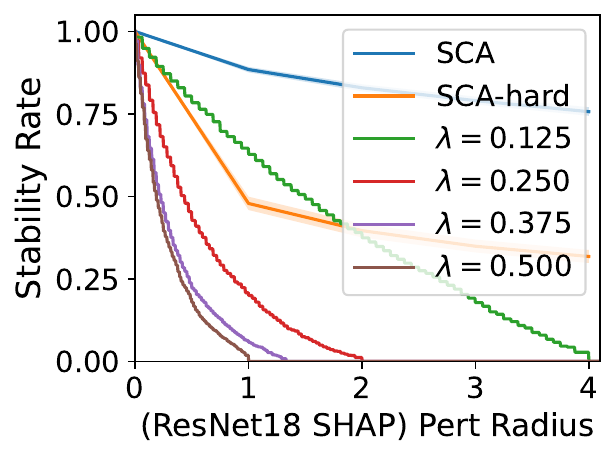}
\end{minipage}
\hfill
\begin{minipage}{0.24\textwidth}
    \centering
    \includegraphics[width=\linewidth]{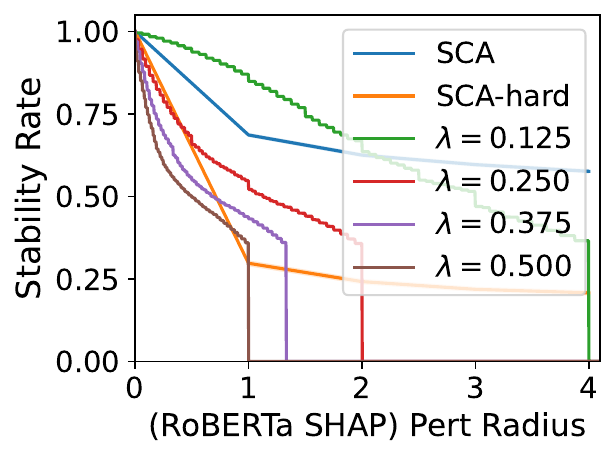}
\end{minipage}

\begin{minipage}{0.24\textwidth}
    \centering
    \includegraphics[width=\linewidth]{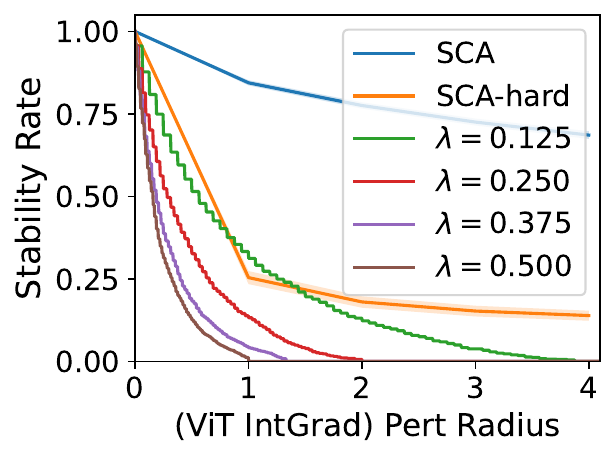}
\end{minipage}
\hfill
\begin{minipage}{0.24\textwidth}
    \centering
    \includegraphics[width=\linewidth]{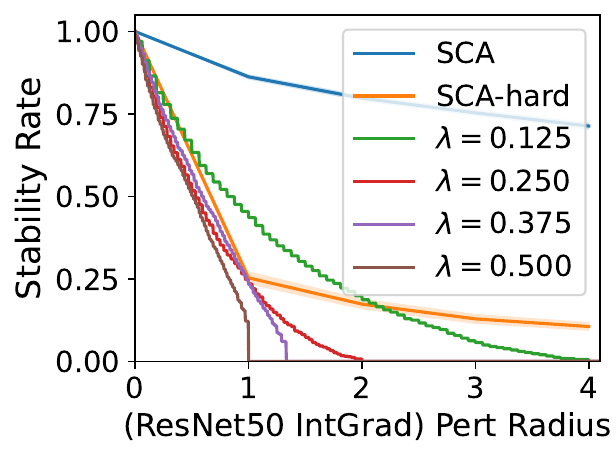}
\end{minipage}
\hfill
\begin{minipage}{0.24\textwidth}
    \centering
    \includegraphics[width=\linewidth]{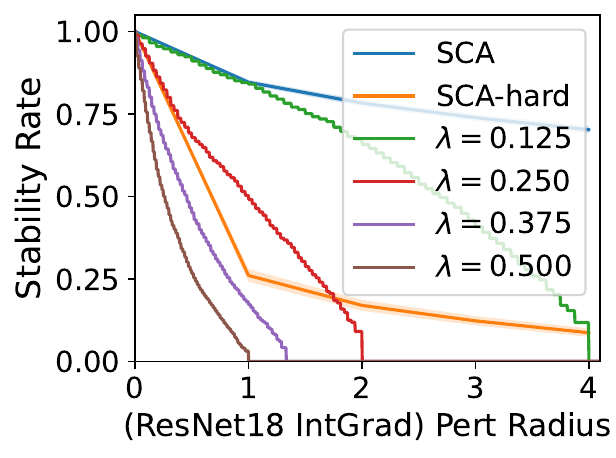}
\end{minipage}
\hfill
\begin{minipage}{0.24\textwidth}
    \centering
    \includegraphics[width=\linewidth]{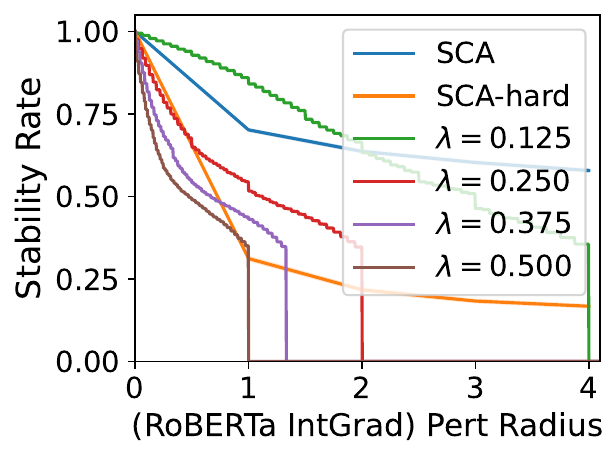}
\end{minipage}

\begin{minipage}{0.24\textwidth}
    \centering
    \includegraphics[width=\linewidth]{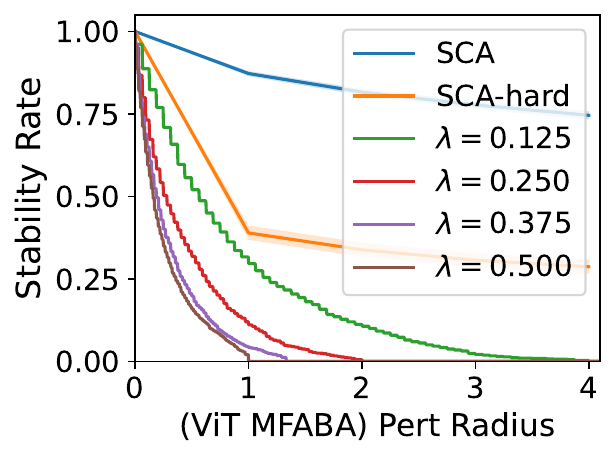}
\end{minipage}
\hfill
\begin{minipage}{0.24\textwidth}
    \centering
    \includegraphics[width=\linewidth]{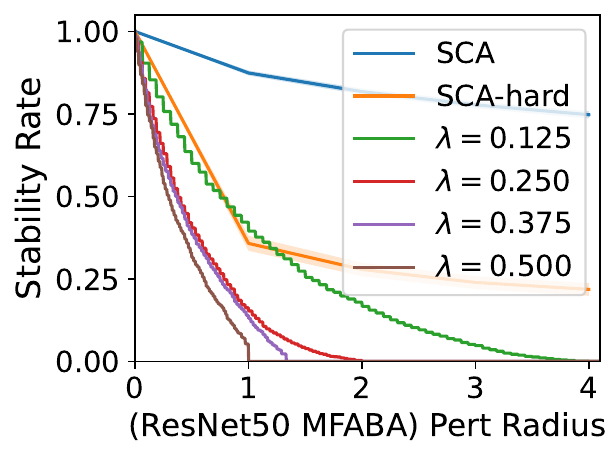}
\end{minipage}
\hfill
\begin{minipage}{0.24\textwidth}
    \centering
    \includegraphics[width=\linewidth]{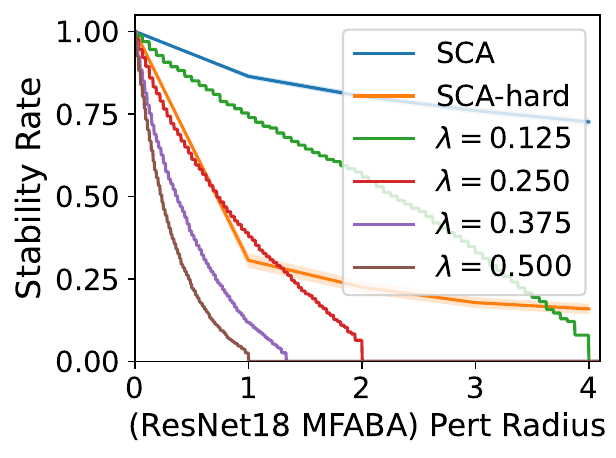}
\end{minipage}
\hfill
\begin{minipage}{0.24\textwidth}
    \centering
    \includegraphics[width=\linewidth]{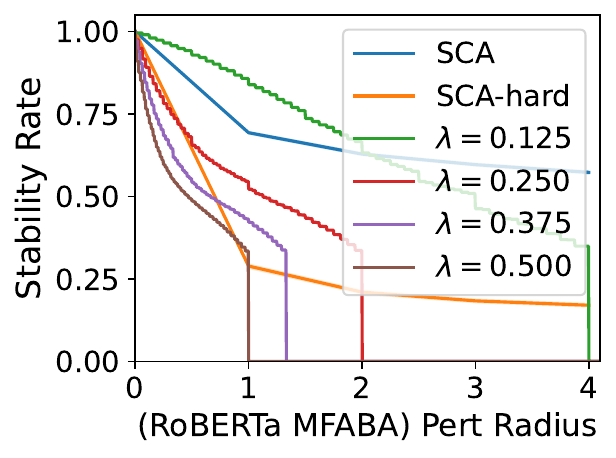}
\end{minipage}

\begin{minipage}{0.24\textwidth}
    \centering
    \includegraphics[width=\linewidth]{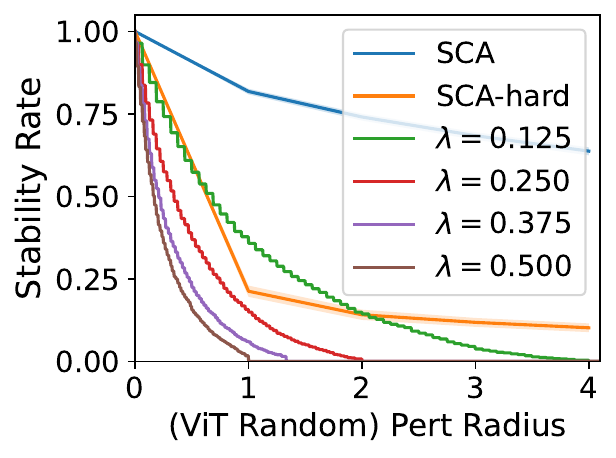}
\end{minipage}
\hfill
\begin{minipage}{0.24\textwidth}
    \centering
    \includegraphics[width=\linewidth]{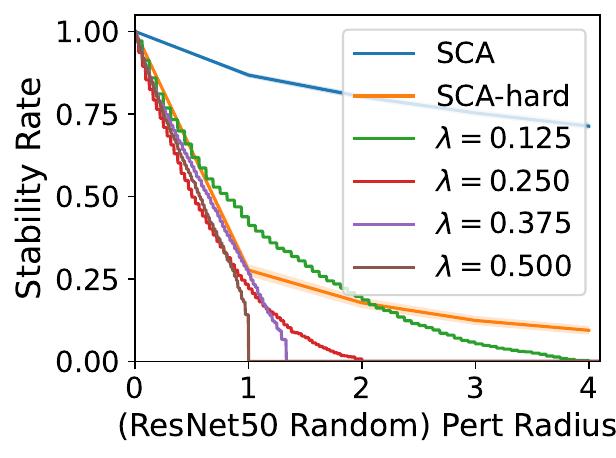}
\end{minipage}
\hfill
\begin{minipage}{0.24\textwidth}
    \centering
    \includegraphics[width=\linewidth]{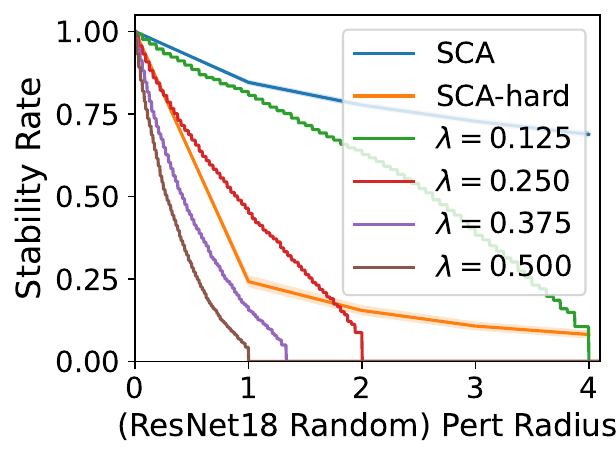}
\end{minipage}
\hfill
\begin{minipage}{0.24\textwidth}
    \centering
    \includegraphics[width=\linewidth]{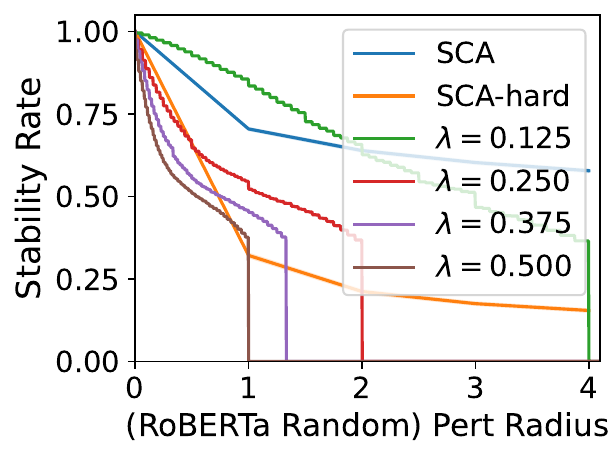}
\end{minipage}

\caption{
\textbf{SCA certifies more than MuS.}
An extended version of~\cref{fig:soft_vs_hard_rates_main_paper}.
SCA-based stability guarantees are typically much stronger than those from MuS.
}
\label{fig:soft_vs_hard_rates_appendix}
\end{figure*}

\subsection{SCA vs. MuS on Different Explanation Methods}
\label{sec:additional_experiments_soft_vs_hard}
We show in~\cref{fig:soft_vs_hard_rates_appendix} an extension of~\cref{fig:soft_vs_hard_rates_main_paper}, where we include all explanation methods.
Similar to the main paper, we observe that SCA typically obtains stronger stability certificates than MuS, especially on vision models.
On RoBERTa, MuS certificates can be competitive for small radii, but this requires a very smooth classifier (\(\lambda = 0.125\)).

\begin{figure*}[t]
\centering

\begin{minipage}{0.24\textwidth}
    \centering
    \includegraphics[width=\linewidth]{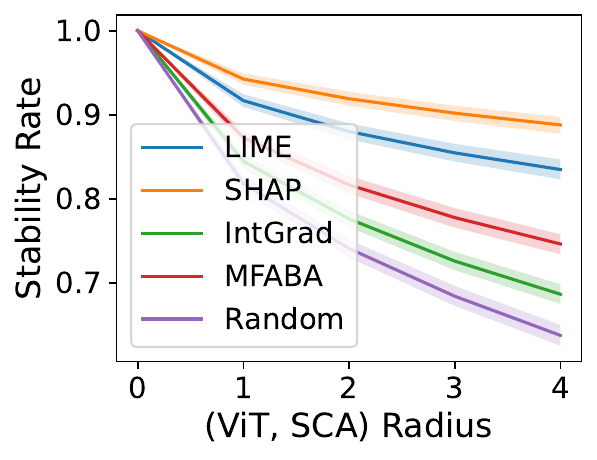}
\end{minipage}\hfill
\begin{minipage}{0.24\textwidth}
    \centering
    \includegraphics[width=\linewidth]{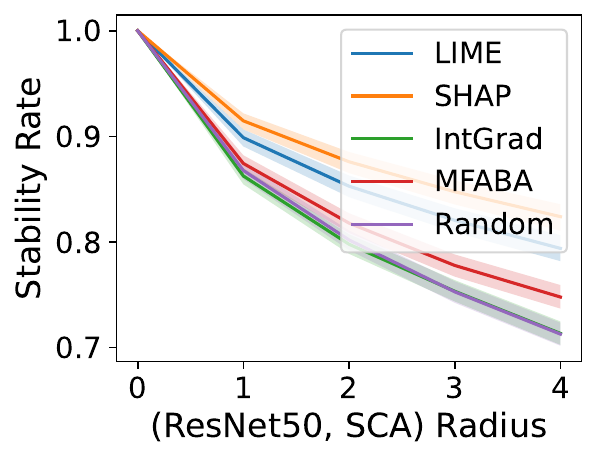}
\end{minipage}\hfill
\begin{minipage}{0.24\textwidth}
    \centering
    \includegraphics[width=\linewidth]{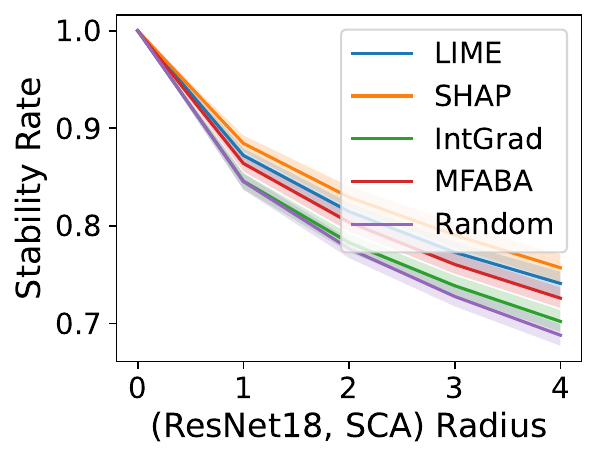}
\end{minipage}\hfill
\begin{minipage}{0.24\textwidth}
    \centering
    \includegraphics[width=\linewidth]{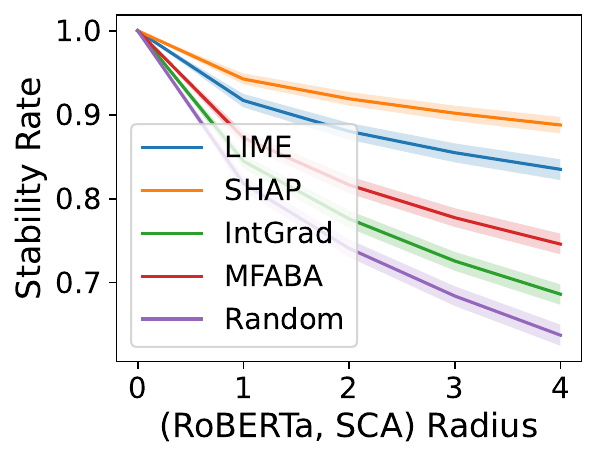}
\end{minipage}


\begin{minipage}{0.24\textwidth}
    \centering
    \includegraphics[width=\linewidth]{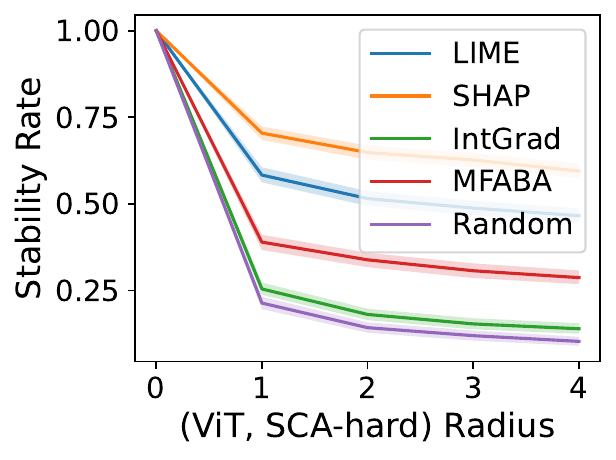}
\end{minipage}\hfill
\begin{minipage}{0.24\textwidth}
    \centering
    \includegraphics[width=\linewidth]{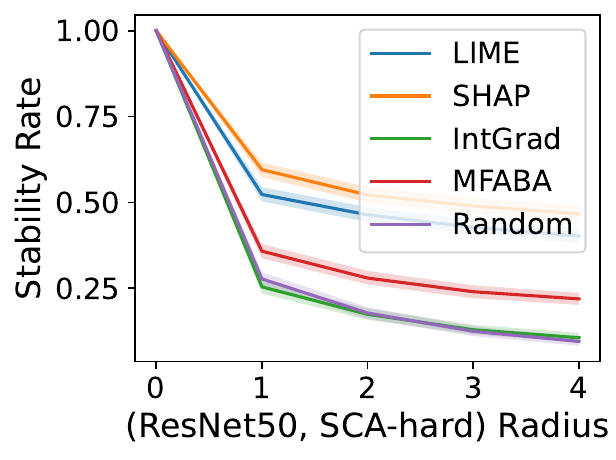}
\end{minipage}\hfill
\begin{minipage}{0.24\textwidth}
    \centering
    \includegraphics[width=\linewidth]{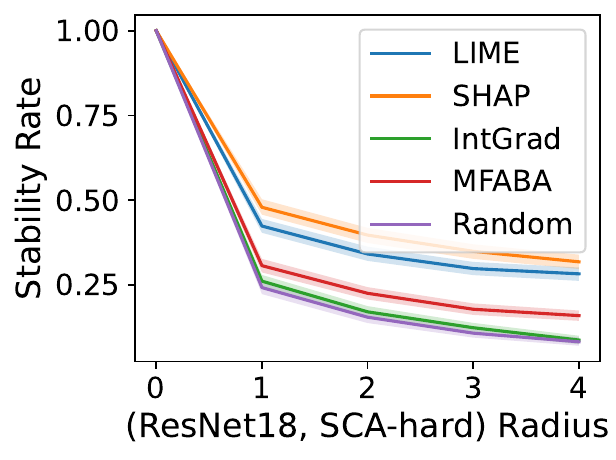}
\end{minipage}\hfill
\begin{minipage}{0.24\textwidth}
    \centering
    \includegraphics[width=\linewidth]{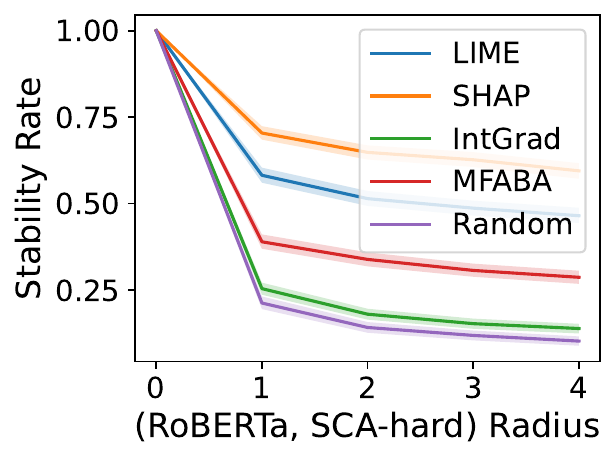}
\end{minipage}


\begin{minipage}{0.24\textwidth}
    \centering
    \includegraphics[width=\linewidth]{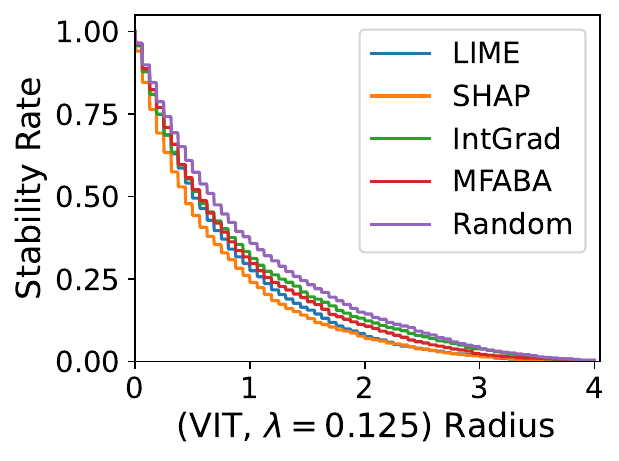}
\end{minipage}\hfill
\begin{minipage}{0.24\textwidth}
    \centering
    \includegraphics[width=\linewidth]{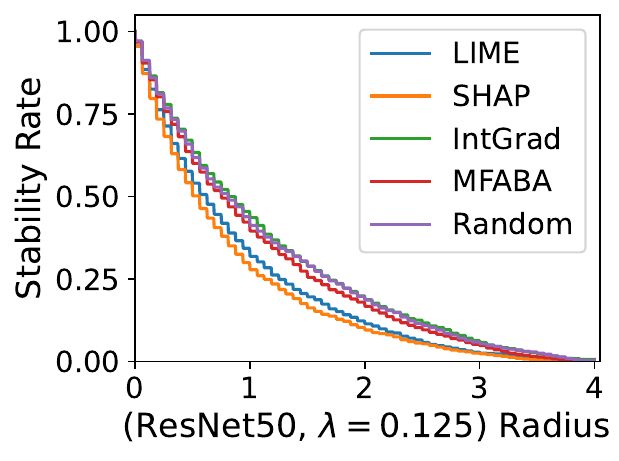}
\end{minipage}\hfill
\begin{minipage}{0.24\textwidth}
    \centering
    \includegraphics[width=\linewidth]{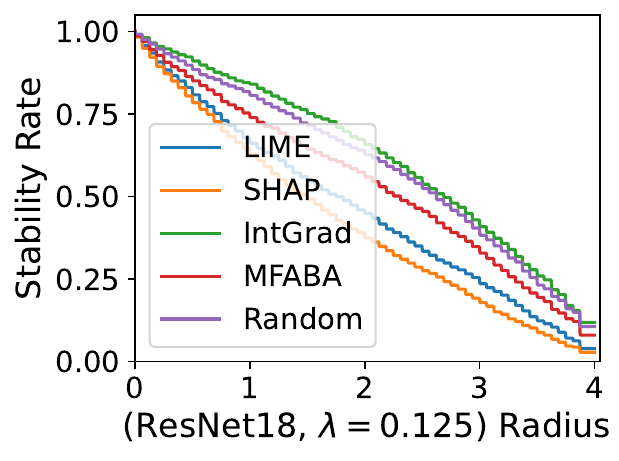}
\end{minipage}\hfill
\begin{minipage}{0.24\textwidth}
    \centering
    \includegraphics[width=\linewidth]{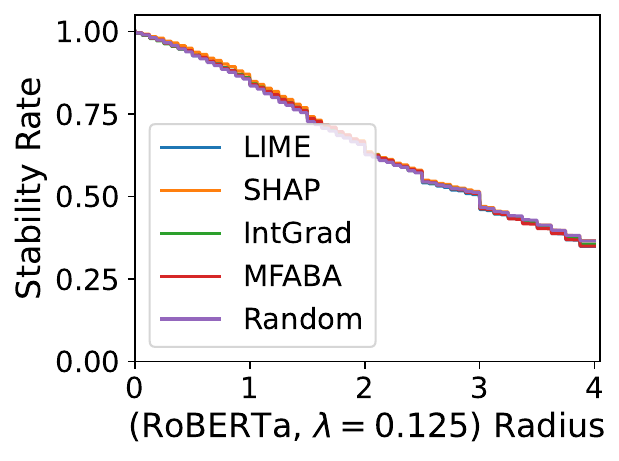}
\end{minipage}


\begin{minipage}{0.24\textwidth}
    \centering
    \includegraphics[width=\linewidth]{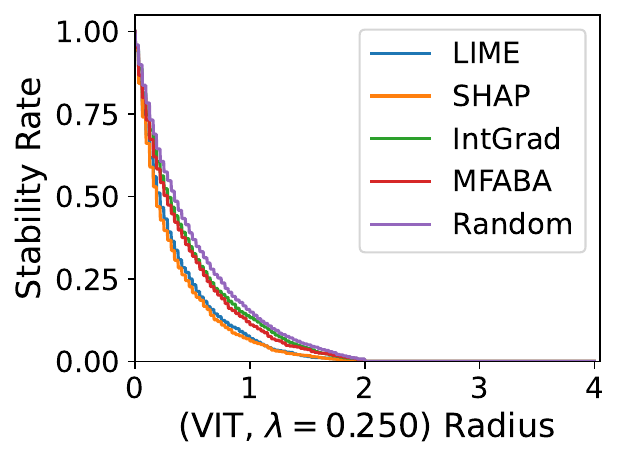}
\end{minipage}\hfill
\begin{minipage}{0.24\textwidth}
    \centering
    \includegraphics[width=\linewidth]{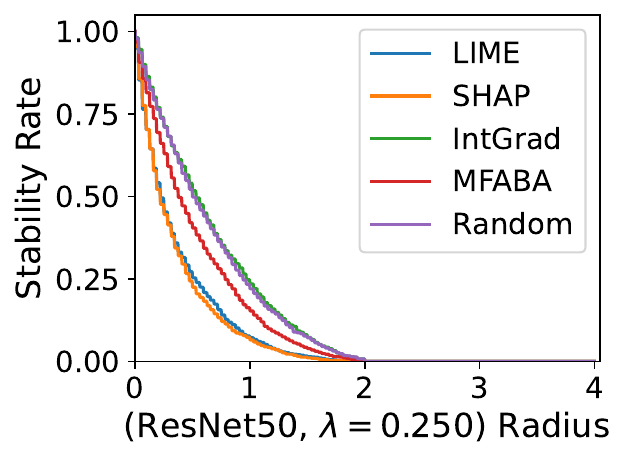}
\end{minipage}\hfill
\begin{minipage}{0.24\textwidth}
    \centering
    \includegraphics[width=\linewidth]{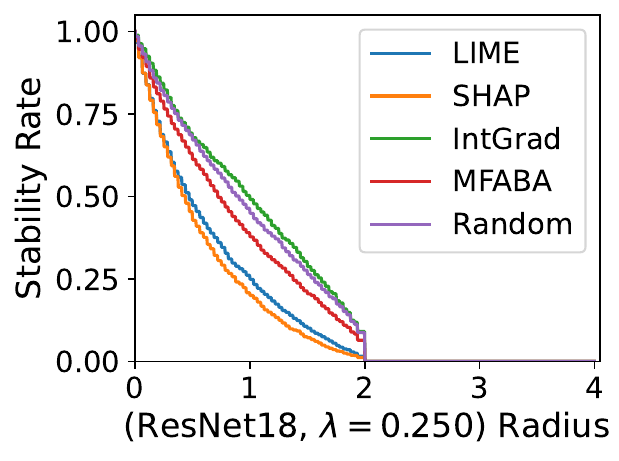}
\end{minipage}\hfill
\begin{minipage}{0.24\textwidth}
    \centering
    \includegraphics[width=\linewidth]{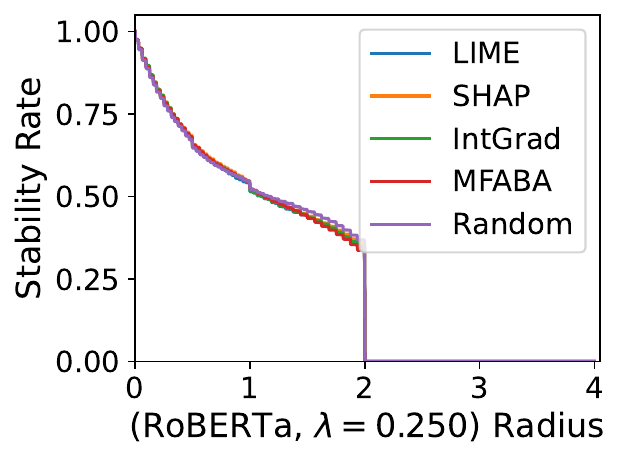}
\end{minipage}

\begin{minipage}{0.24\textwidth}
    \centering
    \includegraphics[width=\linewidth]{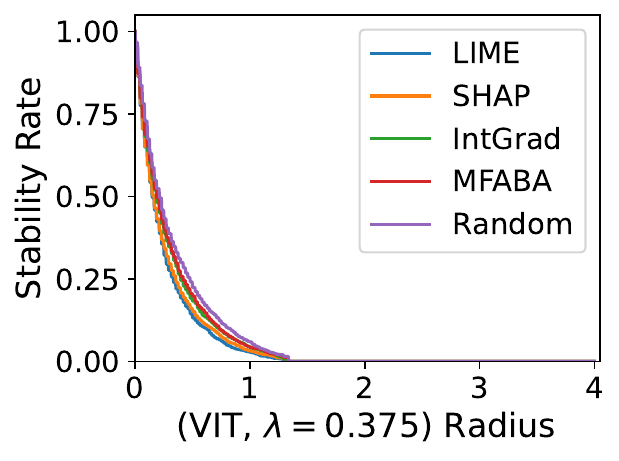}
\end{minipage}\hfill
\begin{minipage}{0.24\textwidth}
    \centering
    \includegraphics[width=\linewidth]{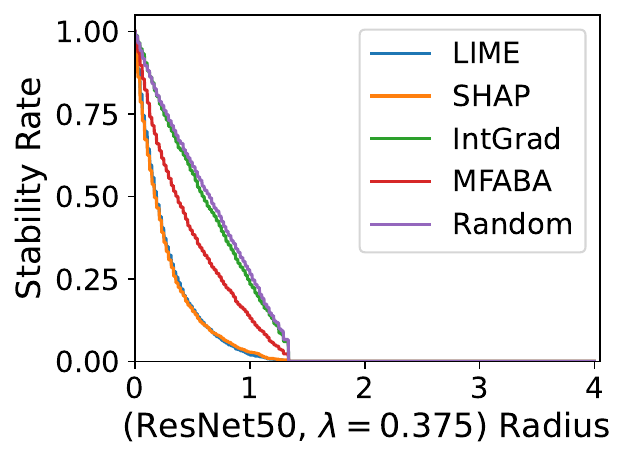}
\end{minipage}\hfill
\begin{minipage}{0.24\textwidth}
    \centering
    \includegraphics[width=\linewidth]{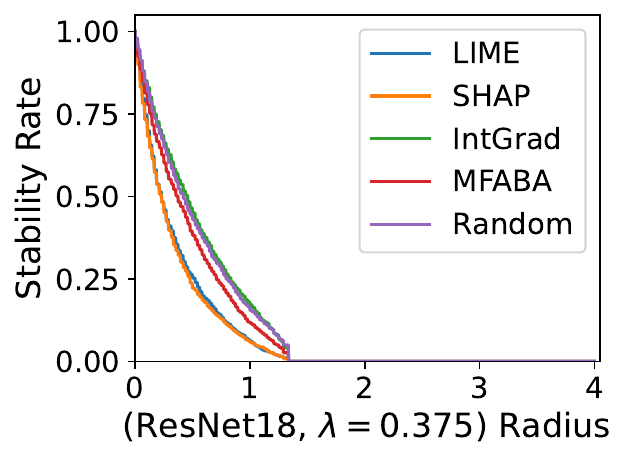}
\end{minipage}\hfill
\begin{minipage}{0.24\textwidth}
    \centering
    \includegraphics[width=\linewidth]{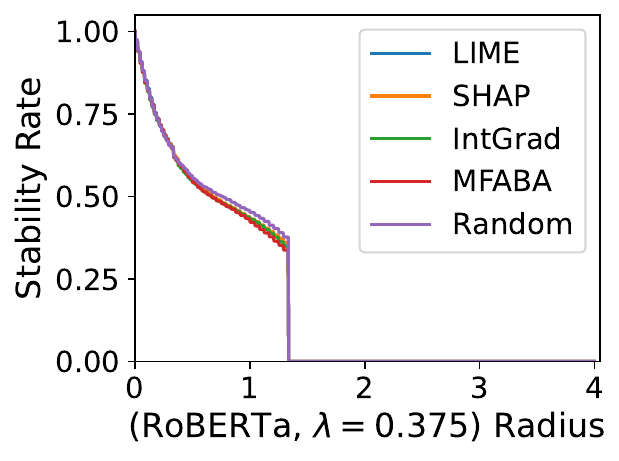}
\end{minipage}


\begin{minipage}{0.24\textwidth}
    \centering
    \includegraphics[width=\linewidth]{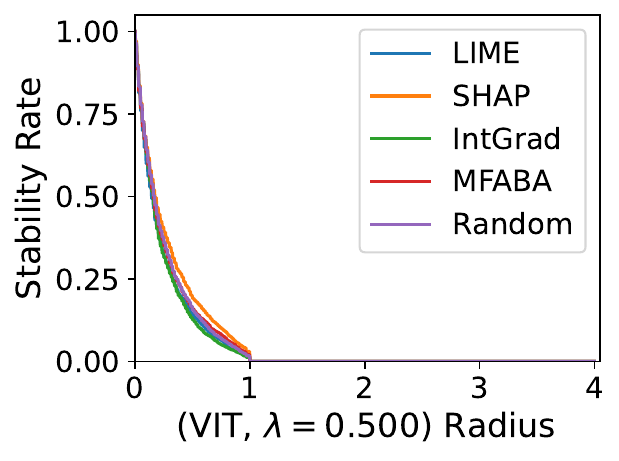}
\end{minipage}\hfill
\begin{minipage}{0.24\textwidth}
    \centering
    \includegraphics[width=\linewidth]{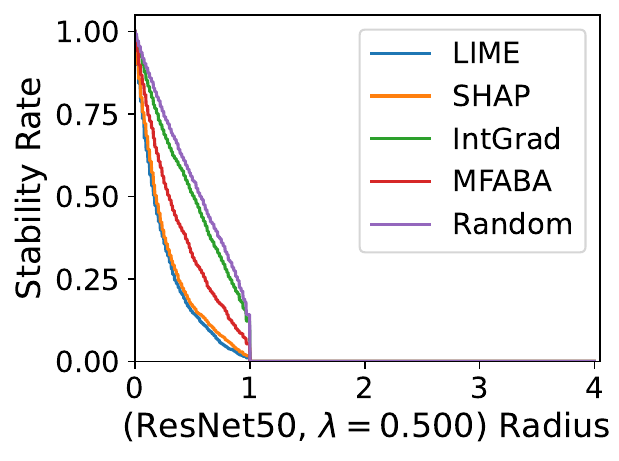}
\end{minipage}\hfill
\begin{minipage}{0.24\textwidth}
    \centering
    \includegraphics[width=\linewidth]{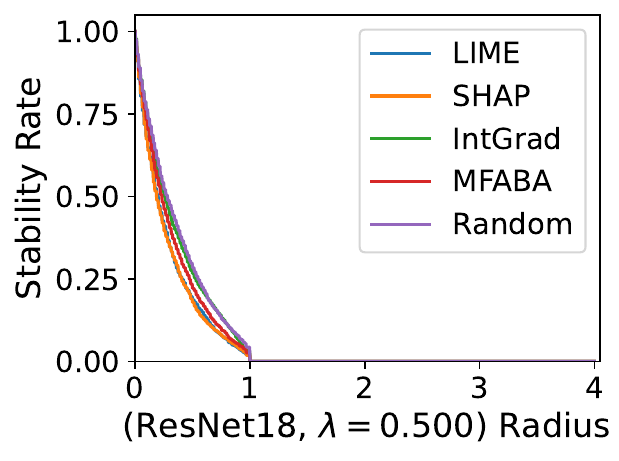}
\end{minipage}\hfill
\begin{minipage}{0.24\textwidth}
    \centering
    \includegraphics[width=\linewidth]{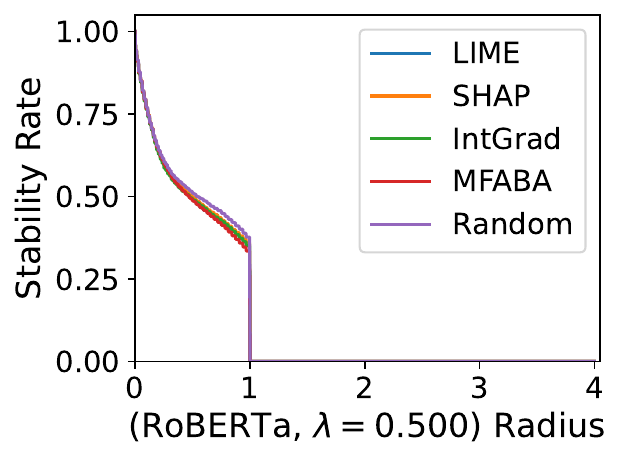}
\end{minipage}

\caption{
\textbf{MuS-based hard stability struggles to distinguish explanation methods.}
SCA-based stability certificates (top two rows) show that LIME and SHAP tend to be the most stable.
}
\label{fig:hard_all_by_lambda}
\end{figure*}

\subsection{MuS-based Hard Stability Certificates}
\label{sec:additional_experiments_mus_hard_certificates}
We show in~\cref{fig:hard_all_by_lambda} that MuS-based certificates struggle to distinguish between explanation methods.
This is in contrast to SCA-based certificates, which show that LIME and SHAP tend to be more stable.
The plots shown here contain the same information as previously presented in~\cref{fig:soft_vs_hard_rates_appendix}, except that we group the data by model and certification method.





\subsection{Ablation on Top-k Feature Selection}
\label{sec:additional_experiments_topk_ablation}

\begin{figure}[t]

\begin{minipage}{0.24\linewidth}
    \centering
    \includegraphics[width=1.0\linewidth]{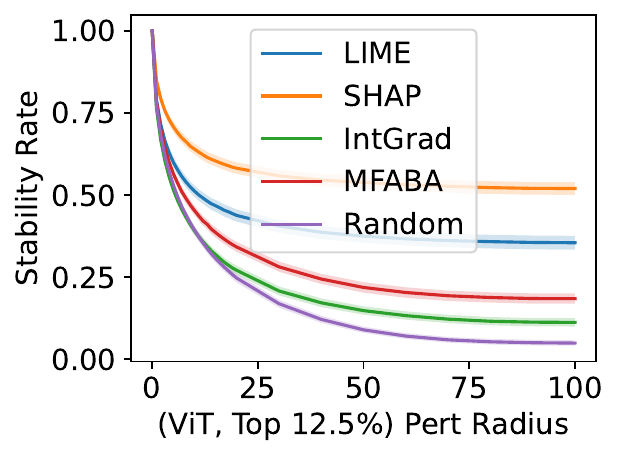}
\end{minipage} \hfill
\begin{minipage}{0.24\linewidth}
    \centering
    \includegraphics[width=1.0\linewidth]{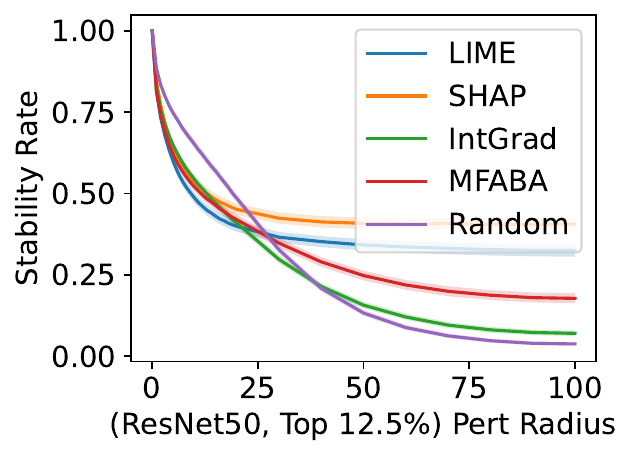}
\end{minipage} \hfill
\begin{minipage}{0.24\linewidth}
    \centering
    \includegraphics[width=1.0\linewidth]{images/soft_stability_methods_resnet50_top_0125.pdf}
\end{minipage} \hfill
\begin{minipage}{0.24\linewidth}
    \centering
    \includegraphics[width=1.0\linewidth]{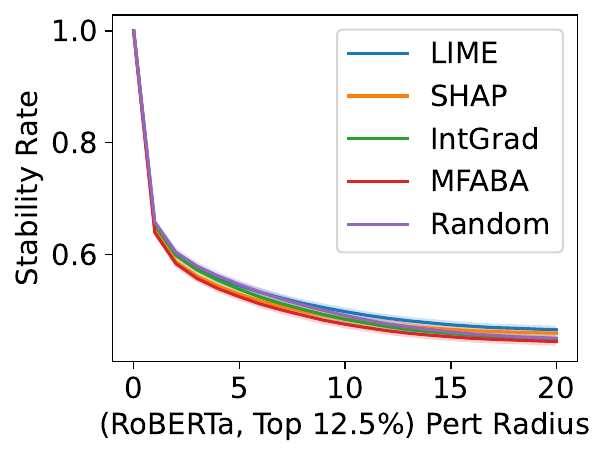}
\end{minipage}

\begin{minipage}{0.24\linewidth}
    \centering
    \includegraphics[width=1.0\linewidth]{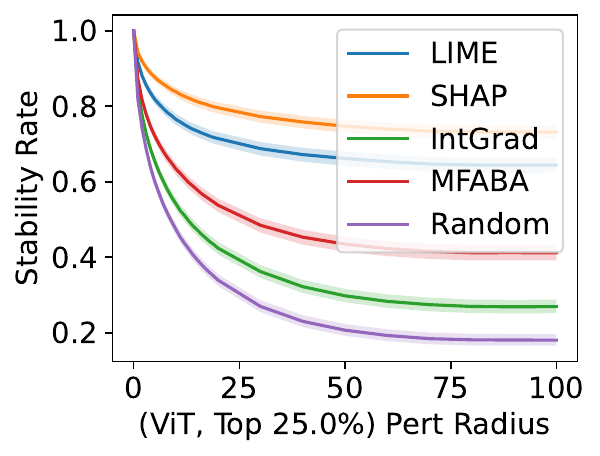}
\end{minipage} \hfill
\begin{minipage}{0.24\linewidth}
    \centering
    \includegraphics[width=1.0\linewidth]{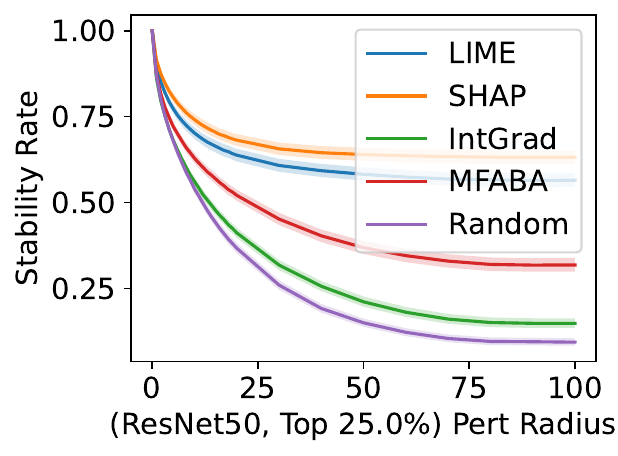}
\end{minipage} \hfill
\begin{minipage}{0.24\linewidth}
    \centering
    \includegraphics[width=1.0\linewidth]{images/soft_stability_methods_resnet50_top_0250.pdf}
\end{minipage} \hfill
\begin{minipage}{0.24\linewidth}
    \centering
    \includegraphics[width=1.0\linewidth]{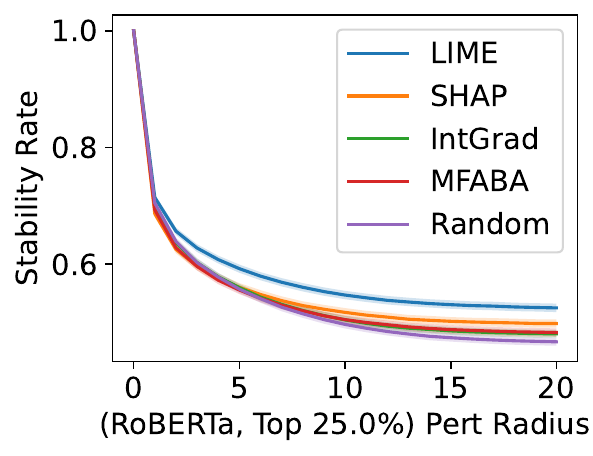}
\end{minipage}

\begin{minipage}{0.24\linewidth}
    \centering
    \includegraphics[width=1.0\linewidth]{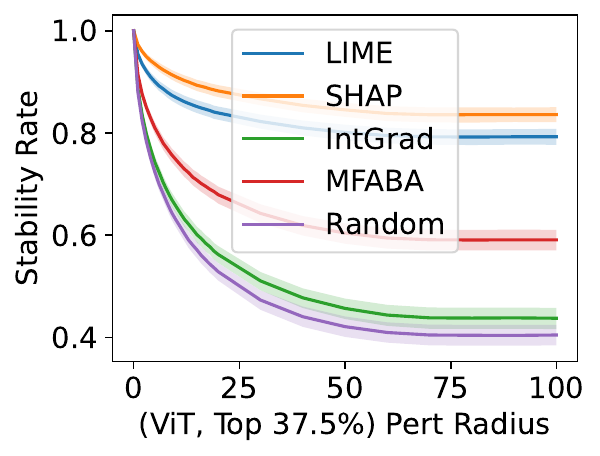}
\end{minipage} \hfill
\begin{minipage}{0.24\linewidth}
    \centering
    \includegraphics[width=1.0\linewidth]{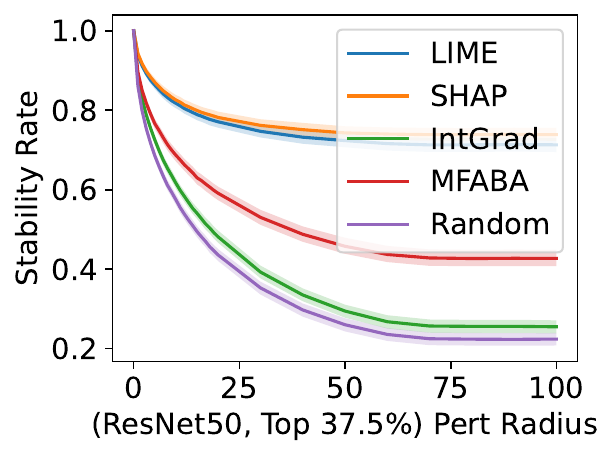}
\end{minipage} \hfill
\begin{minipage}{0.24\linewidth}
    \centering
    \includegraphics[width=1.0\linewidth]{images/soft_stability_methods_resnet50_top_0375.pdf}
\end{minipage} \hfill
\begin{minipage}{0.24\linewidth}
    \centering
    \includegraphics[width=1.0\linewidth]{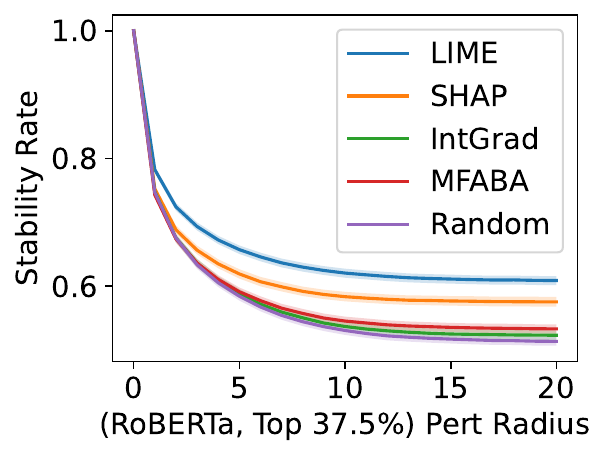}
\end{minipage}

\begin{minipage}{0.24\linewidth}
    \centering
    \includegraphics[width=1.0\linewidth]{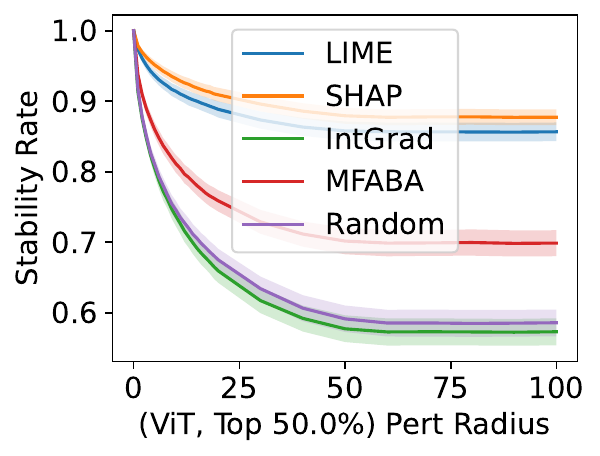}
\end{minipage} \hfill
\begin{minipage}{0.24\linewidth}
    \centering
    \includegraphics[width=1.0\linewidth]{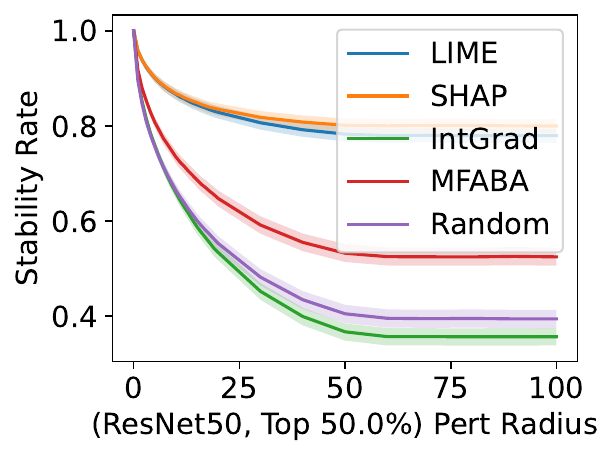}
\end{minipage} \hfill
\begin{minipage}{0.24\linewidth}
    \centering
    \includegraphics[width=1.0\linewidth]{images/soft_stability_methods_resnet50_top_0500.pdf}
\end{minipage} \hfill
\begin{minipage}{0.24\linewidth}
    \centering
    \includegraphics[width=1.0\linewidth]{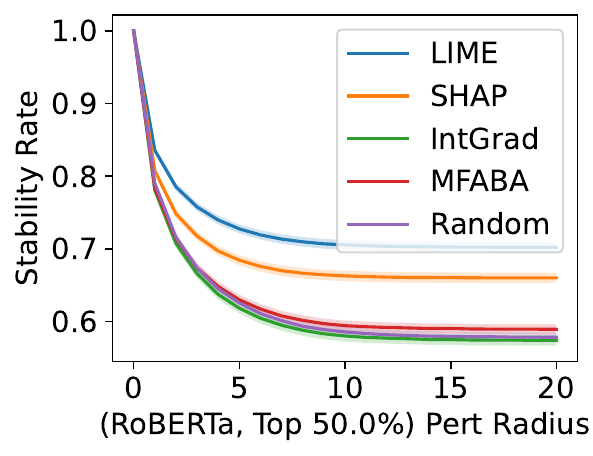}
\end{minipage}


\caption{
\textbf{Soft stability rates on different top-\(k\) selection.}
SHAP tends to be the most stable method, particularly for vision models.
On the other hand, Integrated Gradients and the random baseline are usually the least stable.
Note that the top-\(25\%\) row of plots is the same one as shown earlier in~\cref{fig:soft_rates_main_paper}.
}
\label{fig:soft_rates_topk_ablation}
\end{figure}

To see the stability of explanation methods across different selections of top-\(k\), we show an ablation study in~\cref{fig:soft_rates_topk_ablation}.
Notably, we observe that SHAP is generally the most stable, whereas Integrated Gradients and the random baseline tend to be the least stable.

\begin{figure}[t]
\centering

\begin{minipage}{0.24\linewidth}
    \centering
    \includegraphics[width=1.0\linewidth]{images/stability_vs_smoothing_vit_small.pdf}
\end{minipage} \hfill
\begin{minipage}{0.24\linewidth}
    \centering
    \includegraphics[width=1.0\linewidth]{images/stability_vs_smoothing_resnet50_small.pdf}
\end{minipage} \hfill
\begin{minipage}{0.24\linewidth}
    \centering
    \includegraphics[width=1.0\linewidth]{images/stability_vs_smoothing_resnet18_small.pdf}
\end{minipage} \hfill
\begin{minipage}{0.24\linewidth}
    \centering
    \includegraphics[width=1.0\linewidth]{images/stability_vs_smoothing_roberta_small.pdf}
\end{minipage}

\begin{minipage}{0.24\linewidth}
    \centering
    \includegraphics[width=1.0\linewidth]{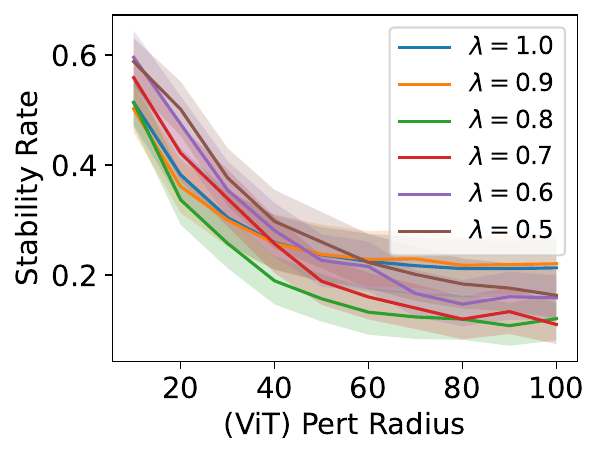}
\end{minipage} \hfill
\begin{minipage}{0.24\linewidth}
    \centering
    \includegraphics[width=1.0\linewidth]{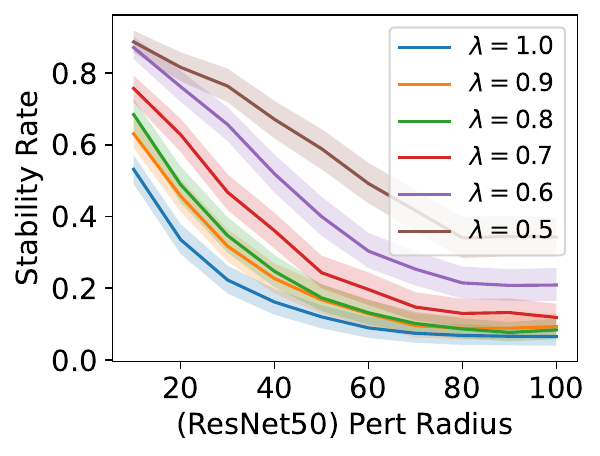}
\end{minipage} \hfill
\begin{minipage}{0.24\linewidth}
    \centering
    \includegraphics[width=1.0\linewidth]{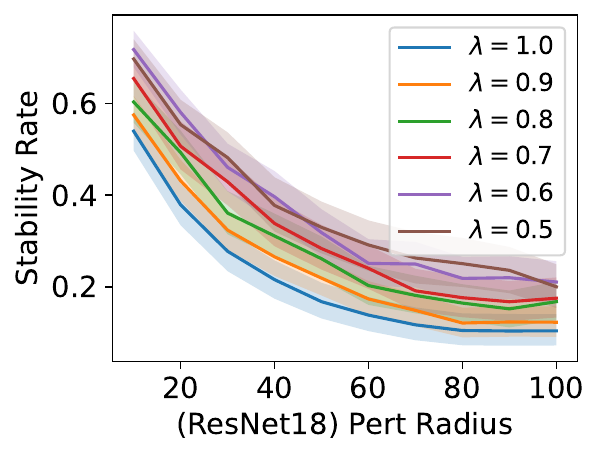}
\end{minipage} \hfill
\begin{minipage}{0.24\linewidth}
    \centering
    \includegraphics[width=1.0\linewidth]{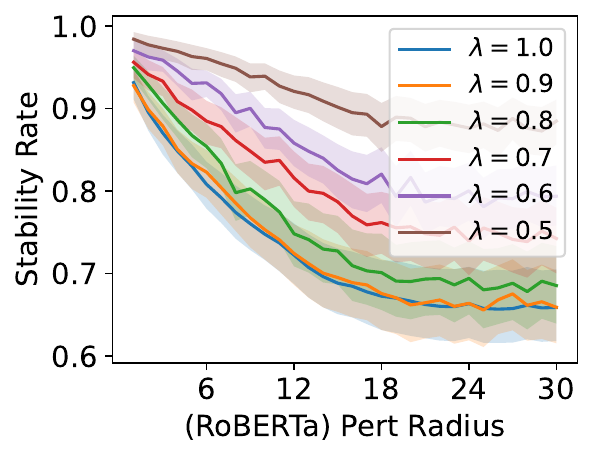}
\end{minipage}

\caption{
\textbf{Mild smoothing (\(\lambda \geq 0.5\)) can improve stability.}
An extended version of~\cref{fig:stability_vs_smoothing_main_paper}.
The improvement is more pronounced at smaller radii (top row) than at larger radii (bottom row).
}
\label{fig:stability_vs_smoothing_appendix}
\end{figure}

\subsection{Stability vs. Smoothing}
\label{sec:additional_experiments_stability_vs_smoothing}
We show in~\cref{fig:stability_vs_smoothing_appendix} an extension of~\cref{fig:stability_vs_smoothing_main_paper}, where we plot perturbations at larger radii.
While stability trends extend to larger radii, the effect is most pronounced at smaller radii.
Nevertheless, even mild smoothing yields benefits at radii beyond what MuS can reasonably certify without significantly degrading accuracy.


\begin{table}[t]
\centering
\renewcommand{\arraystretch}{1.2} 
\begin{tabular}{l c c c c}
\hline\hline
& \textbf{Baseline} & \multicolumn{3}{c}{\textbf{Effective Time per Pass (ms) with Batching}} \\
\cline{3-5}
\textbf{Model} & \textbf{Time (ms)} & \textbf{Batch Size 5} & \textbf{Batch Size 10} & \textbf{Batch Size 15} \\
\hline
ViT & $3.94 \pm 0.17$ & $1.60 \pm 0.06$ (2.46$\times$) & $1.44 \pm 0.05$ (2.74$\times$) & $1.46 \pm 0.06$ (2.71$\times$) \\
ResNet50 & $3.82 \pm 0.10$ & $0.84 \pm 0.07$ (4.52$\times$) & $0.48 \pm 0.08$ (7.99$\times$) & $0.40 \pm 0.01$ (9.57$\times$) \\
ResNet18 & $1.62 \pm 0.12$ & $0.39 \pm 0.04$ (4.10$\times$) & $0.24 \pm 0.01$ (6.72$\times$) & $0.19 \pm 0.01$ (8.50$\times$) \\
RoBERTa & $4.81 \pm 0.14$ & $3.84 \pm 0.10$ (1.25$\times$) & $3.66 \pm 0.09$ (1.31$\times$) & $3.77 \pm 0.13$ (1.28$\times$) \\
\hline\hline
\end{tabular}
\caption{\textbf{Batching significantly reduces the effective time per forward pass.}
We compare the baseline single-pass time against the effective per-pass time achieved during stability certification (which requires \(N = 150\) passes for \(\varepsilon = \delta = 0.1\)). The speedup factor relative to the baseline is shown in parentheses.
}
\label{tab:wall_clock_times}
\end{table}

\subsection{Computational Efficiency of Certification}
\label{sec:additional_experiments_wall_clock}

Certifying soft stability requires \(\tfrac{\log(2 /\delta)}{2 \varepsilon^2}\) forward passes of the model.
However, the exact wall-clock time depends on system and implementation-specific details.
In particular, batched evaluation of the samples can speed up the individual per-sample forward pass, as we show in~\cref{tab:wall_clock_times} using different batch sizes.
We report statistics for each model averaged over 100 samples from its respective dataset.

\begin{figure}[t]
\centering

\begin{minipage}{0.35\linewidth}
    \centering
    \includegraphics[width=1.0\linewidth]{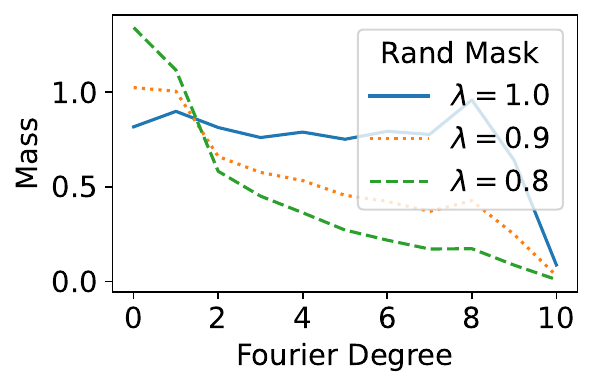}
\end{minipage}
\quad
\begin{minipage}{0.35\linewidth}
    \centering
    \includegraphics[width=1.0\linewidth]{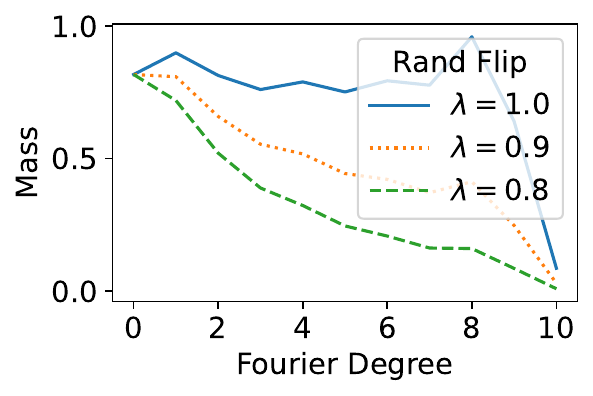}
\end{minipage}

\caption{
\textbf{Random masking and flipping are fundamentally different.}
On the standard Fourier spectrum, random masking (left) causes a down-shift in spectral mass, where note that the orange and green curves are higher than the blue curve at lower degrees.
In contrast, the more commonly studied random flipping (right) causes a point-wise contraction: the curve with smaller \(\lambda\) is always lower.
}
\label{fig:random_masking_vs_random_flipping}
\end{figure}

\subsection{Random Masking vs. Random Flipping}
\label{sec:additional_experiments_random_masking_vs_random_flipping}
We next study how the Fourier spectrum is affected by random masking and random flipping (i.e., the noise operator), which are respectively defined for Boolean functions as follows:
\begin{align}
    M_\lambda h (\alpha) &= \expval{z \sim \msf{Bern}(\lambda)^n} \bracks{h(\alpha \odot z)}
        \tag{random masking} \\
    T_\lambda h (\alpha) &= \expval{z \sim \msf{Bern}(q)^n} \bracks{h((\alpha + z) \,\text{mod}\, 2)},
    \quad q = \frac{1 - \lambda}{2}
        \tag{random flipping}
\end{align}
In both cases, \(\lambda \approx 1\) corresponds to mild smoothing, whereas \(\lambda \approx 0\) corresponds to heavy smoothing.
To study the difference between random masking and random flipping, we randomly generated a spectrum via \(\widehat{h}(S) \sim N(0, 1)\) for each \(S \subseteq [n]\).
We then average the mass of the randomly masked and randomly flipped spectrum at each degree, which are respectively: 
\begin{align}
    \text{Average mass at degree \(k\) from random masking } &= \sum_{S: \abs{S} = k} \abs{\widehat{M_\lambda h}(S)} \\
    \text{Average mass at degree \(k\) from random flipping } &= \sum_{S : \abs{S} = k} \abs{\widehat{T_\lambda h}(S)}
\end{align}
We plot the results in~\cref{fig:random_masking_vs_random_flipping}, which qualitatively demonstrates the effects of random masking and random flipping on the standard Fourier basis.

\section{Additional Discussion}
\label{sec:additional_discussion}


\paragraph{Alternative Formulations of Stability}
There are other ways to reasonably define stability.
For example, one might define \(\tau_{=k}\) as the probability that the prediction remains unchanged under an exactly \(k\)-sized additive perturbation.
A conservative variant could then take the minimum over \(\tau_{=1}, \ldots, \tau_{=r}\).
The choice of formulation affects the implementation of the certification algorithm.

\paragraph{SCA vs. MuS}
While MuS offers deterministic (hard) guarantees, it is conservative and limited to small certified radii, making it less practical for distinguishing between feature attribution methods.
In contrast, SCA uses statistical methods to yield high-confidence probabilistic (soft) guarantees on the stability rate.
More broadly, probabilistic guarantees are relevant for modern, large-scale systems as they are often more flexible and efficient than their deterministic counterparts.
They have seen use in medical imaging~\citep{fayyad2024empirical}, drug discovery~\citep{arvidsson2024cpsign}, autonomous driving~\citep{lindemann2023safe}, and anomaly detection~\citep{li2022pac}, often through conformal prediction~\citep{angelopoulos2023conformal,carvalho2019machine,atanasova2024diagnostic}.



\paragraph{Limitations}
While soft stability provides a more fine-grained and model-agnostic robustness measure than hard stability, it remains sensitive to the choice of attribution thresholding and masking strategy.
While standard, we only focus on square patches and top-25\% selection.
Additionally, our certificates are statistical rather than robustly adversarial, which may be insufficient in some high-stakes settings.

\paragraph{Broader Impact}
Our work is useful for developing robust explanations for machine learning models.
This would benefit practitioners who wish to gain a deeper understanding of model predictions.
While our work may have negative impacts, it is not immediately apparent to us what they might be.



\end{document}